\documentclass[lettersize,journal]{IEEEtran}
\ifCLASSINFOpdf
\else
\fi

\usepackage[caption=false,font=footnotesize]{subfig}
\usepackage{amsthm}
\usepackage{amsmath}
\usepackage{amssymb}
\usepackage{empheq}
\usepackage{booktabs}
\usepackage{multirow}
\usepackage{makecell}
\usepackage{cite}
\usepackage{color}
\usepackage{algorithm}
\usepackage{algorithmic}
\usepackage{graphicx}
\usepackage{threeparttable}
\usepackage{pifont}

\newcommand{\col}{\operatorname{col}}

\newtheorem{theorem}{\indent Theorem}
\newtheorem{lemma}{\indent Lemma}
\newtheorem{corollary}{\indent Corollary}
\newtheorem{definition}{Definition}
\newtheorem{assumption}{\indent Assumption}

\theoremstyle{definition}
\newtheorem{example}{\indent Example}

\newfloat{mechanism}{tbp}{lom}
\floatname{mechanism}{Mechanism}

\begin{document}
\title{Gradient Manipulation in Distributed Stochastic Gradient Descent with Strategic Agents: Truthful Incentives with Convergence Guarantees}

\author{Ziqin Chen and Yongqiang Wang, \textit{Senior Member, IEEE}
\thanks{This work was supported by the National Science Foundation under Grant CCF-2106293, Grant CCF-2215088, Grant CNS-2219487, Grant CCF-2334449, and Grant CNS-2422312. (Corresponding author: Yongqiang Wang, email:yongqiw@clemson.edu).}
\thanks{Ziqin Chen and Yongqiang Wang are with the Department of Electrical and Computer Engineering, Clemson University, Clemson, SC 29634 USA.}}

\maketitle
\begin{abstract}
Distributed learning has gained significant attention due to its advantages in scalability, privacy, and fault tolerance. In this paradigm, multiple agents collaboratively train a global model by exchanging parameters only with their neighbors. However, a key vulnerability of existing distributed learning approaches is their implicit assumption that all agents behave honestly during gradient updates. In real-world scenarios, this assumption often breaks down, as selfish or strategic agents may be incentivized to manipulate gradients for personal gain, ultimately compromising the final learning outcome. In this work, we propose a fully distributed payment mechanism that, for the first time, guarantees both truthful behaviors and accurate convergence in distributed stochastic gradient descent. This represents a significant advancement, as it overcomes two major limitations of existing truthfulness mechanisms for collaborative learning: (1) reliance on a centralized server for payment collection, and (2) sacrificing convergence accuracy to guarantee truthfulness. In addition to characterizing the convergence
rate under general convex and strongly convex
conditions, we also prove that our approach guarantees the cumulative gain that an agent can obtain through strategic behavior remains finite, even as the number of iterations approaches infinity---a property unattainable by most existing truthfulness mechanisms. Our experimental results on standard machine learning tasks, evaluated on benchmark datasets, confirm the effectiveness of the proposed approach.
\end{abstract}
\begin{IEEEkeywords}
Truthfulness, gradient manipulation, distributed stochastic gradient descent, strategic behavior.
\end{IEEEkeywords}

\IEEEpeerreviewmaketitle
\section{Introduction}\label{SectionI}
Recent years have witnessed significant advances in distributed methods for collaborative optimization and learning~\cite{DSGD,machinelearning,survey,control}. By distributing both data and computational resources across multiple agents, distributed methods leverage the combined computing power of multiple devices to collaboratively train a global model without the need for a centralized server. Compared with server-assisted collaborative learning\footnote{We use ``server-assisted collaborative learning" to refer to collaborative learning involving a centralized server or aggregator, with federated learning as a representative example.}, distributed learning avoids monopolistic control and single points of failure~\cite{DSGDlack}, and hence, is widely applied in areas such as distributed machine learning~\cite{distributedmachine}, multi-robot coordination~\cite{robot}, and wireless networks~\cite{wireless}.

However, almost all existing distributed learning approaches implicitly assume that all participating agents act truthfully (see, e.g.,~\cite{Ranxin1GT1,Pushi2GT2,pushi1DSGD4,Leijinlong1GT3,yuankun1GT4,yuandeming1zero2,Lukaihong1DSGD5,wenguanghui1ADMM,zhangsheng2025,sunchao1DSGD6,Liushuai1Zero3}), which is essential for their successful execution. This premise becomes untenable in practical scenarios where participating agents are strategic and self-interested. In such cases, participating agents may manipulate gradient updates to maximize their own utilities, ultimately undermining the performance of collaborative learning. For example, in distributed learning with heterogeneous data distributions, an agent may inflate its local gradient updates to skew the final model in favor of its own data distribution~\cite{truthfulmain,truthfulmain2}. Similarly, in a shared market, a firm may inject noise into its shared data  to degrade the quality of other firms' predictive model training, thereby maintaining competitive advantage~\cite{honestmain}. More motivating examples are provided in Section~\ref{SectionIIB}. Such untruthful behaviors pose a significant threat to the performance of existing distributed learning and optimization algorithms (as demonstrated by our experimental results in Fig.~\ref{Fig2}).

To mitigate strategic behaviors of participating agents in collaborative learning, several approaches have been proposed, which can be broadly categorized into incentive-based approaches~\cite{oneshot2,oneshot3,oneshot4,oneshot5,oneshot6,oneshot7,oneshot8,iterative1,IcentiveFL0,IcentiveFL1,IcentiveFL2,VCGauto,iterative2,iterative3,iterative4,VCGdefinition} and joint-differential-privacy (JDP)-based approaches~\cite{JDP1,JDP2,JDP3,JDP4}. However, all existing approaches rely on a centralized server to collect information from all agents and then execute a truthfulness mechanism~\cite{VCGauto}. For example, the Vickrey–Clarke–Groves (VCG) mechanism~\cite{VCG1,VCG2,VCG3}, a well-known incentive-based approach, requires a centralized server to aggregate true gradients/functions from all agents to compute the corresponding monetary transfers. Similarly, JDP-based approaches require a centralized server to collect iteration variables from all agents in order to compute the necessary noise~\cite{JDP1,JDP2,JDP3,JDP4}\footnote{Although the recent work by~\cite{JDPziji} proposes a JDP-based truthfulness approach for distributed aggregative optimization, it is limited to scenarios where each agent's objective function depends on an aggregative term of others' optimization variables. Furthermore, due to differential-privacy noise injection, the approach achieves truthfulness at the cost of compromised optimization accuracy (see Theorem 3 in~\cite{JDPziji} for details).}. To the best of our knowledge, no existing approaches can effectively incentivize truthful behaviors in fully distributed optimization and learning.

\subsection{Related literature} \noindent\textbf{Incentive-based truthfulness approaches.} Truthfulness in statistical (mean) estimation has been addressed using incentive/payment mechanisms~\cite{oneshot3,oneshot6,oneshot8}. These mechanisms are typically one-shot, where agents choose their strategies once and a centralized server broadcasts payments accordingly, which renders them inapplicable to multi-round, gradient-based distributed learning algorithms. Based on the well-known VCG mechanism~\cite{VCG1,VCG2,VCG3}, truthfulness results have also been reported for federated learning~(see, e.g., \cite{truthfulmain,honestmain, truthfulmain2,IcentiveFL1,IcentiveFL2,VCGauto,iterative2}). However, the VCG mechanism relies on a server to calculate and collect monetary payments, which makes it inapplicable in a fully distributed setting. Moreover, VCG-based approaches are not budget-balanced and often involve surplus payments~\cite{iterative1,IcentiveFL0,IcentiveFL1,IcentiveFL2,VCGauto}, which further limits their practicality. It is worth noting that many results have discussed incentive mechanisms for encouraging agents' contributions of data/resources in collaborative learning~\cite{oneshot3,IcentiveFL1,IcentiveFL2}. However, those results do not consider agents' strategic manipulation on iterative updates for personal gains.

\noindent\textbf{JDP-based truthfulness approaches.} JDP-based approaches incentivize truthful behavior by injecting noise into algorithmic outputs, thereby masking the impact of any single agent's misreporting on the final model and promoting truthfulness~\cite{JDP1,JDP2,JDP3,JDP4}. However, these approaches require a centralized server to collect local optimization variables from all agents to determine the needed noise amplitude, which makes it infeasible in a fully distributed setting. Moreover, JDP-based approaches have to compromise convergence accuracy to ensure truthfulness~\cite{JDPziji}, which is undesirable in accuracy-sensitive applications.

\begin{table}[H]
\caption{Comparison of our approach with existing truthfulness results for distributed optimization/learning.}
\label{table1}
\footnotesize
\setlength{\tabcolsep}{4pt}
\begin{threeparttable}
\begin{tabular}{lcccc}
\hline
\multirow{2}{*}{Approach} &
\multicolumn{1}{c}{Fully} &
\multicolumn{1}{c}{$\varepsilon$-Incentive} &
\multicolumn{1}{c}{Budget} &
\multicolumn{1}{c}{Accurate} \\
& \multicolumn{1}{c}{distributed?} & \multicolumn{1}{c}{{compatible?}\tnote{a}}& \multicolumn{1}{c}{{ balanced?}\tnote{b}} &
\multicolumn{1}{c}{convergence?} \\
\hline
\cite{VCGauto} & \ding{55} & \ding{51} & \ding{55} & \ding{51}\\
\cite{honestmain} & \ding{55} & {\ding{55}} & {\ding{51}} & \ding{55} \\
\cite{truthfulmain,truthfulmain2}{\color{blue}} & \ding{55}  & \ding{55} & \ding{51} & \ding{55} \\
\cite{JDPziji} & \ding{51} & \ding{51} & \ding{51} & \ding{55}  \\
Our approach & \ding{51}  & \ding{51}  & \ding{51}  & \ding{51}   \\
\hline
\end{tabular}
\begin{tablenotes}
\item[a] We use ``$\varepsilon$-Incentive compatible" to describe whether an approach can guarantee that the cumulative gain that an agent obtains from persistent strategic behaviors remains finite (bounded by some finite value $\varepsilon$), even as the number of iterations approaches infinity.
\item[b] We use ``Budget balanced" to mean total payments collected equal total payments made, requiring no external subsidies or surplus. This ensures that the mechanism is financially sustainable and scalable in practice.
\end{tablenotes}
\end{threeparttable}
\end{table}

\subsection{Contributions}
{In this article, we propose a distributed payment mechanism that 
guarantees both truthful behavior of agents and accurate convergence in distributed stochastic gradient methods. The main contributions are summarized as follows (Table~\ref{table1} highlights the contributions and their comparison with existing truthfulness results):}

\begin{itemize}
\item We propose a fully distributed payment mechanism that incentivizes truthful behaviors among interacting strategic agents in distributed stochastic gradient methods. This represents a substantial breakthrough, as existing truthfulness mechanisms (in, e.g.,~\cite{truthfulmain,honestmain,truthfulmain2,oneshot2,oneshot3,oneshot4,oneshot5,oneshot6,oneshot7,oneshot8,iterative1,IcentiveFL0,IcentiveFL1,IcentiveFL2,VCGauto,iterative2,iterative3,iterative4,VCGdefinition,JDP1,JDP2,JDP3,JDP4}) all rely on a centralized server to aggregate local information from agents. To the best of our knowledge, this is the {\bf first} payment mechanism for {\bf distributed gradient descent}  without the assistance of any centralized server.
\item Our payment mechanism guarantees that the incentive for a strategic agent to deviate from truthful behaviors diminishes to zero over time (see Lemma~\ref{ML1}). Building on this, we further prove that the cumulative gain that an agent can obtain from its strategic behaviors remains finite, even when the number of iterations tends to infinity (see Theorem~\ref{MT2}). This stands in sharp contrast to existing incentive-based approaches for federated learning in~\cite{truthfulmain,honestmain,truthfulmain2}, which cannot eliminate agents' incentives to behave untruthfully--resulting in a cumulative gain that grows unbounded when the number of iterations tends to infinity.
\item In addition to ensuring diminishing incentives for untruthful behavior in distributed gradient descent, our payment mechanism also guarantees accurate convergence, even in the presence of persistent gradient manipulation by agents (see Theorem~\ref{MT1}). This is in stark contrast to existing JDP-based truthfulness results in~\cite{JDP1,JDP2,JDP3,JDP4,JDPziji} and incentive-based truthfulness results in~\cite{honestmain,truthfulmain,truthfulmain2}, all of which are subject to an optimization error. We analyze the convergence rates of distributed gradient descent under our payment mechanism for general convex and strongly convex objective functions. This is more comprehensive than existing truthfulness results in~\cite{honestmain,truthfulmain,truthfulmain2,VCGauto} that focus solely on the strongly convex case. 
\item Different from most existing VCG-based approaches (in, e.g., \cite{iterative1,IcentiveFL0,IcentiveFL1,IcentiveFL2,VCGauto}), which cannot ensure budget-balance (the total payments from all agents sum to zero, a property essential for the financial sustainability and practical scalability of the mechanism), our payment mechanism is budget-balanced. This is significant in a fully distributed setting since no centralized server is used to manage subsidies or surplus.
\item We evaluate the performance of our truthful mechanism using representative distributed   learning tasks, including image classification on the FeMNIST dataset and next-character prediction on the Shakespeare dataset. The experimental results confirm the effectiveness of our approach.
\end{itemize}

{The organization of the paper is as follows. Sec.~\ref{SectionII} introduces the problem formulation, presents motivating examples, and formalizes the used game-theoretic framework. Sec.~\ref{SectionIII} proposes the distributed payment mechanism. Sec. \ref{SectionIV} analyzes convergence rate and establishes incentive compatibility guarantees. Sec.~\ref{SectionV}
presents experimental results. Finally, Sec.~\ref{SectionVI} concludes the paper.}

\textit{Notations:} We use $\mathbb{R}^{n}$ to denote the set of $n$-dimensional real Euclidean space. We denote $\nabla F(\theta)$ as the gradient of $F(\theta)$ and $\mathbb{E}[\theta]$ as the expected value of a random variable $\theta$. We denote the set of $N$ agents by $[N]$ and the neighboring set of agent $i$ as $\mathcal{N}_{i}$. The cardinality of $\mathcal{N}_{i}$ is denoted as $\deg(i)$. We denote the coupling matrix by $W\!=\!\{w_{ij}\}\!\in\!{\mathbb{R}^{N\times N}}$, where $w_{ij}\!>\!0$ if agent $j$ interacts with agent $i$, and $w_{ij}=0$ otherwise. We define $w_{ii}\!=\!1-\sum_{j\in\mathcal{N}_{i}}w_{ij}$. We abbreviate ``with respect to" as \textit{w.r.t.} Furthermore, we use an overbar to denote the average of all agents, e.g., $\bar{\theta}_{t}=\frac{1}{N}\sum_{i=1}^{N}\theta_{i,t}$, and use bold font with iteration subscripts to denote the stacked
vector of all $N$ agents, e.g., $\boldsymbol{\theta}_{t}=\col(\theta_{1,t},\cdots,\theta_{N,t})$.

\section{Problem Formulation and Preliminaries}\label{SectionII}
\subsection{Distributed optimization and learning}\label{SectionIIA}
We consider $N\geq2$ agents participating in distributed optimization and learning, each possessing a private dataset whose distribution can be heterogeneous across the agents. The goal is for all agents to cooperatively find a solution $\theta^{*}$ to the following stochastic optimization problem:
\begin{equation}
\min_{\theta\in\mathbb{R}^{n}}~F(\theta)=\frac{1}{N}\sum_{i=1}^{N}f_{i}(\theta),\quad f_{i}(\theta)=\mathbb{E}_{\zeta_{i}\sim \mathcal{P}_{i}}[l(\theta;\zeta_{i})],\label{primal}
\end{equation}
where $\theta\in\mathbb{R}^{n}$ denotes a global model parameter and $\zeta_{i}$ is a random data sample of agent $i$ drawn from its local data distribution $\mathcal{P}_{i}$. The loss function $l(\theta;\zeta_{i}):\mathbb{R}^{n}\!\times\!\mathbb{R}^{n}\!\mapsto\!\mathbb{R}$ is assumed to be differentiable in $\theta$ for every $\zeta_{i}$ and the local objective function $f_{i}(\theta)$ of agent $i$ can be nonconvex.

In real-world applications, the data distribution $\mathcal{P}_{i}$ is typically unknown. Hence, each agent $i$ can only access a noisy estimate of the gradient $\nabla f_{i}(\theta_{i,t})$, computed at its current local model parameter $\theta_{i,t}$ using the available local data. For example, 
at each iteration $t$, agent $i$ samples a batch of $B\geq 1$ data points and computes a gradient estimate as $g_{i}(\theta_{i,t})=\frac{1}{B}\sum_{j=1}^{B}\nabla l(\theta_{i,t};\zeta_{ij})$. Using this gradient estimate $g_i(\theta_{i,t})$, along with the model parameters $\{\theta_{j,t}\}_{j\in\mathcal{N}_{i}}$ received from its neighbors, agent $i$ updates its local parameter according to a distributed optimization/learning algorithm.

Existing distributed optimization/learning  algorithms (see, e.g.,~\cite{Ranxin1GT1,Pushi2GT2,pushi1DSGD4,Leijinlong1GT3,yuankun1GT4,yuandeming1zero2,Lukaihong1DSGD5,wenguanghui1ADMM,zhangsheng2025,sunchao1DSGD6,Liushuai1Zero3}) universally assume that participating agents are honest and behave truthfully. However, this assumption may be unrealistic in real-world scenarios, where agents can behave selfishly or strategically. For example, a strategic agent may amplify its gradient estimates to bias the final model parameter in favor of its own data distribution, or inject noise into its gradient information to degrade the performance of other agents' models for competitive advantage. (We provide additional motivating examples to illustrate how agents can benefit from gradient manipulation in distributed least squares and distributed mean estimation in Section~\ref{SectionIIB}.) Such strategic gradient manipulation can significantly degrade the learning performance of existing distributed learning algorithms (as evidenced by our experimental results in Fig.~\ref{Fig2}).

Next, we discuss the classical Distributed Stochastic Gradient Descent (Distributed SGD) in the presence of gradient manipulation by a strategic agent $i\in[N]$.

\noindent\textbf{Distributed SGD in the presence of strategic behavior.} 
At each iteration $t$, each agent $i$ strategically chooses a manipulated gradient $m_{i,t}$, which, in general, is a function of agent $i$'s true gradient $g_{i}(\theta_{i,t})$. Using the manipulated gradient $m_{i,t}$ and the model parameters $\{\theta_{j,t}\}_{j\in\mathcal{N}_i}$ received from its neighbors, each agent $i$ updates its local model parameter according to Algorithm~\ref{algorithm}.

\begin{algorithm}[H]
\caption{Distributed SGD in the presence of strategic behavior (from agent $i$'s perspective)}
\label{algorithm} 
\begin{algorithmic}[1]
\STATE {\bfseries Initialization:} $\theta_{i,0}\in \mathbb{R}^{n}$;
stepsize $\lambda_{t}>0$.
\STATE Send $\theta_{i,0}$ to neighbors $j\in\mathcal{N}_{i}$ and receive $\theta_{j,0}$ from neighbors $j\in\mathcal{N}_{i}$.
\FOR{$t=0,\ldots,T$}
\STATE $\theta_{i,t+1}=\sum_{j\in\mathcal{N}_{i}\cup\{i\}}w_{ij}\theta_{j,t}-\lambda_{t}m_{i,t}$;
\STATE Send $\theta_{i,t+1}$ to neighbors $j\in\mathcal{N}_{i}$ and receive $\theta_{j,t+1}$ from neighbors $j\in\mathcal{N}_{i}$.
\ENDFOR
\end{algorithmic}
\end{algorithm}

It is worth noting that we focus on gradient manipulation rather than model-parameter manipulation for two reasons. First, any manipulation of model parameters shared among agents effectively corresponds to some form of alteration in the gradient estimates, as proven in Corollary~\ref{C1} in Appendix E. Second, gradient manipulation is 
the most direct and practically effective strategy for a strategic agent to increase its personal gain. Specifically, by upscaling its own gradient estimates, an agent can increase the influence of its local data on the cooperative learning process, thereby
pulling the final model parameter closer to the minimizer of its local objective function and reducing its own cost. On the other hand, by injecting noise into its gradient estimates, an agent can reduce the usefulness of its data to its neighbors, degrading their model performance and gaining competitive advantage. In comparison, manipulating model parameters does not provide a clear strategic benefit. For these reasons, we focus on gradient manipulation as the primary form of untruthful or strategic behavior of participating agents.

\subsection{Motivating examples}\label{SectionIIB}
In this subsection, we first use a distributed least-squares problem to show that strategic agents can lower their individual costs by amplifying local gradients, at the expense of network performance. We then consider a distributed mean estimation problem with stochastic gradients to show that agents can further improve their payoffs by manipulating gradient updates in Algorithm~\ref{algorithm}, again sacrificing network performance. 
\begin{example}[Distributed least squares]\label{example1}
We consider a distributed least squares problem where $N$ agents cooperatively find an optimal solution $\theta^{*}$ to the following stochastic optimization problem:
\begin{equation}
\!\!\!\textstyle\min_{\theta\in\mathbb{R}^{n}}~F(\theta)=\frac{1}{N}\sum_{i=1}^{N}f_{i}(\theta),~ f_{i}(\theta)=\mathbb{E}[(u_{i}^{\top}\theta-v_{i})^2],\label{least}
\end{equation}
where $u_{i}\in\mathbb{R}^{n}$ denotes a feature vector that is independently and identically drawn from an unknown distribution with zero mean and a positive definite covariance matrix, i.e., $\mathbb{E}[u_{i}]=0$ and $\mathbb{E}[u_{i}u_{i}^{\top}]\!=\!\Sigma\succ0$. The label $v_{i} \in \mathbb{R}$ is generated according to the linear model $v_{i} = u_{i}^{\top} z_{i} + \xi_{i}$, where $z_{i}$ represents agent $i$'s predefined local target and $\xi_{i}$ denotes zero-mean noise independent of $u_{i}$ and has a variance of $\sigma_{\xi}^2$.

The gradient of the global objective function satisfies
\begin{equation} \textstyle g(\theta)=\frac{1}{N}\sum_{i=1}^{N}2\mathbb{E}[u_{i}u_{i}^{\top}](\theta-z_{i})=2\Sigma\left(\theta-\bar{z}\right),\nonumber
\end{equation} 
which implies that the optimal solution is $\theta^{*}=\bar{z}$.

To study the effect of gradient manipulation, we assume that agent $i$ deviates from truthful behavior by amplifying its gradients $g_{i}(\theta)=2\Sigma(\theta-z_{i})$ by a scalar $a_{i}>1$, while all other agents behave truthfully. Then, the gradient of the global objective function becomes
\begin{equation}  \textstyle g^{\prime}(\theta)=2\Sigma\Big(\frac{(a_{i}+N-1)\theta}{N}-\frac{a_{i}z_{i}+\sum_{j\neq i}^{N}z_{j}}{N}\Big),\nonumber
\end{equation}
which leads to a new optimal solution at $\theta^{\prime *}=\frac{a_{i}-1}{a_{i}+N-1}z_{i}+\frac{N}{a_{i}+N-1}\bar{z}$ with any $a_{i}>1$.

Using the expressions for $\theta^{*}$ and $\theta^{\prime *}$, we have
\begin{equation}
\textstyle \|\theta^{\prime *}-z_{i}\| =\frac{N}{a_{i}+N-1}\|\bar{z}-z_{i}\|<\|\bar{z}-z_{i}\|=\|\theta^{*}-z_{i}\|,\nonumber
\end{equation}
which implies that the optimal solution $\theta^{\prime *}$ is closer to agent $i$'s local target $z_{i}$ (and is also closer to agent $i$'s local optimum $\theta_{i}^{*}$ due to $\theta_{i}^{*}=z_{i}$) than the original optimal solution $\theta^{*}$. Moreover, a larger $a_{i}$ makes $\theta^{\prime *}$ closer to $z_{i}$. Therefore, agent $i$ achieves a lower cost:
\begin{equation}
\begin{aligned}
f_{i}(\theta^{\prime *})&\textstyle=\left(\frac{N}{a_{i}+N-1}\right)^2(\bar{z}-z_{i})^{\top}\Sigma(\bar{z}-z_{i})+\sigma_{\xi}^2\\
&\textstyle<(\bar{z}-z_{i})^{\top}\Sigma(\bar{z}-z_{i})+\sigma_{\xi}^2=f_{i}(\theta^{*}),\nonumber
\end{aligned}
\end{equation}
while the global objective function value is increased to 
\begin{equation}
\textstyle F(\theta^{\prime *})=F(\theta^{*})+\left(\frac{a_{i}-1}{a_{i}+N-1}\right)^2(z_{i}-\bar{z})^{\top}\Sigma(z_{i}-\bar{z}),\nonumber
\end{equation}
which implies that by amplifying its local gradients (i.e., by using $a_{i}>1$), a strategic agent $i$ can reduce its individual cost while increasing the global objective function value. 
\end{example}
\begin{example}[Consensus-based distributed mean estimation]\label{example2}
We consider a problem where $N$ agents cooperatively estimate a global mean $\mu= \frac{1}{N}\sum_{i=1}^{N} \mu_{i}$, where $\mu_{i}\in\mathbb{R}^{n}$ denotes the mean of agent $i$'s data distribution. Formally, the distributed mean estimation problem can be formulated as the following stochastic optimization problem:
\begin{equation}
\!\!\!\textstyle\min_{\theta\in\mathbb{R}^{n}}~F(\theta)=\frac{1}{N}\sum_{i=1}^{N}f_{i}(\theta),~ f_{i}(\theta)=\mathbb{E}[\|\theta-\mu_{i}\|^2].\label{mean}
\end{equation}

We assume that the agents cooperatively solve problem~\eqref{mean} using Algorithm~\ref{mechanism} with a diminishing stepsize  $\lambda_{t}=\frac{\lambda_{0}}{(t+1)^{v}}$, where $\lambda_{0}<\frac{1}{2}$  and $0<v<1$. Since the mean $\mu_i$ is typically unknown to agent $i$, the agent only has access to a noisy gradient estimate $g_i(\theta_{i,t}) = 2(\theta_{i,t} - \zeta_{i,t})$ at iteration $t$, based on a sample $\zeta_{i,t}$ from its local distribution. Specifically, each coordinate $\zeta_{i,t}^{p},~p\in[n]$ of $\zeta_{i,t}$ is independently sampled from $\mathcal{N}(\mu_{i}^{p},\sigma^2/n)$. We denote $\mu_{i}=\col(\mu_{i}^{1},\cdots,\mu_{i}^{n})$ and the mean squared error of agent $i$ as $\lim_{T\rightarrow\infty}f_{i}(\bar{\theta}_{T})=\lim_{T\rightarrow\infty}=\mathbb{E}[\|\bar{\theta}_{T}-\mu_{i}\|^2].$ 

Next, we prove that by amplifying its gradient estimates with $a_{i}>1$, strategic agent $i$ can reduce its mean squared error by a factor of $(\frac{N}{a_{i}+N-1})^2$, while increasing the global mean squared error by an additive term of $\left(\frac{a_{i}-1}{a_{i}+N-1}\right)^2\|\mu-\mu_{i}\|^2$.
\vspace{0.5em}

1) We first consider the case where all agents are truthful. 

Algorithm~\ref{mechanism} with $m_{i,t}=g_{i}(\theta_{i,t})$ implies
\begin{equation}
\bar{\theta}_{t+1}-\mu=(1-2\lambda_t)(\bar{\theta}_{t}-\mu)+2\lambda_{t}(\bar{\zeta}_{t}-\mu).\nonumber
\vspace{-0.2em}
\end{equation}
By using relations $\mathbb{E}[\bar{\zeta}_{t}]=\mu$ and $\mathbb{E}[\|\bar{\zeta}_{t}-\mu\|^2]\leq \frac{\sigma^2}{N}$, and the fact that $\bar{\zeta}_{t}-\mu$ is independent of $\bar{\theta}_{t}-\mu$, we have
\begin{equation}
\textstyle \mathbb{E}[\|\bar{\theta}_{t+1}-\mu\|^2]=(1-2\lambda_t)^2 \mathbb{E}[\|\bar{\theta}_{t}-\mu\|^2]+\frac{4\lambda^2_{t}\sigma^2}{N}.\label{dayu}
\end{equation}
By applying Lemma 5-(i) in the arxiv version of~\cite{JDPziji} to~\eqref{dayu}, we obtain
\begin{equation}
\textstyle \mathbb{E}[\|\bar{\theta}_{t}-\mu\|^2]\leq c_{1}\lambda_{t},\nonumber
\end{equation}
with $c_{1}=\frac{1}{\lambda_{0}}\max\left\{\mathbb{E}[\|\bar{\theta}_{0}-\mu\|^2],\frac{4\lambda_{0}^2\sigma^2}{N(2\lambda_{0}-v)}\right\}$.

Using the relation $\mathbb{E}[\|a+b\|^2]\leq (\sqrt{\mathbb{E}[\|a\|^2]}+\sqrt{\mathbb{E}[\|b\|^2]})^2$ for any random variables $a$ and $b$, we have
\begin{equation}
\begin{aligned}
	\mathbb{E}[\|\bar{\theta}_{t}-\mu_{i}\|^2]&\textstyle\leq \left(\sqrt{\mathbb{E}[\|\bar{\theta}_{t}-\mu\|^2]}+\sqrt{\mathbb{E}[\|\mu-\mu_{i}\|^2]}\right)^2\\
	&\textstyle\leq \left(\sqrt{c_{1}\lambda_{t}}+\|\mu-\mu_{i}\|\right)^2,\label{l21}
\end{aligned}
\end{equation}
which implies $\lim_{T\rightarrow\infty}\mathbb{E}[\|\bar{\theta}_{T}-\mu_{i}\|^2]\leq \|\mu-\mu_{i}\|^2$.

In addition, combining the relationship $(1-2\lambda_{t})^2\geq 1-4\lambda_{t}$ and~\eqref{dayu}, we have 
\begin{equation}
\textstyle\mathbb{E}[\|\bar{\theta}_{t+1
}-\mu_{i}\|^2]\geq(1-4\lambda_t) \mathbb{E}[\|\bar{\theta}_{t}-\mu_{i}\|^2]+\frac{4\lambda^2_{t}\sigma^2}{N},\label{xx}
\end{equation}
which further implies
\begin{equation}
\textstyle \mathbb{E}[\|\bar{\theta}_{t}-\mu_{i}\|^2]\geq c_2\lambda_{t},\nonumber
\end{equation}
with $c_{2}=\frac{1}{\lambda_{0}}\min\left\{\mathbb{E}[\|\bar{\theta}_{0}-\mu_{i}\|^2],\frac{2\lambda_{0}\sigma^2}{N}\right\}$.

Using the inequality $\mathbb{E}[\|a+b\|^2]\geq (\sqrt{\mathbb{E}[\|a\|^2]}-\sqrt{\mathbb{E}[\|b\|^2]})^2$ for any random variables $a$ and $b$, we have
\begin{equation}
\textstyle \mathbb{E}[\|\bar{\theta}_{t}-\mu_{i}\|^2]\geq\left(\sqrt{c_{2}\lambda_{t}}-\|\mu-\mu_{i}\|\right)^2,\label{l27}
\end{equation}
which implies $\lim_{T\rightarrow\infty}\mathbb{E}[\|\bar{\theta}_{T}-\mu_{i}\|^2]\geq\|\mu-\mu_{i}\|^2$.

Combining~\eqref{l21} and~\eqref{l27}, and using $\lim_{T\rightarrow\infty}\lambda_{T}=0$ yield
\begin{equation}
\lim_{T\rightarrow\infty}f_{i}(\bar{\theta}_{T})=\lim_{T\rightarrow\infty}\mathbb{E}[\|\bar{\theta}_{T}-\mu_{i}\|^2]=\|\mu-\mu_{i}\|^2,\label{l28}
\end{equation}
while the global mean squared error satisfies
\begin{equation}
\textstyle\lim_{T\rightarrow\infty}F(\bar{\theta}_{T})=\frac{1}{N}\sum_{j=1}^{N}\|\mu-\mu_{j}\|^2.\label{l282}
\end{equation}

2) Next, we consider the case where agent $i$ amplifies its gradient estimates as $a_{i}g_{i}(\theta_{i,t}^{\prime})\!=\!2a_{i}(\theta_{i,t}^{\prime}-\zeta_{i,t})$, while all other agents behave truthfully. According to Algorithm~\ref{algorithm}, we have the following dynamics
\begin{equation}
	\begin{aligned}
		\bar{\theta}_{t+1}^{\prime}&\textstyle=\bar{\theta}_{t}^{\prime}-\frac{2(a_{i}+N-1)}{N}\lambda_{t}\left(\bar{\theta}_{t}^{\prime}-\left(\frac{a_{i}\zeta_{i,t}+\sum_{j\neq i}^{N}\zeta_{j,t}}{a_{i}+N-1}\right)\right)\\
		&\textstyle\quad-\frac{2(a_{i}-1)}{N}\lambda_{t}(\theta_{i,t}^{\prime}-\bar{\theta}_{t}^{\prime}).\label{l29}
	\end{aligned}
\end{equation}

For the sake of notational simplicity, we define $\hat{a}_{i}\!\triangleq\!\frac{a_{i}+N-1}{N}$ and $\hat{\mu}_{i}\triangleq\frac{a_{i}\mu_{i}+\sum_{j\neq i}^{N}\mu_{j}}{a_{i}+N-1}$. Then, by using~\eqref{l29}, we obtain
\begin{equation}
	\begin{aligned}
		&\!\textstyle\bar{\theta}_{t+1}^{\prime}-\hat{\mu}_{i}=(1-2\hat{a}_{i}\lambda_{t})\left(\bar{\theta}_{t}^{\prime}-\hat{\mu}_{i}\right)\\
		&\!\!\textstyle+2\hat{a}_{i}\lambda_{t}\left(\frac{a_{i}\zeta_{i,t}+\sum_{j\neq i}^{N}\zeta_{j,t}}{a_{i}+N-1}-\hat{\mu}_{i}\right)\!-\!\frac{2(a_{i}-1)}{N}\lambda_{t}(\theta_{i,t}^{\prime}-\bar{\theta}_{t}^{\prime}).\label{l210}
	\end{aligned}
\end{equation}
By taking the squared norm and expectations on both sides of~\eqref{l210} and using the Young's inequality, we obtain
\begin{flalign}
	&\textstyle\mathbb{E}[\|\bar{\theta}_{t+1}^{\prime}-\hat{\mu}_{i}\|^2]=\left(1+\hat{a}_{i}\lambda_{t}\right)\mathbb{E}[\|\left(1-2\hat{a}_{i}\lambda_{t}\right)\left(\bar{\theta}_{t}^{\prime}-\hat{\mu}_{i}\right)\|^2]\nonumber\\
	&\textstyle+\left(1\!+\!\frac{1}{\hat{a}_{i}\lambda_{t}}\right)\mathbb{E}\left[\left\|\frac{2(a_{i}-1)\lambda_{t}}{N}(\theta_{i,t}^{\prime}-\bar{\theta}_{t}^{\prime})\right\|^2\right]\!+\!\frac{4\hat{a}_{i}^2\lambda_{t}^2(a_{i}^2+N-1)\sigma^2}{(a_{i}+N-1)^2},\nonumber\\
	&\textstyle \leq \left(1-\hat{a}_{i}\lambda_{t}\right)\mathbb{E}[\|\bar{\theta}_{t}^{\prime}-\hat{\mu}_{i}\|^2]+\frac{4\hat{a}_{i}^2(a_{i}^2+N-1)\sigma^2}{(a_{i}+N-1)^2}\lambda_{t}^2\nonumber\\
	&\textstyle+\left(\lambda_{0}+\frac{1}{\hat{a}_{i}}\right)\frac{4(a_{i}-1)^2}{N^2}\lambda_{t}\mathbb{E}[\|\theta_{i,t}^{\prime}-\bar{\theta}_{t}^{\prime}\|^2].\label{l211}
\end{flalign}

Furthermore, Algorithm~\ref{mechanism} with $m_{i,t}=2a_{i}(\theta_{i,t}^{\prime}-\zeta_{i,t})$ and the dynamics in~\eqref{l29} imply
\begin{equation}
	\begin{aligned}
		&\textstyle\boldsymbol{\theta}_{t+1}^{\prime}-\boldsymbol{1}_{N}\otimes \bar{\theta}_{t+1}^{\prime}=(W\otimes I_n)\left(\boldsymbol{\theta}_{t}^{\prime}-\boldsymbol{1}_{N}\otimes \bar{\theta}_{t}^{\prime}\right)\\
		&\textstyle\quad-2\lambda_t\left(\boldsymbol{\theta}_{t}^{\prime}-\boldsymbol{1}_{N}\otimes \bar{\theta}_{t}^{\prime}-\left(\boldsymbol{\zeta}_{t}-\boldsymbol{1}_{N}\otimes \bar{\zeta}_{t}\right)\right)\\
		&\textstyle\quad+
		2\lambda_{t}M_{i}(\theta_{i,t}^{\prime}-\zeta_{i,t}),\label{l213}
	\end{aligned}
\end{equation}
where matrix $M_{i}$ is given by $M_{i}=(1-a_{i})(\boldsymbol{e}_{i}-\frac{1}{N}\boldsymbol{1}_{N})\otimes I_{n}$ with $\boldsymbol{e}_{i}\in\mathbb{R}^{N}$ denoting the $i$th standard basis vector. 

By taking the squared norm and expectations on both sides of~\eqref{l213} and using the Young's inequality, we obtain
\begin{flalign}
	&\mathbb{E}[\|\boldsymbol{\theta}_{t+1}^{\prime}-\boldsymbol{1}_{N}\otimes \bar{\theta}_{t+1}^{\prime}\|^2]\nonumber\\
	&\textstyle\leq \left(1+(1-\rho)\right)\mathbb{E}[\|(W\otimes I_n\!-\!2\lambda_{t}I_{Nn})(\boldsymbol{\theta}_{t}^{\prime}-\boldsymbol{1}_{N}\otimes \bar{\theta}_{t}^{\prime})\|^2]\nonumber\\
	&\textstyle\quad+8\lambda_t^2\left(1+\frac{1}{1-\rho}\right)\mathbb{E}[\|\boldsymbol{\zeta}_{t}-\boldsymbol{1}_{N}\otimes \bar{\zeta}_{t}\|^2]\nonumber\\
	&\textstyle\quad+8\lambda_t^2\left(1+\frac{1}{1-\rho}\right)\|M_{i}\|^2\mathbb{E}[\|\theta_{i,t}^{\prime}-\zeta_{i,t}\|^2].\label{l214}
\end{flalign}
The last term on the right hand side of~\eqref{l214} satisfies
\begin{equation}
	\begin{aligned}
		&\textstyle\mathbb{E}[\|\theta_{i,t}^{\prime}-\zeta_{i,t}\|^2]\leq 3\mathbb{E}[\|\theta_{i,t}^{\prime}-\bar{\theta}_{t}^{\prime}\|^2]\\
		&\textstyle\quad+3\mathbb{E}[\|\bar{\theta}_{t}^{\prime}-\hat{\mu}_{i}\|^2]+3\mathbb{E}[\|\hat{\mu}_{i}-\zeta_{i,t}\|^2].\label{l215}
	\end{aligned}
\end{equation}
Substituting~\eqref{l215} into~\eqref{l214}, we obtain
\begin{flalign}
	&\mathbb{E}[\|\boldsymbol{\theta}_{t+1}^{\prime}-\boldsymbol{1}_{N}\otimes \bar{\theta}_{t+1}^{\prime}\|^2]\nonumber\\
	&\textstyle\leq \left(\frac{(\rho-2\lambda_{t})^2}{\rho}+\frac{24(2-\rho)\|M_{i}\|^2}{1-\rho}\lambda_t^2\right)\mathbb{E}[\|\boldsymbol{\theta}_{t}^{\prime}-\boldsymbol{1}_{N}\otimes \bar{\theta}_{t}^{\prime}\|^2]\nonumber\\
	&\textstyle\quad+\frac{24(2-\rho)\|M_{i}\|^2}{1-\rho}\lambda_t^2\left(\mathbb{E}[\|\bar{\theta}_{t}^{\prime}-\hat{\mu}_{i}\|^2]+\mathbb{E}[\|\hat{\mu}_{i}-\zeta_{i,t}\|^2]\right)\nonumber\\
	&\textstyle\quad+\frac{8(2-\rho)}{1-\rho}\lambda_t^2\mathbb{E}[\|\boldsymbol{\zeta}_{t}-\boldsymbol{1}_{N}\otimes \bar{\zeta}_{t}\|^2],\label{l216}
\end{flalign}
where in the derivation we have used the relations $1+(1-\rho)\leq\frac{1}{\rho}$ and $\mathbb{E}[\|\theta_{i,t}^{\prime}-\bar{\theta}_{t}^{\prime}\|^2]\leq \mathbb{E}[\|\boldsymbol{\theta}_{t}^{\prime}-\boldsymbol{1}_{N}\otimes \bar{\theta}_{t}^{\prime}\|^2]$.

Summing both sides of~\eqref{l211} and~\eqref{l216}, we arrive at
\begin{equation}
	\begin{aligned}
		&\textstyle\mathbb{E}[\|\bar{\theta}_{t+1}^{\prime}-\hat{\mu}_{i}\|^2]+\mathbb{E}[\|\boldsymbol{\theta}_{t+1}^{\prime}-\boldsymbol{1}_{N}\otimes \bar{\theta}_{t+1}^{\prime}\|^2]\\
		&\textstyle\leq \left(1-\hat{a}_{i}\lambda_{t}+\frac{24(2-\rho)\|M_{i}\|^2}{1-\rho}\lambda_t^2\right)\mathbb{E}[\|\bar{\theta}_{t}^{\prime}-\hat{\mu}_{i}\|^2]\\
		&\textstyle\quad+\left(\frac{(\rho-2\lambda_{t})^2}{\rho}+\frac{4(\lambda_{0}\hat{a}_{i}+1)(a_{i}-1)^2}{\hat{a}_{i}N^2}\lambda_{t}+\frac{24(2-\rho)\|M_{i}\|^2}{1-\rho}\lambda_t^2\right)\\
		&\textstyle\quad\times \mathbb{E}[\|\boldsymbol{\theta}_{t}^{\prime}-\boldsymbol{1}_{N}\otimes \bar{\theta}_{t}^{\prime}\|^2]+c_{i,1}\lambda_t^2,\nonumber
	\end{aligned}
\end{equation}
with $c_{i,1}=\frac{4\hat{a}_{i}^2(a_{i}^2+N-1)\sigma^2}{(a_{i}+N-1)^2}+\frac{24(2-\rho)\|M_{i}\|^2\sum_{j=1}^{N}\|\mu_{j}-\mu_{i}\|^2}{1-\rho}+\frac{24(2-\rho)\|M_{i}\|^2\sigma^2}{(1-\rho)(a_{i}+N-1)}+\frac{32(2-\rho)d_{\zeta}^2}{1-\rho}$ and $d_{\zeta}=\max_{i\in[N],t\in\mathbb{N}}\{\|\zeta_{i,t}\|\}$.

Since $\lambda_{t}$ is a decaying sequence, we have
\begin{equation}
	\begin{aligned}
		&\textstyle\mathbb{E}[\|\bar{\theta}_{t+1}^{\prime}-\hat{\mu}_{i}\|^2]+\mathbb{E}[\|\boldsymbol{\theta}_{t+1}^{\prime}-\boldsymbol{1}_{N}\otimes \bar{\theta}_{t+1}^{\prime}\|^2]\\
		&\textstyle\leq \left(1-\frac{\hat{a}_{i}\lambda_{t}}{2}\right)\left(\mathbb{E}[\|\bar{\theta}_{t}^{\prime}-\hat{\mu}_{i}\|^2]+\mathbb{E}[\|\boldsymbol{\theta}_{t}^{\prime}-\boldsymbol{1}_{N}\otimes \bar{\theta}_{t}^{\prime}\|^2]\right)\!+\!c_{i,1}\lambda_{t}^2.\nonumber
	\end{aligned}
\end{equation}
Applying Lemma 5-(i) in the arxiv version of~\cite{JDPziji} to the preceding inequality, we arrive at
\begin{equation}
	\mathbb{E}[\|\bar{\theta}_{t}^{\prime}-\hat{\mu}_{i}\|^2]+\mathbb{E}[\|\boldsymbol{\theta}_{t}^{\prime}-\boldsymbol{1}_{N}\otimes \bar{\theta}_{t}^{\prime}\|^2]\leq c_{i,2}\lambda_{t},\label{l218}
\end{equation}
with $c_{i,2}=\max\{\mathbb{E}[\|\bar{\theta}_{0}-\hat{\mu}\|^2]+\mathbb{E}[\|\boldsymbol{\theta}_{0}-\boldsymbol{1}_{N}\otimes \bar{\theta}_{0}\|^2],\frac{2c_{i,1}\lambda_{0}^2}{\hat{a}_{i}\lambda_{0}-2v}\}$.

By using $\mathbb{E}[\|a+b\|^2]\leq (\sqrt{\mathbb{E}[\|a\|^2]}+\sqrt{\mathbb{E}[\|b\|^2]})^2$ for any random variables $a$ and $b$, we have
\begin{equation}
	\begin{aligned}
		\mathbb{E}[\|\bar{\theta}_{t}^{\prime}-\mu_{i}\|^2]&\textstyle\leq \left(\sqrt{\mathbb{E}[\|\bar{\theta}_{t}^{\prime}-\hat{\mu}_{i}\|^2]}+\sqrt{\mathbb{E}[\|\hat{\mu}_{i}-\mu_{i}\|^2]}\right)^2\nonumber\\
		&\textstyle\leq \left(\sqrt{c_{i,2}\lambda_{t}}+\left(\frac{N}{a_{i}+N-1}\right)\|\mu-\mu_{i}\|\right)^2,\nonumber
	\end{aligned}
\end{equation}
which implies that agent $i$ obtains a lower mean-squared error:
\begin{equation}
	\begin{aligned}
		\lim_{T\rightarrow\infty}f_{i}(\bar{\theta}_{T}^{\prime})&\textstyle=\left(\frac{N}{a_{i}+N-1}\right)^2\|\mu-\mu_{i}\|^2\\
		&\textstyle<\|\mu-\mu_{i}\|^2=\lim_{T\rightarrow\infty}f_{i}(\bar{\theta}_{T}),\label{l219}
	\end{aligned}
\end{equation}
while the global mean-squared error is increased to
\begin{equation}
	\begin{aligned}
		&\textstyle\lim_{T\rightarrow\infty} F(\bar{\theta}_{T}^{\prime})=\textstyle\lim_{T\rightarrow\infty}\frac{1}{N}\sum_{j=1}^{N}\mathbb{E}[\|\bar{\theta}_{t}^{\prime}-\mu_{j}\|^2]\\
		&\textstyle\leq \lim_{T\rightarrow\infty}\frac{1}{N}\sum_{j=1}^{N}\left(\sqrt{c_{i,2}\lambda_{t}}+\sqrt{\mathbb{E}[\|\hat{\mu}_{i}-\mu_{j}\|^2]}\right)^2\\
		&\textstyle\leq \frac{1}{N}\sum_{j=1}^{N}\|\frac{(a_{i}-1)(\mu_{i}-\mu)}{a_{i}+N-1}+\mu-\mu_{j}\|^2\\
		&\textstyle= \frac{1}{N}\sum_{j=1}^{N}\|\mu-\mu_{j}\|^2+\left(\frac{a_{i}-1}{a_{i}+N-1}\right)^2\|\mu-\mu_{i}\|^2\\
		&\textstyle=\lim_{T\rightarrow\infty} F(\bar{\theta}_{T})+\left(\frac{a_{i}-1}{a_{i}+N-1}\right)^2\|\mu-\mu_{i}\|^2.\label{l2110}
	\end{aligned}
\end{equation}
\end{example}

Example~\ref{example1} and Example~\ref{example2} demonstrate that a self-interested agent can benefit from its strategic gradient manipulation while degrading the network-level performance.

In addition to decentralized least squares problems, similar truthfulness issues also emerge in collaborative machine learning~\cite{honestmain}, distributed electric-vehicle charging~\cite{JDPziji}, and many
other collaborative optimization applications~\cite{VCGdefinition}.

To quantitatively analyze strategic interactions among agents, we adopt a game-theoretic framework that explicitly defines each agent's strategic behavior, reward, payment, and resulting net utility.

\subsection{Game-theoretic framework}\label{SectionIIC}
\noindent\textbf{Strategic behaviors and action space.} A self-interested agent can enhance its individual outcome through two strategic behaviors: amplifying its gradient estimates to bias the final model parameter in favor of its own data distribution, and injecting noise to degrade the performance of other agents' model parameters for competitive advantage. Both strategic behaviors can be modeled as an agent's action at each iteration. Formally, we  
define the action space for each strategic agent $i\in[N]$ in distributed optimization/learning as follows:
\begin{equation}
	\!\mathcal{A}_{i}\!=\!\{\alpha_{i}|\alpha_{i}(g_{i}(\theta))\!=\!a_{i}g_{i}(\theta)\!+\!b_{i}\xi_{i},~a_{i}\!\geq 1,~ b_{i}\!\in\mathbb{R}\}, \label{strategy}
\end{equation}
where $g_{i}(\theta)$ represents agent $i$'s true gradient estimate and $\xi_{i}$ is a zero-mean noise vector with bounded variance. The scaling factor $a_{i}$ quantifies the degree of gradient amplification, while the noise factor $b_{i}$ specifies the magnitude of noise injection, both of which can be strategically chosen by agent $i$ at each iteration. For any action space $\mathcal{A}_{i}$, we assume that it includes the identity mapping, which maps $g_{i}$ to itself with probability one. Hence, truthfulness is always a feasible action.

The action space defined in~\eqref{strategy} is designed to capture the strategic behavior of agents in distributed learning rather than arbitrary malicious attacks. To this end, we consider $a_{i}\geq 1$ and ignore $a_{i}<1$, because the latter typically reduces the influence of agent $i$'s local data on cooperative learning, making it a non-utility-improving choice for a rational agent. This focus on strategic, utility-driven behavior excludes general malicious attacks, which are typically disruptive and do not align with an agent’s goal of improving its outcome within the learning framework. In addition, compared with the action spaces defined in existing results on federated learning, such as~\cite{honestmain} that only considers noise injection (i.e., fixing $a_{i}=1$) and~\cite{truthfulmain} that only considers gradient amplification (i.e., fixing $b_{i}=0$), our formulation accounts for a broader range of strategic manipulations.

\noindent\textbf{Rewards.} We denote a distributed learning algorithm as~$\mathcal{M}$. At each iteration $t$, agent $i$ chooses an action $\alpha_{i,t}\in\mathcal{A}_{i}$. This action produces a (manipulated) gradient $m_{i,t}=\alpha_{i,t}(g_{i}(\theta_{i,t}))$, which is then used by agent $i$ in the update step of $\mathcal{M}$. Considering $T+1$ iterations of $\mathcal{M}$, we let $\boldsymbol{\alpha}_{i}=\{\alpha_{i,t}\}_{t=0}^{T}$ denote the action trajectory of agent $i$ from iteration $0$ to iteration $T$, $\boldsymbol{\theta}_{j}=\{\theta_{j,t}\}_{t=0}^{T}$ denote the model-parameter trajectory of agent $j$, and $\boldsymbol{\theta}_{-i}=\{\boldsymbol{\theta}_{j}\}_{j\in\mathcal{N}_{i}}$ denote the collection of model-parameter trajectories received by agent $i$ from all its neighbors. Given an initial model parameter $\theta_{i,0}$, an action trajectory $\boldsymbol{\alpha}_{i}$, and a collection of model-parameter trajectories $\boldsymbol{\theta}_{-i}$, an implementation of $\mathcal{M}$ generates a final model parameter $\theta_{i,T+1}=\mathcal{M}(\theta_{i,0},\boldsymbol{\alpha}_{i},\boldsymbol{\theta}_{-i})$ for agent $i$. We denote the reward that agent $i$ obtains from its final objective (cost) function value as $R_{i}(f_{i}(\theta_{i,T+1}))$.

In a minimization problem (see~\eqref{primal}), the reward function $R_{i}(f_{i}(\theta))$ increases as the objective function $f_{i}(\theta)$ decreases. Therefore, a self-interested agent can boost its reward by biasing the final solution toward the minimizer of its local objective function. Common choices for $R_{i}(f_{i}(\theta))$ include the linear function $R_{i}(f_{i}(\theta))\!=\!-f_{i}(\theta)$ and the sigmoid-like function $R_{i}(f_{i}(\theta))\!=\!(1+e^{-1/f_{i}(\theta)})^{-1}$~\cite{truthfulmain2}. We allow different agents to have different reward functions.

\noindent\textbf{Payments and net utilities.} To mitigate gradient manipulation by participating agents, we augment the distributed learning protocol with a payment mechanism. This mechanism can be computed efficiently and implemented in a fully decentralized manner between any pair of interacting agents (see details in Section~\ref{SectionIII}). We denote the augmented distributed learning protocol (with a payment mechanism) by $\mathcal{M}_{p}$. The net utility of agent $i$ from executing $\mathcal{M}_{p}$ over $T+1$ iterations is defined as follows:
\begin{equation}
	\vspace{-0.2em}
	U_{i,0\text{\tiny$\to$}T}^{\mathcal{M}_{p}}(\boldsymbol{\alpha}_{i},\boldsymbol{\alpha}_{-i})=R_{i}(f_{i}(\theta_{i,T+1}))-\sum_{t=0}^{T}\!P_{i,t},
	\label{netutility}
	\vspace{-0.2em}
\end{equation}
where $P_{i,t}$ is the total net payment made by agent $i$ to all its neighbors, and $\boldsymbol{\alpha}_{-i}=\{\boldsymbol{\alpha}_{j}\}_{j\neq i}$ denotes the action trajectories  of all agents except agent $i$.

We note that all existing incentive-based approaches for collaborative learning ensure truthfulness by incorporating a payment or penalty term into each agent’s net utility. Without such payments, a self-interested agent can freely manipulate its gradients to reduce its own loss and increase its rewards, thereby distorting the collaborative learning process. Therefore,~\eqref{netutility} includes the cumulative payments of each agent in its net utility. Accordingly, a rational agent must consider both its rewards and payments when maximizing its net utility.

Next, we introduce two truthfulness-related concepts in our game-theoretic framework.
\begin{definition}[$\delta$-truthful action~\cite{truthfulmain,truthfulmain2}]\label{D1}
	For any given $\delta\geq 0$ and any $i\in[N]$, an action $\alpha_{i}\in\mathcal{A}_{i}$ (with $\mathcal{A}_{i}$ defined in~\eqref{strategy}) of agent $i$ is $\delta$-truthful if it satisfies $\mathbb{E}[\|\alpha_{i}(g_{i}(\theta))-g_{i}(\theta)\|]\leq \delta$ for any $\theta\in\mathbb{R}^{n}$. In particular, the action $\alpha_{i}$ is fully truthful when $\delta=0$.
\end{definition}
Definition~\ref{D1} quantifies the truthfulness of an agent's action in collaborative  optimization/learning. It can be seen that a smaller $\delta$ corresponds to a higher level of truthfulness in the agent's action.

\begin{definition}[$\varepsilon$-incentive compatibility~\cite{IC2007}]\label{D3}
	For any given $\varepsilon\geq 0$, a distributed learning protocol $\mathcal{M}_{p}$ is $\varepsilon$-incentive compatible if for all $i\in[N]$, $\mathbb{E}[U_{i,0\text{\tiny$\to$}T}^{\mathcal{M}_{p}}(\boldsymbol{h}_{i},\boldsymbol{h}_{-i})]\geq\mathbb{E}[U_{i,0\text{\tiny$\to$}T}^{\mathcal{M}_{p}}(\boldsymbol{\alpha}_{i},\boldsymbol{h}_{-i})]-\varepsilon$ holds for any arbitrary action trajectory $\boldsymbol{\alpha}_{i}$ of agent $i$, where $\boldsymbol{h}_{i}$ is the truthful action trajectory of agent $i$ and $\boldsymbol{h}_{-i}$ is truthful action trajectories of all agents except agent~$i$.
\end{definition}
Definition~\ref{D3} (also called $\varepsilon$-Bayesian-incentive compatibility) is a standard
and commonly used notion in the existing incentive-compatibility literature~\cite{IC1,IC2,truthfulmain,truthfulmain2}. It implies that if a distributed optimization protocol is~$\varepsilon$-incentive compatible, then the expected net utility that an agent can gain from any (possibly untruthful) action trajectory is at most $\varepsilon$ greater than that obtained by being truthful in all iterations. Clearly, a smaller $\varepsilon$ corresponds to a lower gain that an untruthful agent can obtain. In addition, according to the definition of $\varepsilon$-Nash equilibrium in~\cite{Nash}, if a distributed learning protocol is $\varepsilon$-incentive compatible, then the truthful action trajectory profile of all agents $\boldsymbol{h}=(\boldsymbol{h}_{i},\boldsymbol{h}_{-i})$ forms an $\varepsilon$-Nash equilibrium (see Lemma~\ref{ICNashlemma} in Appendix E).

\section{Payment Mechanism Design for Distributed Stochastic Gradient Descent}\label{SectionIII}
In this section, we propose a fully distributed payment mechanism (see Mechanism~\ref{mechanism}) to incentivize truthful behaviors of participating agents in distributed stochastic gradient descent.

\begin{mechanism}[h]
	\caption{Fully distributed payment mechanism  (for  interacting agents $i$ and $j$ at
		iteration $t$)}
	\label{mechanism}
	\begin{algorithmic}[1]
		\STATE {\bfseries Input:}
		$\theta_{\iota,t-1}$, $\theta_{\iota,t}$, $\theta_{\iota,t+1}$ for $\iota\in\{i,j\}$ available to both agents $i$ and $j$ under Algorithm~\ref{algorithm} (note $\theta_{\iota,t+1}$ has been shared at the end of iteration $t$); initialization $\theta_{i,-1}=\theta_{j,-1}=\boldsymbol{0}_{n}$; $C_{t}>0$. 
		\STATE Agents $i$ and $j$ simultaneously compute both 
		$\Delta_{\theta_{i},t}\triangleq\|\theta_{i,t+1}-2\theta_{i,t}+\theta_{i,t-1}\|^2$ and $\Delta_{\theta_{j},t}\triangleq\|\theta_{j,t+1}-2\theta_{j,t}+\theta_{j,t-1}\|^2$.
		\IF{$\Delta_{\theta_{i},t}\geq \Delta_{\theta_{j},t}$}
		\STATE Agent $i$ transfers $P_{i,t}^{j}=C_{t}(\Delta_{\theta_{i},t}-\Delta_{\theta_{j},t})$ to agent $j$.
		\ELSE
		\STATE Agent $i$ receives $P_{j,t}^{i}\!=\!C_{t}(\Delta_{\theta_{j},t}-\Delta_{\theta_{i},t})$ from agent $j$.
		\ENDIF
	\end{algorithmic}
\end{mechanism}
Mechanism~\ref{mechanism} is implementable in a fully distributed manner without the assistance of any   server or aggregator. At each iteration $t$, if agent $i$ manipulates its gradient estimates such that its model-parameter increment $\|\theta_{i,t+1}-2\theta_{i,t}+\theta_{i,t-1}\|$ exceeds that of its neighbor agent~$j$, agent $i$ pays an amount $P_{i,t}^{j}> 0$ to agent $j$ (in this case, we denote the payment of agent $j$ as $P_{j,t}^{i}=-P_{i,t}^{j})$. Conversely, if agent $i$'s model-parameter increment $\|\theta_{i,t+1}-2\theta_{i,t}+\theta_{i,t-1}\|$ is no greater than that of its neighbor agent $j$, it receives a payment of amount $P_{j,t}^{i}\geq 0$ from agent $j$ (in this case, we denote the payment of agent $i$ as $P_{i,t}^{j}=-P_{j,t}^{i}$). We emphasize that our payment mechanism can be readily applied to any first-order  distributed gradient methods. For an agent $i$, its total payment to all its neighbors at iteration $t$ is given by $P_{i,t}=\sum_{j \in \mathcal{N}_{i}} P_{i,t}^{j}$.

In Mechanism~\ref{mechanism}, with both agents $i$ and $j$ having access to $\theta_{\iota,t-1}$, $\theta_{\iota,t}$, and $\theta_{\iota,t+1}$ for $\iota\in\{i,j\}$ from the update of Algorithm~\ref{algorithm} (note that $\theta_{\iota,t+1}$ has been shared at the end of iteration $t$), the two  agents can cross-verify the computed payment value, making the mechanism robust to unilateral manipulation. This represents a significant advance compared with the payment mechanism in~\cite{VCGauto} for server-assisted collaborative optimization, which requires all agents to truthfully report their local objective-function values for payment calculation—thereby creating a risk that strategic agents may manipulate the algorithmic update and the payment mechanism separately.

Our payment mechanism can effectively discourage agents from free-riding\footnote{Free-riding refers to the behavior in which agents skip computing gradients on their local data and instead update their parameters solely based on information received from neighbors. In this way, they benefit from the distributed optimization process without contributing their own data or gradients.}. Specifically, the model-parameter increment $\|\theta_{i,t+1}-2\theta_{i,t}+\theta_{i,t-1}\|$ of agent $i$ depends on both the consensus errors and the (sign-indefinite) local gradients. Consequently, even if agent $i$ uses a zero (or low) gradient, there is no guarantee that  $\|\theta_{i,t+1}-2\theta_{i,t}+\theta_{i,t-1}\|$ will be smaller than that of its neighbor $j$, meaning that leveraging zero gradients does not reliably increase agent $i$'s payment gains. In fact, free-riding behavior (or using low gradients) invariably degrades agent $i$’s own reward $R_{i}(f_{i}(\theta_{i,T}))$, as it weakens the influence of agent $i$'s data in collaborative learning and ultimately leads to a worse final model for the agent itself. Hence, free-riding is not a utility-improving choice for a rational agent.

Our payment mechanism is conceptually inspired by the classical VCG mechanism~\cite{VCG1,VCG2,VCG3} but has several fundamental differences: 1) conventional VCG mechanisms require a central server to calculate and collect payments~\cite{VCG1,VCG2,VCG3}, whereas our mechanism operates in a pairwise fashion without reliance on any third party. As a result, it can be implemented in a fully distributed manner; 2) conventional VCG mechanisms are typically designed for one-shot games~\cite{oneshot2,oneshot4,oneshot5,oneshot7}. In contrast, our payment mechanism is naturally compatible with iterative algorithms, encompassing a wide range of distributed optimization/learning methods. In fact, an iterative algorithm setting inherently forms a multi-stage game, where agents repeatedly adjust actions, posing challenges for both truthfulness and convergence analysis; and 3) conventional VCG mechanisms are not budget-balanced (see, e.g.,~\cite{iterative1,IcentiveFL0,IcentiveFL1,IcentiveFL2,VCGauto}), whereas our payment mechanism is budget-balanced, i.e., $\sum_{i=1}^{N}P_{i,t}=0$, making it financially sustainable and scalable in practice.

The existing approaches most closely related to ours are the payment mechanisms proposed by~\cite{truthfulmain,truthfulmain2}. However, there are several fundamental differences: 1) our payment mechanism is implementable in a fully distributed manner, and hence, is applicable to arbitrary connected communication graphs, whereas the mechanisms in~\cite{truthfulmain,truthfulmain2} rely on a centralized server to aggregate gradients from all agents, and  thus operate only under a centralized communication structure; and 2) our payment mechanism achieves $\varepsilon$-incentive compatibility with a finite $\varepsilon$ even in an infinite time horizon (see Theorem~\ref{MT2}), whereas the incentive $\varepsilon$ in~\cite{truthfulmain,truthfulmain2} becomes unbounded as the number of iterations tends to infinity (see Claim 23 in~\cite{truthfulmain} or Theorem 5.1 in~\cite{truthfulmain2}), leading to a vanishing incentive compatibility guarantee over iterations.

\section{Convergence Rate and Incentive-Compatibility Analysis}\label{SectionIV}
\begin{assumption}\label{A1}
	For any $i\in[N]$, the following conditions hold: (i) $R_{i}(f_{i}(\theta))$ is $L_{R,i}$-Lipschitz continuous \textit{w.r.t.}~$\theta$; (ii) the stochastic gradient $g_{i}(\theta)$ is $H_{i}$-Lipschitz continuous. Moreover, $g_{i}(\theta)$ is unbiased and has bounded variance $\sigma_{i}^2$; (iii) for a general convex $f_{i}(\theta)$, we assume that $f_{i}(\theta)$
	is $L_{f,i}$-Lipschitz continuous. However, this assumption is not required for a strongly
	convex $f_{i}(\theta)$.
\end{assumption}
\begin{assumption}\label{A2}
	We assume that the matrix $W$ is symmetric and $\rho=\max\{|\pi_{2}|,|\pi_{N}|\}<1$, where $\pi_{N}\leq\cdots\leq \pi_{2}<\pi_{1}=1$ denote 
	the eigenvalues of $W$.
\end{assumption}
Assumption~\ref{A1} is standard and commonly used in the distributed stochastic optimization/learning literature~\cite{Lip1,Lip2,Lip3}. It is worth noting that we allow $f_{i}(\theta)$ to be convex, which is more general than the strongly convex assumption used in existing truthfulness results in~\cite{JDP1,honestmain,truthfulmain,truthfulmain2,VCGauto}. Assumption~\ref{A2} ensures that the communication graph is connected~\cite{ridge}.

Furthermore, for the sake of notational simplicity, we denote $L_{R}=\max_{i\in[N]}\{L_{R,i}\}$,  $L_{f}=\max_{i\in[N]}\{L_{f,i}\}$, $H=\max_{i\in[N]}\{H_{i}\}$, and $\sigma=\max_{i\in[N]}\{\sigma_{i}\}$.
\subsection{Convergence rate analysis}\label{sectionIVA}
We first prove that, under Mechanism~\ref{mechanism}, the incentive for a strategic agent in Algorithm~\ref{algorithm} to deviate from truthful behavior diminishes to zero. We then analyze the convergence rates of Algorithm~\ref{algorithm} in the presence of strategic behaviors, for strongly convex and general convex objective functions, respectively.
\begin{lemma}\label{ML1}
	{Under Assumptions~\ref{A1} and~\ref{A2}, for any $i\in[N]$, $\delta>0$, and $t\geq 0$, if we set $C_{t}= \frac{4L_{R}\sqrt{6d_{t+1\text{\tiny$\to$}T+1}}}{\min\{\deg(i)\}\lambda_{t}\kappa_{t}\delta}$ with $\lambda_{t}=\frac{\lambda_{0}}{(t+1)^{v}}$, $\kappa_{t}=\frac{1}{(t+1)^{r}}$, $v\in(\frac{1}{2},\frac{2}{3})$, $r\in(1-v,v)$, and $d_{t\text{\tiny$\to$}T}=e^{\frac{20H^2\lambda_{0}^2}{(1-\rho)(2v-1)}}(t^{1-2v}-T^{1-2v})$, then the optimal action for agent $i$ in Algorithm~\ref{algorithm} is $\kappa_{t}\delta$-truthful, whenever all neighbors of agent $i$ are truthful. That is, for all $t\geq 0$ and any $\delta>0$ and $i\in[N]$, the following inequality holds:}
	\begin{equation}
		\mathbb{E}[\|\alpha_{i,t}(g_{i}(\theta_{i,t}))-g_{i}(\theta)\|]\leq \kappa_{t}\delta.\label{ML1result}
	\end{equation}
	Moreover, as the number of iterations tends to infinity, each agent's incentive to deviate from truthful behavior diminishes to zero, i.e., $\lim_{t\rightarrow\infty}\mathbb{E}[\|\alpha_{i,t}(g_{i}(\theta_{i,t}))-g_{i}(\theta_{i,t})\|]=0$.
\end{lemma}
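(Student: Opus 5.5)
We compare, at a fixed iteration $t$, the marginal reward agent $i$ can obtain by deviating from the truthful gradient with the marginal payment this deviation triggers under Mechanism~\ref{mechanism}. All neighbors are truthful and all future actions of agent $i$ are held fixed. We then show that any action with $\mathbb{E}[\|\alpha_{i,t}(g_i(\theta_{i,t}))-g_i(\theta_{i,t})\|]>\kappa_t\delta$ is strictly dominated by the truthful action. Write the deviation as $e_{i,t}=m_{i,t}-g_i(\theta_{i,t})=(a_{i}-1)g_i(\theta_{i,t})+b_i\xi_i$. By the update in Algorithm~\ref{algorithm}, $\theta_{i,t+1}$ shifts by exactly $-\lambda_t e_{i,t}$ relative to the truthful update.

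\textbf{Step 1 (reward sensitivity).} We bound how much the perturbation $-\lambda_t e_{i,t}$ at time $t$ can move the final iterate $\theta_{i,T+1}$. We propagate the perturbation through the linear-plus-gradient dynamics $\boldsymbol\theta_{s+1}=(W\otimes I_n)\boldsymbol\theta_s-\lambda_s\boldsymbol g(\boldsymbol\theta_s)$. The $H$-Lipschitz gradients (Assumption~\ref{A1}(ii)) and $\|W\|\le1$ (Assumption~\ref{A2}) give a Grönwall-type product $\prod_{s}(1+c H^2\lambda_s^2/(1-\rho))$ on the squared deviation, after a Young split with parameter $1-\rho$. Since $\sum_s\lambda_s^2<\infty$ for $v>1/2$, the product is bounded by $e^{20H^2\lambda_0^2/((1-\rho)(2v-1))}$. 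Collecting the accumulated terms over $s\in[t+1,T+1]$, which sum like $\sum_s s^{-2v}\lesssim t^{1-2v}-T^{1-2v}$, produces the factor $d_{t+1\to T+1}$. With $R_i$ being $L_R$-Lipschitz, the reward gain is then at most of order $L_R\sqrt{6 d_{t+1\to T+1}}\,\lambda_t\,\mathbb{E}\|e_{i,t}\|$. I expect the main obstacle here: the dependence on $\lambda_t\|e_{i,t}\|$ must be linear and must not involve later deviations. I would handle this by conditioning on the fixed future actions and using a coupling of the two trajectories with shared randomness.

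\textbf{Step 2 (payment sensitivity).} The perturbation enters $\Delta_{\theta_i,t}=\|\theta_{i,t+1}-2\theta_{i,t}+\theta_{i,t-1}\|^2$ as $\|u_t-\lambda_t e_{i,t}\|^2$, where $u_t$ is the truthful second difference. Because every neighbor's $\Delta_{\theta_j,t}$ at time $t$ does not depend on $m_{i,t}$, and the pairwise payment $C_t(\Delta_{\theta_i,t}-\Delta_{\theta_j,t})$ is signed (budget-balanced in both branches), the total payment $P_{i,t}$ increases by $\deg(i)C_t(\|u_t-\lambda_te_{i,t}\|^2-\|u_t\|^2)$. Taking expectations, using the zero-mean noise and the sign-indefiniteness of $u_t$, I would lower bound this increase by $\deg(i)C_t\lambda_t^2\mathbb{E}\|e_{i,t}\|^2$ up to a cross term. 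I would then use Jensen, $(\mathbb{E}\|e\|)^2\le\mathbb{E}\|e\|^2$. This is the point I am least sure about; if the cross term cannot be removed in expectation, I would absorb it into the constant $4$ in $C_t$.

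\textbf{Step 3 (comparison).} The net utility change from deviating is at most
$L_R\sqrt{6d}\,\lambda_t x-\min\deg(i)\,C_t\lambda_t^2x^2/4$, where $x=\mathbb{E}\|e_{i,t}\|$. Substituting the prescribed $C_t$ gives $L_R\sqrt{6d}\,\lambda_t x\,(1-x/(\kappa_t\delta))$, which is negative whenever $x>\kappa_t\delta$. Hence the optimal action satisfies \eqref{ML1result}. Since $\kappa_t=(t+1)^{-r}\to0$, the limit statement follows. The constraints $v\in(\tfrac12,\tfrac23)$ and $r\in(1-v,v)$ are not needed for this lemma beyond $v>\tfrac12$; they are used later to keep the total payments summable.
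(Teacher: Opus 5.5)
Your Steps 1 and 3 follow the paper's structure: a one-step sensitivity bound for $\theta_{i,T+1}$, then a downward-opening quadratic in the size of the deviation. Step 2, however, has a genuine gap, and it is exactly where the paper does its main work.

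With $u_t$ the truthful second difference, the payment increase is $\deg(i)C_t\bigl(\lambda_t^2\|e_{i,t}\|^2-2\lambda_t\langle u_t,e_{i,t}\rangle\bigr)$. The cross term does not vanish in expectation. $u_t$ is built from $\theta_{j,t}$, $\theta_{i,t}$, $\theta_{i,t-1}$ and $g_i(\theta_{i,t})$, all of which agent $i$ knows before it chooses $m_{i,t}$. Meanwhile the part $(a_{i,t}-1)g_i(\theta_{i,t})$ of $e_{i,t}$ is neither zero-mean nor independent of $u_t$, which itself contains $-\lambda_t g_i(\theta_{i,t})$. So the agent can pick $a_{i,t}$ to make $\langle u_t,e_{i,t}\rangle>0$, shrinking its own $\Delta_{\theta_i,t}$ and collecting payments. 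Only the $b_{i,t}\xi_{i,t}$ part averages out, and the sign-indefiniteness of $u_t$ is irrelevant because the agent sees its realization.

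Nor can you absorb the cross term into the constant in $C_t$, since it carries the same factor $C_t$ as the quadratic term. Write $d=d_{t+1\text{\tiny$\to$}T+1}$, $y=\sqrt{\mathbb{E}\|e_{i,t}\|^2}$ and $s=\sqrt{\mathbb{E}\|u_t\|^2}$. After Cauchy--Schwarz, the net gain is at most $L_R\sqrt{6d}\,\lambda_t y-\deg(i)C_t\lambda_t\bigl(\lambda_t y^2-2sy\bigr)$. Its positive root is $\frac{L_R\sqrt{6d}}{\deg(i)C_t\lambda_t}+\frac{2s}{\lambda_t}$, and enlarging $C_t$ shrinks only the first summand.

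The missing idea is the paper's separate estimate that the truthful second difference is of higher order than $\lambda_t$. The paper uses $\|u_t-\lambda_te_{i,t}\|^2-\|u_t\|^2\ge\frac12\lambda_t^2\|e_{i,t}\|^2-2\|u_t\|^2$ and then proves $2\mathbb{E}\|u_t\|^2\le c_8\lambda_t^2\lambda_{t-1}^2$. It does this by splitting $u_t$ into two pieces:
\begin{itemize}
\item $\lambda_tg_i(\theta_{i,t})-\lambda_{t-1}g_i(\theta_{i,t-1})$, controlled via $(\lambda_t-\lambda_{t-1})^2$ and $\mathbb{E}\|\theta_{i,t}-\theta_{i,t-1}\|^2$;
\item a weighted combination of the differenced consensus errors $\Lambda_{i,t}=\theta_{i,t}-\bar\theta_t-(\theta_{i,t-1}-\bar\theta_{t-1})$, which satisfy a contraction recursion.
\end{itemize}
This moves the threshold to $\frac12\kappa_t\delta+\sqrt{c_8}\lambda_{t-1}$. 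The extra term is made at most $\frac12\kappa_t\delta$ for all $t$ precisely because $r<v$, together with a small $\lambda_0$. So your closing remark is wrong: $r<v$ is exactly what lets the tolerance $\kappa_t\delta$ dominate the floor $\|u_t\|/\lambda_t$.

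Be aware that this estimate is delicate. If $g_i(\theta_{i,t})$ uses a fresh independent sample, then $\mathbb{E}\|u_t\|^2\ge\lambda_t^2\,\mathbb{E}\|g_i(\theta_{i,t})-\nabla f_i(\theta_{i,t})\|^2$. The paper's bound therefore relies on applying the $H$-Lipschitz property of $g_i$ to $g_i(\theta_{i,t})-g_i(\theta_{i,t-1})$ across iterations, and you would have to adopt that reading of the assumption.

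A smaller point in Step 1: Lipschitz gradients and $\|W\|\le1$ alone give only the divergent product $\prod_s(1+H\lambda_s)^2$. The $\lambda_s^2$ Gr\"onwall factor needs convexity of $f_i$, which lets you discard the first-order term $\langle\theta_{i,s}-\theta'_{i,s},\nabla f_i(\theta_{i,s})-\nabla f_i(\theta'_{i,s})\rangle\ge0$, as in the paper's sensitivity lemma.
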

Lemma~\ref{ML1} proves that when the neighbors of agent $i$ are truthful, the incentive for agent $i$ to deviate from truthful behavior converges to zero. This stands in sharp contrast to existing payment mechanisms in~\cite{honestmain,truthfulmain,truthfulmain2} for federated learning, which guarantees only a bounded---yet non-diminishing---incentive for untruthful behavior at each iteration, thus leaving agents with a persistent motive to act untruthfully. Furthermore, Lemma~\ref{ML1} implies that by choosing an arbitrarily small $\delta>0$, we can ensure that the optimal action of each agent $i$ is arbitrarily close to being fully truthful at every iteration. It is worth noting that since the differences in model-parameter increments in Mechanism~\ref{mechanism} diminish to zero, we can ensure $\lim_{t\rightarrow\infty}\mathbb{E}[P_{i,t}]=0$ (as shown in Corollary~\ref{plimit} in Appendix E). This guarantees that no payment is required from agent $i$ when it behaves truthfully as the number of iterations tends to infinity.

\begin{theorem}[Convergence rate]\label{MT1}
We denote $\theta^{*}$ as a solution to the problem in~\eqref{primal}. Under our Mechanism~\ref{mechanism} and the conditions in Lemma~\ref{ML1}, for any $i\in[N]$, $\delta>0$, and $T\geq0$, the following results hold for Algorithm~\ref{algorithm} in the presence of strategic behaviors: 

(i) if $f_{i}(\theta)$ is $\mu$-strongly convex (not necessarily Lipschitz continuous), then we have
\begin{equation}
\mathbb{E}[\|\theta_{i,T}-\theta^{*}\|^2]\leq \mathcal{O}\left(\frac{H^{2}(\sigma^2+\delta^{2})}{\mu(1-\rho)^2(T+1)^{v}}\right);\nonumber
\end{equation}

(ii) if $f_{i}(\theta)$ is general convex, then we have
\begin{equation}
\frac{1}{T+1}\sum_{t=0}^{T}\mathbb{E}[F(\theta_{i,t})-F(\theta^{*})]\leq \mathcal{O}\left(\frac{H^{2}(\sigma^{2}+L_{f}^2+\delta^2)}{(1-\rho)^2(T+1)^{1-v}}\right).\nonumber
\end{equation}
\end{theorem}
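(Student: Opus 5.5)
The plan is to treat the strategic deviation as a bounded, decaying perturbation of honest distributed SGD, and then run the standard two-quantity Lyapunov analysis on the average iterate and the consensus error. By Lemma~\ref{ML1}, under Mechanism~\ref{mechanism} each agent's optimal action satisfies $\mathbb{E}[\|m_{i,t}-g_i(\theta_{i,t})\|]\le \kappa_t\delta$. Since $m_{i,t}=a_ig_i+b_i\xi_i$ and $\xi_i$ has bounded variance, the natural setting is the second-moment version $\mathbb{E}[\|e_{i,t}\|^2]\lesssim \kappa_t^2\delta^2$ for the manipulation error $e_{i,t}\triangleq m_{i,t}-g_i(\theta_{i,t})$. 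Deriving this from the first-moment bound given the affine structure of $\mathcal{A}_i$ needs a separate argument, and I would state it explicitly as the assumed form of the deviation. Algorithm~\ref{algorithm} then becomes $\boldsymbol{\theta}_{t+1}=(W\otimes I_n)\boldsymbol{\theta}_t-\lambda_t(\boldsymbol{g}_t+\boldsymbol{e}_t)$. Because $W$ is symmetric and doubly stochastic, the average obeys $\bar\theta_{t+1}=\bar\theta_t-\lambda_t(\bar g_t+\bar e_t)$.

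\textbf{Step 1 (consensus error).} Subtract the average and use Assumption~\ref{A2}, as in \eqref{l213}--\eqref{l216} of Example~\ref{example2}. Young's inequality gives
\[
\mathbb{E}\|\boldsymbol{\theta}_{t+1}-\boldsymbol{1}_N\otimes\bar\theta_{t+1}\|^2\le \tfrac{1+\rho}{2}\,\mathbb{E}\|\boldsymbol{\theta}_t-\boldsymbol{1}_N\otimes\bar\theta_t\|^2+\tfrac{c\lambda_t^2}{1-\rho}\big(\mathbb{E}\|\boldsymbol{g}_t\|^2+N\kappa_t^2\delta^2\big).
\]
The gradient norm is controlled as follows. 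In case (ii), Lipschitz continuity of $f_i$ with constant $L_f$ plus variance $\sigma^2$ gives $\mathbb{E}\|g_i\|^2\le L_f^2+\sigma^2$. In case (i), I would instead use $H$-Lipschitz gradients, $\|\nabla f_i(\theta_{i,t})\|\le H\|\theta_{i,t}-\bar\theta_t\|+H\|\bar\theta_t-\theta^*\|+\|\nabla f_i(\theta^*)\|$, which couples the two error quantities. A discrete Gr\"onwall/sequence lemma (the Lemma~5 of \cite{JDPziji} already used in Example~\ref{example2}) then gives consensus error $\mathcal{O}(\lambda_t^2(\sigma^2+L_f^2+\delta^2)/(1-\rho)^2)$.

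\textbf{Step 2 (optimality gap).} Expand $\|\bar\theta_{t+1}-\theta^*\|^2$ and take conditional expectations. Unbiasedness removes the cross term with the sampling noise. The cross term with $\bar e_t$ is bounded by Young's inequality as $\lambda_t\kappa_t\|\bar\theta_t-\theta^*\|^2\cdot\tfrac{\mu}{4}$ plus $\lambda_t\kappa_t\delta^2/\mu$. The remaining terms are handled separately in the two cases.

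In case (i), strong convexity and $H$-smoothness give the contraction $(1-\mu\lambda_t)$ up to an error of order $\lambda_t H^2\cdot$consensus$/\mu$. Combining with Step 1 in a weighted Lyapunov function $V_t=\mathbb{E}\|\bar\theta_t-\theta^*\|^2+\tfrac{c H^2\lambda_t}{\mu(1-\rho)}\mathbb{E}\|\boldsymbol{\theta}_t-\boldsymbol 1\otimes\bar\theta_t\|^2$ yields $V_{t+1}\le(1-\tfrac{\mu\lambda_t}{2})V_t+c'\lambda_t^2(\sigma^2+\delta^2)H^2/(1-\rho)^2$. The same sequence lemma then gives $V_t=\mathcal{O}(\lambda_t)$, i.e.\ rate $(T+1)^{-v}$. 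Here the $\kappa_t$ terms are of lower order, because $\kappa_t\le\lambda_t/\lambda_0$ since $r>1-v$ and $v<2/3$. If needed, $r\ge v$ would be safer, and I will check the exponent bookkeeping at this point. Finally, $\|\theta_{i,T}-\theta^*\|^2\le2\|\theta_{i,T}-\bar\theta_T\|^2+2\|\bar\theta_T-\theta^*\|^2$.

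In case (ii), convexity gives $2\lambda_t\mathbb{E}[F(\bar\theta_t)-F(\theta^*)]\le \mathbb{E}\|\bar\theta_t-\theta^*\|^2-\mathbb{E}\|\bar\theta_{t+1}-\theta^*\|^2+\lambda_t^2(\sigma^2+L_f^2)+2\lambda_tL_f\,\text{consensus}^{1/2}+2\lambda_t\mathbb{E}[\langle\bar e_t,\bar\theta_t-\theta^*\rangle]$. I would telescope after dividing by $\lambda_T\le\lambda_t$, and then pass from $\bar\theta_t$ to $\theta_{i,t}$ via $|F(\theta_{i,t})-F(\bar\theta_t)|\le L_f\|\theta_{i,t}-\bar\theta_t\|$. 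This yields the rate $(T+1)^{-(1-v)}$ from the $\mathbb{E}\|\bar\theta_0-\theta^*\|^2/(\lambda_T(T+1))$ term together with the averaged $\lambda_t$ terms.

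\textbf{Main obstacle.} I expect the hardest step to be the manipulation cross term $\langle\bar e_t,\bar\theta_t-\theta^*\rangle$ in the general convex case. There is no contraction to absorb it, and the iterates are not a priori bounded. My first attempt is to bound it by $\kappa_t\delta\sqrt{\mathbb{E}\|\bar\theta_t-\theta^*\|^2}$ and establish uniform boundedness of $\mathbb{E}\|\bar\theta_t-\theta^*\|^2$ by induction using $\sum_t\lambda_t\kappa_t<\infty$, which holds since $v+r>1$. After that, $\sum_t\lambda_t\kappa_t\delta$ is a finite constant absorbed into the $\delta^2$-dependent constant, and it does not affect the rate.
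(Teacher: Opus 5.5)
Your overall plan matches the paper's: treat the manipulation as a decaying perturbation, couple a consensus recursion to an optimality-gap recursion, and close with the sequence lemma of \cite{JDPziji}. However, the exponent bookkeeping in part (i) is wrong, and as written it does not give the rate $(T+1)^{-v}$. With the Young split you describe,
\[
2\lambda_t\bigl|\mathbb{E}\langle\bar e_t,\bar\theta_t-\theta^*\rangle\bigr|\le\tfrac{\mu}{4}\lambda_t\kappa_t\,\mathbb{E}\|\bar\theta_t-\theta^*\|^2+\tfrac{c}{\mu}\lambda_t\kappa_t\delta^2,
\]
the additive error is of order $\lambda_t\kappa_t\delta^2$. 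Your justification $\kappa_t\le\lambda_t/\lambda_0$ is false: Lemma~\ref{ML1} requires $r<v$, so $\lambda_0\kappa_t/\lambda_t=(t+1)^{v-r}\to\infty$. What you actually obtain is $V_{t+1}\le(1-\frac{\mu\lambda_t}{2})V_t+c\lambda_t\kappa_t\delta^2+c'\lambda_t^2$. The sequence lemma then gives only $V_t=\mathcal{O}(\kappa_t)=\mathcal{O}((t+1)^{-r})$, which is strictly slower than claimed. The fallback $r\ge v$ is not available either, because $r<v$ is exactly what the proof of Lemma~\ref{ML1} uses to absorb $\sqrt{c_8}\lambda_{t-1}$ into $\frac12\kappa_t\delta$.

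The fix is to split without the $\kappa_t$ weight:
\[
2\lambda_t\bigl|\langle\bar e_t,\bar\theta_t-\theta^*\rangle\bigr|\le\tfrac{\mu\lambda_t}{4}\|\bar\theta_t-\theta^*\|^2+\tfrac{4\lambda_t}{\mu}\|\bar e_t\|^2 .
\]
The error then becomes $\frac{c}{\mu}\lambda_t\kappa_t^2\delta^2$. Now use $\kappa_t^2\le\lambda_t/\lambda_0$, which does hold because $2r>2(1-v)>\frac23>v$; this is what the reasons you cite actually establish. This is exactly the paper's route: the $\frac{4N\lambda_t\kappa_t^2\delta^2}{\mu}$ term in \eqref{D2L5} is absorbed via $c_9\lambda_t\kappa_t^2\delta^2\le\frac{c_9\delta^2}{\lambda_0}\lambda_t^2$ before the sequence lemma yields \eqref{D2L10}. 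You also do not need to postulate the second-moment form of the deviation. The proof of Lemma~\ref{ML1} (eq.~\eqref{C1L6}, with the choice of $\lambda_0$) already gives $\mathbb{E}\|(a_{i,t}-1)g_i\|^2\le\kappa_t^2\delta^2$ and $\mathbb{E}\|b_{i,t}\xi_{i,t}\|^2\le\kappa_t^2\delta^2$ separately, and these are what the paper uses in \eqref{D2L21}.

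Part (ii) is correct but follows a different route from the paper. The paper works with the stacked error $\mathbb{E}\|\boldsymbol{\theta}_t'-\boldsymbol{1}_N\otimes\theta^*\|^2$ and applies Young's inequality with a summable weight $\beta_t=(t+1)^{-u}$, $u>1$. This turns the manipulation term into a multiplicative factor $(1+2\beta_t)$ plus a summable $\frac{\lambda_t^2}{\beta_t}\kappa_t^2\delta^2$, and boundedness follows from $\prod_t(1+2\beta_t)<\infty$. You instead use Cauchy--Schwarz, getting $D_{t+1}\le D_t+2\lambda_t\kappa_t\delta\sqrt{D_t}+\gamma_t$ with $\sum_t\lambda_t\kappa_t<\infty$ and $\sum_t\gamma_t<\infty$. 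Both routes consume exactly the condition $v+r>1$. Run your induction in telescoped form on the running maximum $M_t=\max_{s\le t}\sqrt{D_s}$, which gives $M_t^2\le D_0+AM_t+B$. The stepwise version $\sqrt{D_{t+1}}\le\sqrt{D_t}+\lambda_t\kappa_t\delta+\sqrt{\gamma_t}$ fails, because $\sqrt{\gamma_t}\sim\lambda_t$ is not summable.
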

Theorem~\ref{MT1} proves that, even in the presence of strategic behaviors, our proposed Mechanism~\ref{mechanism} ensures convergence to an exact
optimal solution $\theta^{*}$ to the problem in~\eqref{primal} at rates $\mathcal{O}(T^{-v})$ and $\mathcal{O}(T^{-(1-v)})$ for strongly convex and general convex $f_{i}(\theta)$, respectively. It is broader than existing server-assisted truthfulness results in~\cite{JDP1,honestmain,truthfulmain,truthfulmain2,VCGauto} that focus solely on the strongly convex case. Moreover, our results are in stark contrast to existing JDP-based truthfulness results in~\cite{JDP1,JDP2,JDP3,JDP4,JDPziji} and incentive-based truthfulness results in~\cite{honestmain,truthfulmain,truthfulmain2}, all of which are subject to optimization errors.

\subsection{Incentive-compatibility analysis}\label{SectionIVB}
In addition to achieving accurate convergence, our fully distributed payment mechanism also simultaneously ensures that Algorithm~\ref{algorithm} is $\varepsilon$-incentive compatible. 
\begin{theorem}[Incentive compatibility]\label{MT2}
Under our fully distributed payment mechanism and the conditions in Lemma~\ref{ML1}, Algorithm~\ref{algorithm} is $\varepsilon$-incentive compatible, regardless of whether $F(\theta)$ is general convex or strongly convex. Namely, for any $i\in[N]$, $\delta>0$, and $T\geq 0$ (which includes the case of $T=\infty$), the following inequality always holds:
\begin{equation}
\mathbb{E}[U_{i,0\text{\tiny$\to$}T}^{\mathcal{M}_{p}}(\boldsymbol{\alpha}_{i},\boldsymbol{h}_{-i})-U_{i,0\text{\tiny$\to$}T}^{\mathcal{M}_{p}}(\boldsymbol{h}_{i},\boldsymbol{h}_{-i})]\leq \varepsilon,\label{MT2result1}
\end{equation}
with $U_{i,0\text{\tiny$\to$}T}^{\mathcal{M}_{p}}(\boldsymbol{\alpha}_{i},\boldsymbol{h}_{-i})$ and~$U_{i,0\text{\tiny$\to$}T}^{\mathcal{M}_{p}}(\boldsymbol{h}_{i},\boldsymbol{h}_{-i})$ defined in~\eqref{netutility} and $\varepsilon$ given by $\varepsilon=\mathcal{O}\left(\frac{(v+r)L_{R}\delta}{v+r-1}\right)$.
\end{theorem}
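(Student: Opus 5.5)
We propose to bound the utility gap by splitting it into a reward part and a payment part, and to control the total deviation per iteration using Lemma~\ref{ML1}. Write $\theta_{i,T+1}$ for the final parameter when agent $i$ follows the truthful trajectory $\boldsymbol{h}_i$, and $\theta'_{i,T+1}$ for the final parameter when it follows an arbitrary trajectory $\boldsymbol{\alpha}_i$; in both cases all neighbors are truthful. By \eqref{netutility},
\begin{equation*}
\mathbb{E}[U_{i,0\text{\tiny$\to$}T}^{\mathcal{M}_{p}}(\boldsymbol{\alpha}_{i},\boldsymbol{h}_{-i})-U_{i,0\text{\tiny$\to$}T}^{\mathcal{M}_{p}}(\boldsymbol{h}_{i},\boldsymbol{h}_{-i})]
=\mathbb{E}[R_i(f_i(\theta'_{i,T+1}))-R_i(f_i(\theta_{i,T+1}))]+\sum_{t=0}^{T}\mathbb{E}[P_{i,t}-P'_{i,t}].
\end{equation*}
A rational agent maximizes its net utility, so it suffices to bound this gap for the optimal deviation. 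By Lemma~\ref{ML1}, that deviation is $\kappa_t\delta$-truthful at every iteration $t$. Equivalently, the payment term $C_t$ was calibrated so that any deviation beyond $\kappa_t\delta$ costs more in payments than it can gain in reward. This means the payment contribution is either nonpositive or already dominated, and the remaining task is to bound the reward gain.

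For the reward part we use Assumption~\ref{A1}(i): the gain is at most $L_R\,\mathbb{E}[\|\theta'_{i,T+1}-\theta_{i,T+1}\|]$. We control this difference through the coupled dynamics of Algorithm~\ref{algorithm}. Let $e_t=\boldsymbol{\theta}'_t-\boldsymbol{\theta}_t$ be the stacked difference. Then
\begin{equation*}
e_{t+1}=(W\otimes I_n)e_t-\lambda_t\big(\boldsymbol{g}(\boldsymbol{\theta}'_t)-\boldsymbol{g}(\boldsymbol{\theta}_t)\big)-\lambda_t\boldsymbol{e}_i\otimes\big(\alpha_{i,t}(g_i(\theta'_{i,t}))-g_i(\theta'_{i,t})\big).
\end{equation*}
Using $\|W\|\le 1$ and the $H$-Lipschitz property gives
\begin{equation*}
\mathbb{E}\|e_{t+1}\|\le(1+H\lambda_t)\,\mathbb{E}\|e_t\|+\lambda_t\kappa_t\delta.
\end{equation*}
Unrolling from $t$ to $T$ produces products $\prod_s(1+H\lambda_s)$. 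These are what the constant $d_{t+1\text{\tiny$\to$}T+1}$ in Lemma~\ref{ML1} is designed to absorb, through the exponential factor $e^{20H^2\lambda_0^2/((1-\rho)(2v-1))}$ combined with a squared-norm version of the recursion that uses $\sum\lambda_s^2<\infty$ since $v>1/2$. Following the same bounding used in the proof of Lemma~\ref{ML1}, I expect each per-iteration deviation to contribute at most a constant multiple of $L_R\lambda_t\kappa_t\delta$ times a bounded factor. At worst it contributes $L_R\kappa_t\delta$ times a factor of order $(t+1)^{-(v+r-1)}\cdot(t+1)^{r}$.

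The final step is summation. The accumulated gain should be bounded by $c\,L_R\delta\sum_{t\ge0}(t+1)^{-(v+r)}$. Since $r>1-v$, we have $v+r>1$, so the series converges, with
\begin{equation*}
\sum_{t=0}^{\infty}(t+1)^{-(v+r)}\le 1+\int_1^\infty x^{-(v+r)}dx=\frac{v+r}{v+r-1}.
\end{equation*}
This yields $\varepsilon=\mathcal{O}\big(\frac{(v+r)L_R\delta}{v+r-1}\big)$ uniformly in $T$, including $T=\infty$. The payment differences are also summable, because $C_t$ grows only like $(t+1)^{v+r}$ while the increment differences are weighted by $\lambda_t^2\kappa_t$-type factors; Corollary~\ref{plimit} supports this. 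The bound holds regardless of convexity type because only Lipschitz properties are used.

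The main obstacle is the per-iteration accounting that converts the $\kappa_t\delta$-truthfulness of Lemma~\ref{ML1} into a gain of order $L_R\delta(t+1)^{-(v+r)}$, rather than merely $L_R\kappa_t\delta$. The latter, $\sum (t+1)^{-r}$, need not converge since $r<v<1$. My first attempt is to exploit the fact that the deviation enters the dynamics multiplied by $\lambda_t$, so the relevant quantity is $\lambda_t\kappa_t$. If necessary, I would make the recursion tight with a square-root (Jensen) argument on the squared-error recursion, using the factor $\sqrt{6d}$ as in $C_t$.
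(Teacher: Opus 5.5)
Your route differs from the paper's in where $\varepsilon$ comes from. The paper makes the reward gap vanish as $T\to\infty$. In the strongly convex case it does so at rate $\sqrt{\lambda_T}$ via Lemma~\ref{F1lemma1}; in the convex case it uses a limiting argument. It takes $\varepsilon$ from the summed payment term $\deg(i)\sum_t C_t\lambda_t^2\kappa_t^2\delta^2\propto L_R\delta\sum_t\lambda_t\kappa_t$, into which it absorbs the slack $c_8\deg(i)C_t\lambda_t^2\lambda_{t-1}^2$ of \eqref{C1L104}. You instead put $\varepsilon$ on the reward side.

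That reward bound can be made rigorous, but not with your first recursion and not ``using only Lipschitz properties''. Since $v<1$, $\prod_{s\le T}(1+H\lambda_s)\ge 1+H\sum_{s\le T}\lambda_s\to\infty$. What keeps the amplification constant of Lemma~\ref{sensitiveLemma1} bounded is monotonicity of $\nabla f_i$ (convexity), which removes the $\mathcal{O}(\lambda_t)$ cross term. Write $e_{t+1}=y_t-\lambda_t\boldsymbol{e}_i\otimes d_t$, with $y_t$ the truthful one-step image computed with common samples. Then \eqref{SL11}, whose left side is just $\mathbb{E}\|\boldsymbol{\Xi}_{t+1}\|^2$, gives $\mathbb{E}\|y_t\|^2\le(1+d_0\lambda_t^2)\mathbb{E}\|e_t\|^2$. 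By Minkowski, $\sqrt{\mathbb{E}\|e_{t+1}\|^2}\le\sqrt{1+d_0\lambda_t^2}\,\sqrt{\mathbb{E}\|e_t\|^2}+\lambda_t\sqrt{\mathbb{E}\|d_t\|^2}$. With $e_0=0$ and the $L^2$ bounds of \eqref{C1L6}, the reward gap is at most $2L_R e^{d_0\sum_s\lambda_s^2/2}\lambda_0\delta\frac{v+r}{v+r-1}$ for every $T$. This gives one argument covering both convexity cases and all finite $T$, with no need for the two trajectories to reach the same optimal value. The price is that the reward gap is itself of order $\varepsilon$ instead of vanishing.

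The genuine gap is on the payment side, which is where the paper's $\varepsilon$ actually comes from. The payment contribution is not nonpositive. Under the per-step accounting \eqref{C1L1}, set $s_t=\theta_{i,t+1}-2\theta_{i,t}+\theta_{i,t-1}$ and $\theta'_{i,t+1}=\theta_{i,t+1}-\lambda_t d_t$. Then $\mathbb{E}[P_{i,t}-P'_{i,t}]=C_t\deg(i)\,\mathbb{E}[\|s_t\|^2-\|s_t-\lambda_t d_t\|^2]$. This is positive whenever the deviation partly cancels $s_t$, e.g.\ a slight amplification when $\langle s_t,g_i(\theta_{i,t})\rangle>0$.

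Your stated reasons also do not give summability. Taken literally, $C_t\lambda_t^2\kappa_t\propto(t+1)^{-v}$ is not summable. Corollary~\ref{plimit} only gives $\mathbb{E}[P_{i,t}]\to0$, since its bound \eqref{P9} is of order $\lambda_t/\kappa_t$.

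The missing ingredient is that the mechanism penalizes second differences, whose truthful size is $\mathbb{E}\|s_t\|^2\le\frac{c_8}{2}\lambda_t^2\lambda_{t-1}^2$ by \eqref{C1L1061}. Hence $\mathbb{E}[P_{i,t}-P'_{i,t}]\le C_t\deg(i)\mathbb{E}\|s_t\|^2\le\frac{c_8}{2}\deg(i)C_t\lambda_t^2\lambda_{t-1}^2\propto(t+1)^{r-3v}/\delta$. This is summable since $3v-r>2v>1$. The end of the proof of Lemma~\ref{ML1} calibrates $\sqrt{c_8}\lambda_{t-1}\le\frac12\kappa_t\delta$. Under that calibration the sum is at most a constant times $\deg(i)\sum_tC_t\lambda_t^2\kappa_t^2\delta^2=\mathcal{O}\big(\frac{(v+r)L_R\delta}{v+r-1}\big)$. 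Without it, the sum is finite but scales like $\lambda_0^3/\delta$. Adding this term to your reward bound closes the argument.
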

Theorem~\ref{MT2} ensures that the cumulative gain from
an agent $i$'s untruthful behaviors in Algorithm~\ref{algorithm} remains finite, even when $T\rightarrow\infty$. This contrasts with existing truthfulness results for server-assisted
federated learning (e.g.,~\cite{truthfulmain,truthfulmain2}), where $\varepsilon$ explodes as the iteration proceeds, implying that truthfulness/incentive compatibility will eventually be lost.

Existing incentive-based truthfulness results for server-assisted federated learning in, e.g.,~\cite{honestmain,truthfulmain,truthfulmain2}, do not provide {\bf simultaneous} guarantees for both $\varepsilon$-incentive compatibility and accurate convergence. Specifically, the convergence analysis in~\cite{honestmain} requires two conditions: (i) $P(\exists t\leq T: \Pi_{W}(\theta_t^s-\gamma_t\bar{m}_t)\neq \theta_t^s-\gamma_t\bar{m}_t)\in\mathcal{O}(\frac{1}{NT})$ and (ii) the boundedness of $W$ (see Theorem 6.1 in~\cite{honestmain}), where $\theta_t^s$ is the model parameter computed by the centralized server, $W$ is a projection set, $\gamma_t$ is the stepsize, and $\bar{m}_t$ is the average (manipulated) gradients reported by all agents. Moreover, in the Appendix section ``Discussion on the projection assumptions", they state that Condition (i) can be guaranteed  when $W$ grows at a rate of $\Omega(T)$ for general strongly convex functions, which is at odds with the boundedness requirement on $W$ in Condition (ii) when $T$ tends to infinity (their convergence error $\mathcal{O}(\frac{1+M+\varepsilon^2}{NT})+\mathcal{O}(\frac{1}{T^2})$ in Theorem 6.1 is strictly larger than $0$ unless $T$ is allowed to approach infinity). Therefore, they did not provide a method for ensuring that both conditions hold simultaneously under general strongly convex objectives. A similar issue also exists in~\cite{truthfulmain} (see Theorem 9 and footnote 2 therein). Although the convergence analysis in~\cite{truthfulmain2} removes these two conditions, its definition  $G=\sum_{t=1}^T\gamma_{t}\sqrt{\mathcal{C}_t}$ in Theorem 5.1 implies that $\varepsilon=\mathcal{O}(G)$ is finite only when $T$ is finite, indicating that both its $\varepsilon$-incentive compatibility and convergence statements fail to hold in an infinite time horizon.

\begin{figure*}[!t]
\centering
\subfloat[FeMNIST dataset]{\includegraphics[width=1.73in]{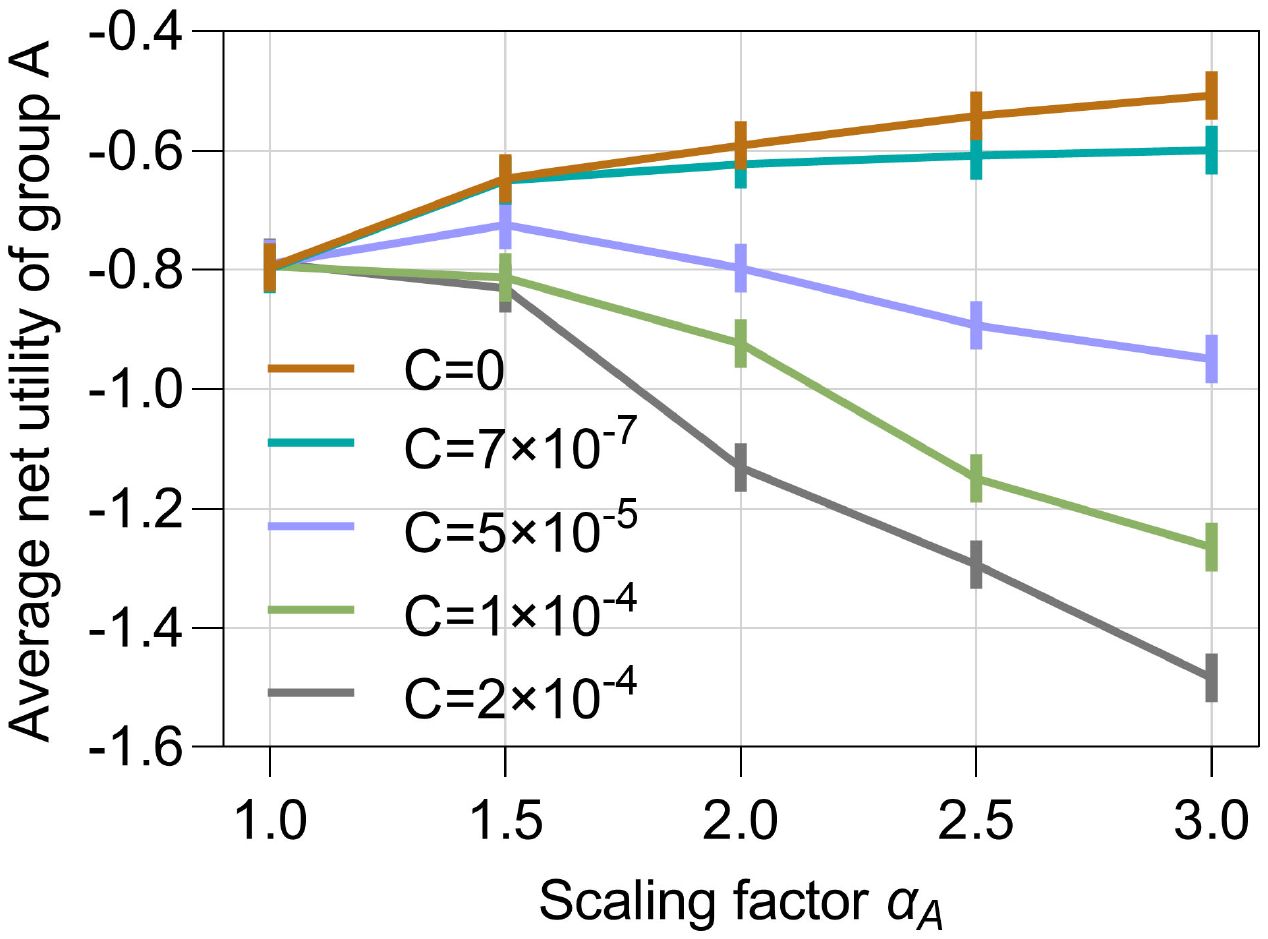}%
\label{fig11}}
\hspace{0.2em}
\subfloat[FeMNIST dataset]{\includegraphics[width=1.73in]{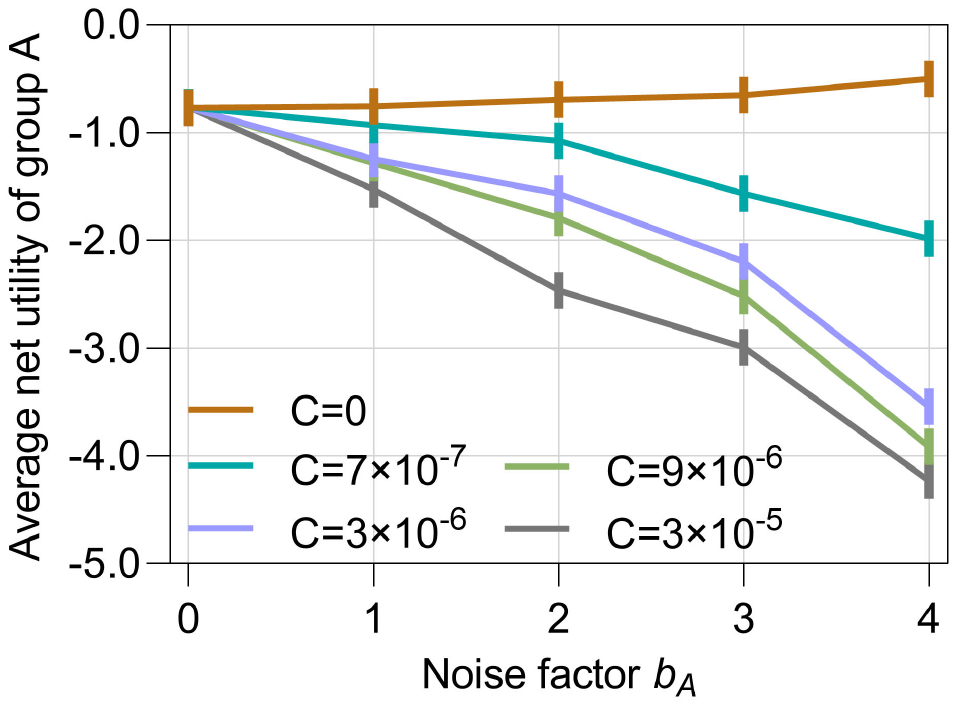}%
\label{fig12}}
\hspace{0.2em}
\subfloat[Shakespeare dataset]{\includegraphics[width=1.73in]{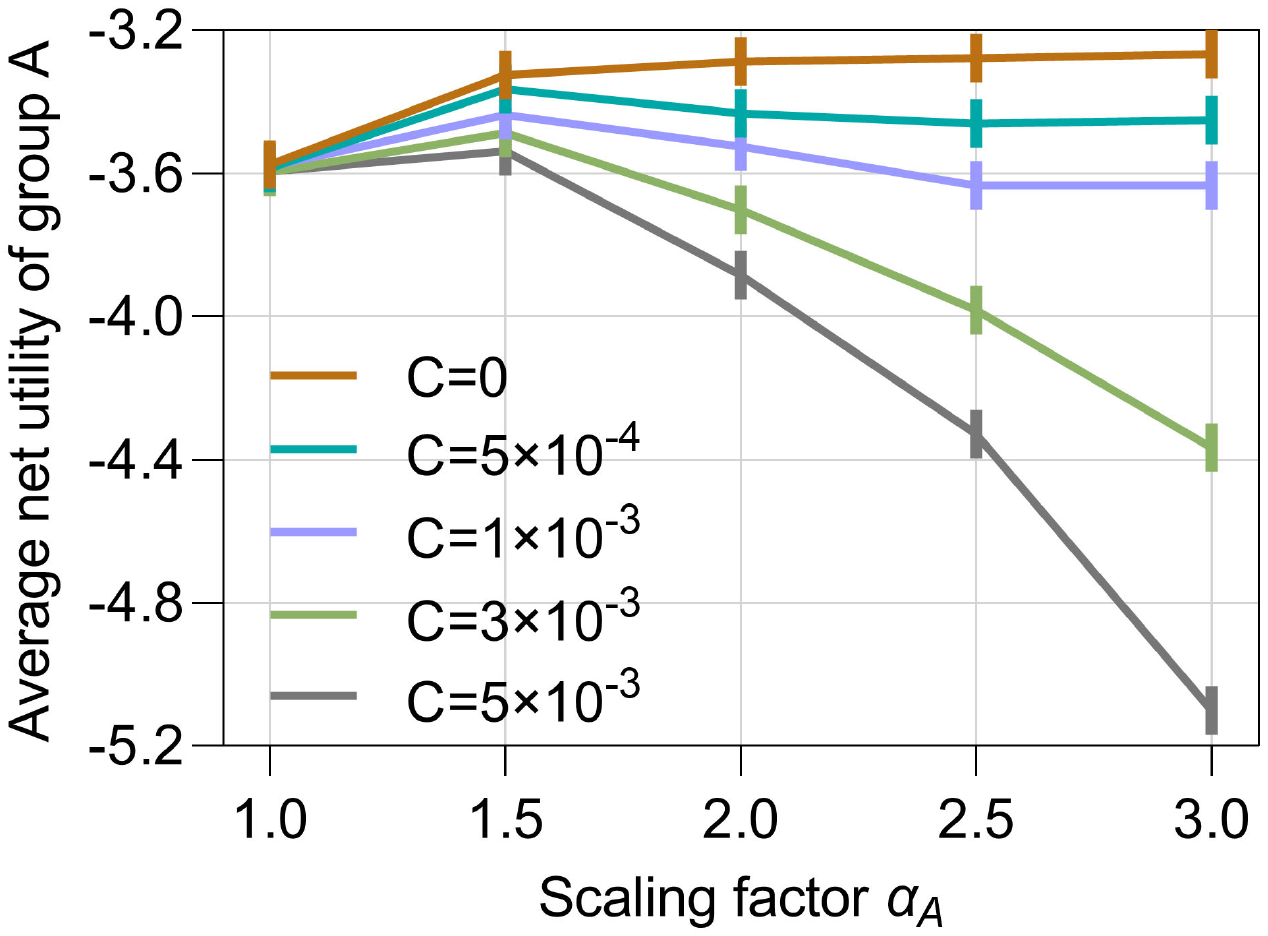}%
\label{fig13}}
\hspace{0.2em}
\subfloat[Shakespeare dataset]{\includegraphics[width=1.73in]{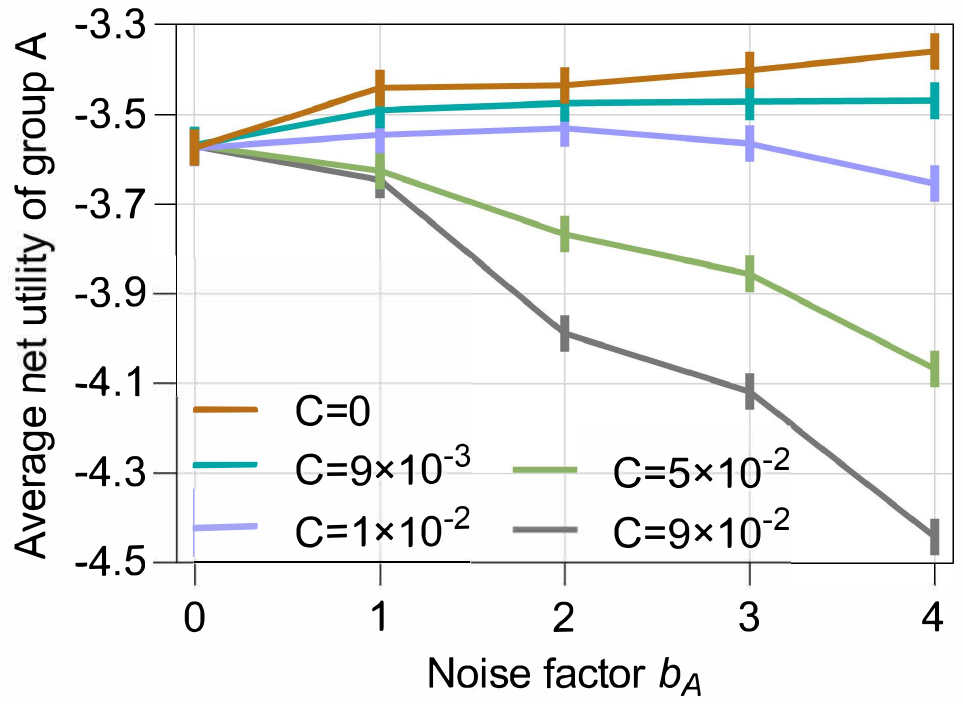}%
\label{fig14}}
\caption{Average net utilities of group-A agents under varying scaling factors $a_{A}$ (with $b_{A}=0$) and varying noise factors $b_{A}$ (with $a_{A}=1$) for different payment coefficient $C$, respectively. The error bars represent standard errors over $10$ runs.}
\label{Fig1}
\vspace{-0.5em}
\end{figure*}
\begin{figure*}[!t]
\centering
\subfloat[FeMNIST dataset]{\includegraphics[width=1.73in]{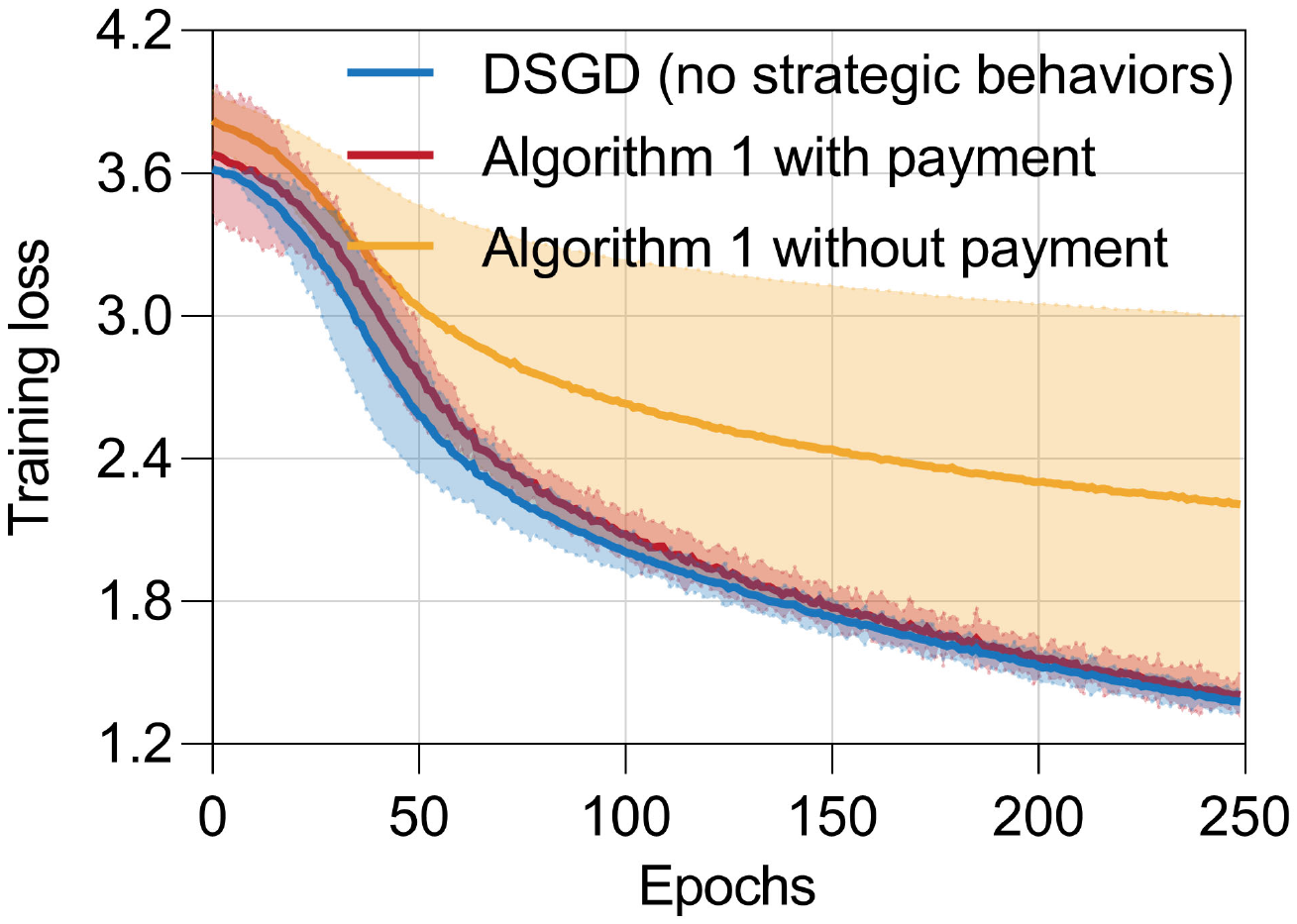}%
\label{fig21}}
\subfloat[FeMNIST dataset]{\includegraphics[width=1.73in]{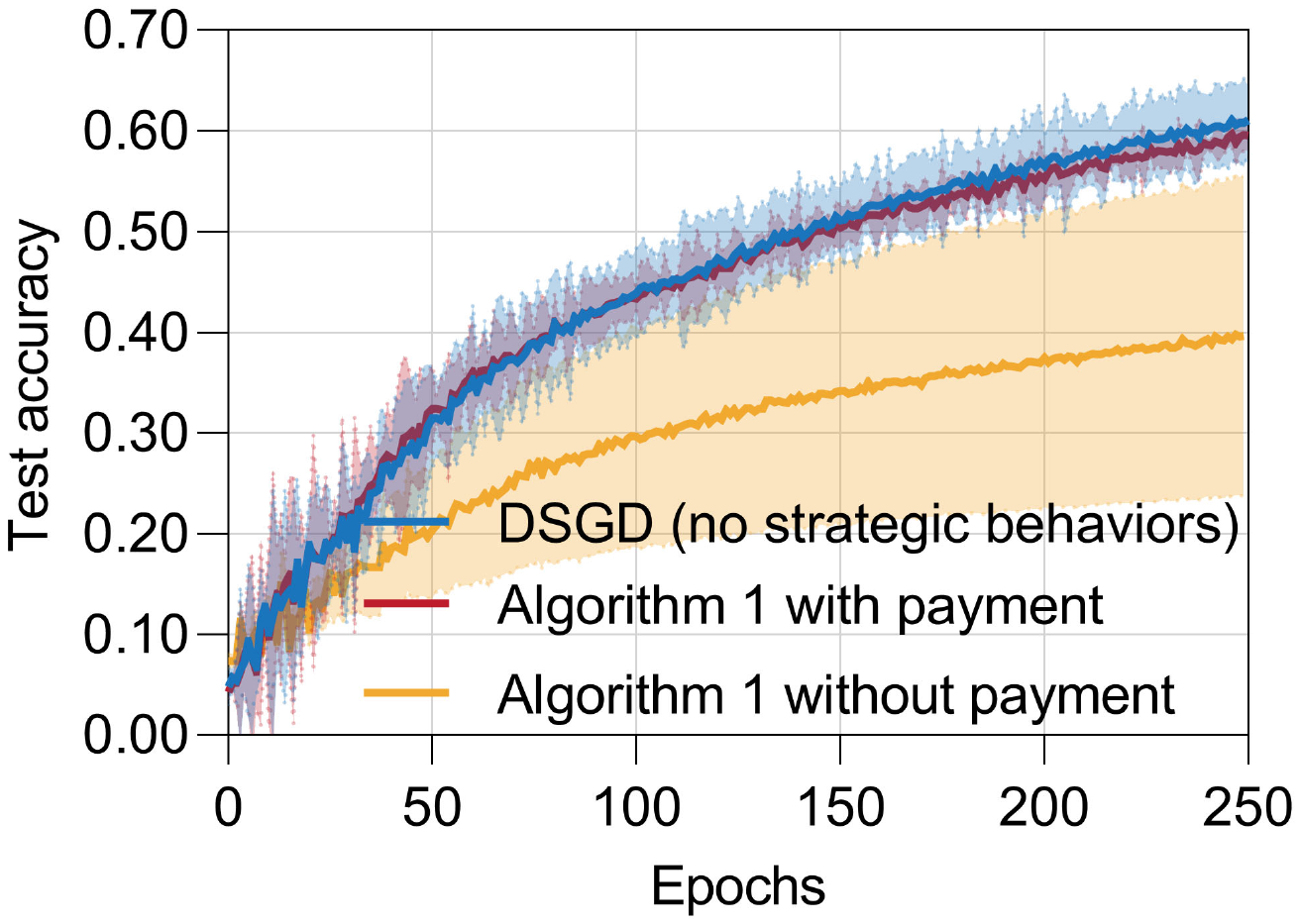}%
\label{fig22}}
\subfloat[Shakespeare dataset]{\includegraphics[width=1.73in]{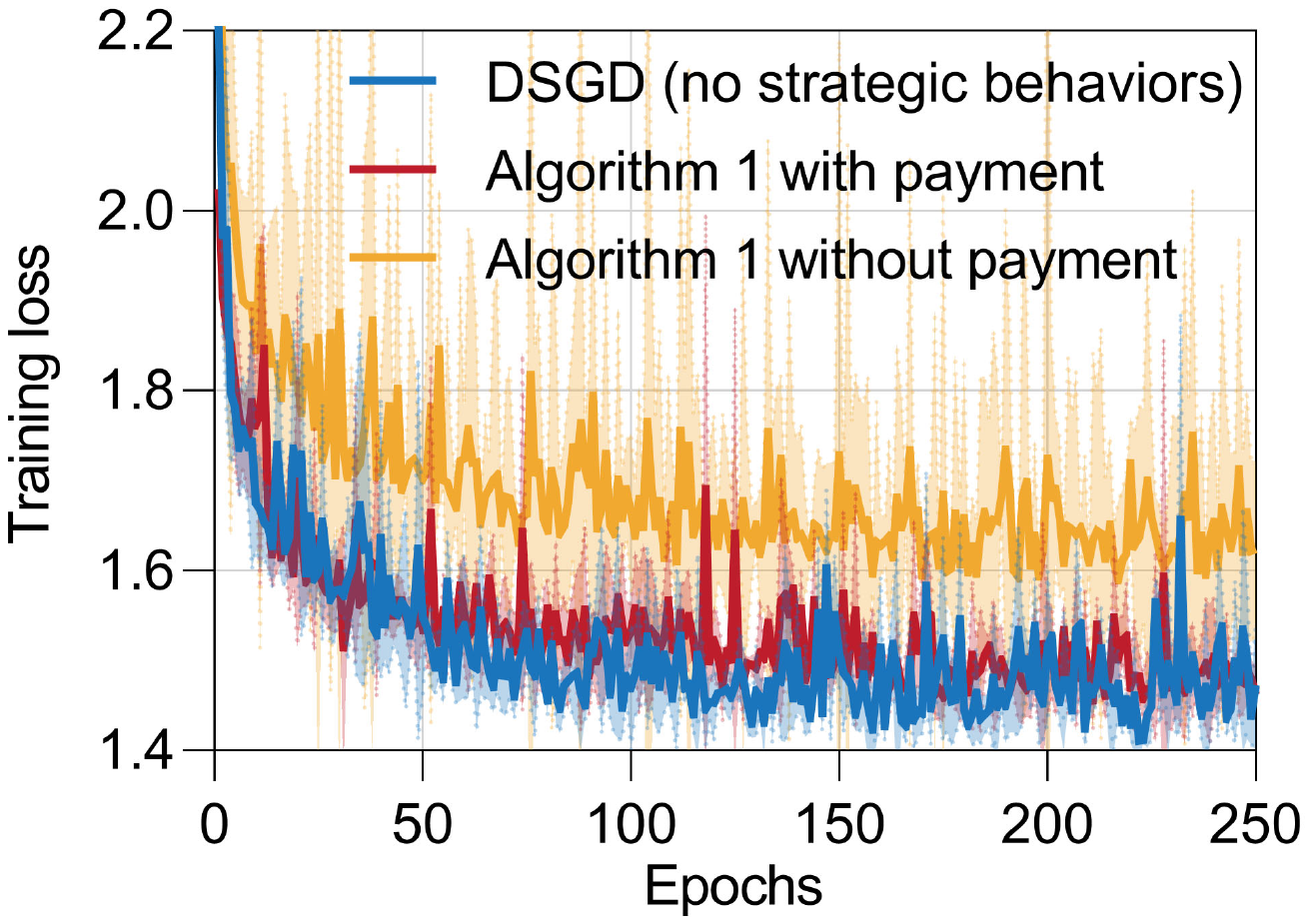}
\label{fig23}}
\subfloat[Shakespeare dataset]{\includegraphics[width=1.73in]{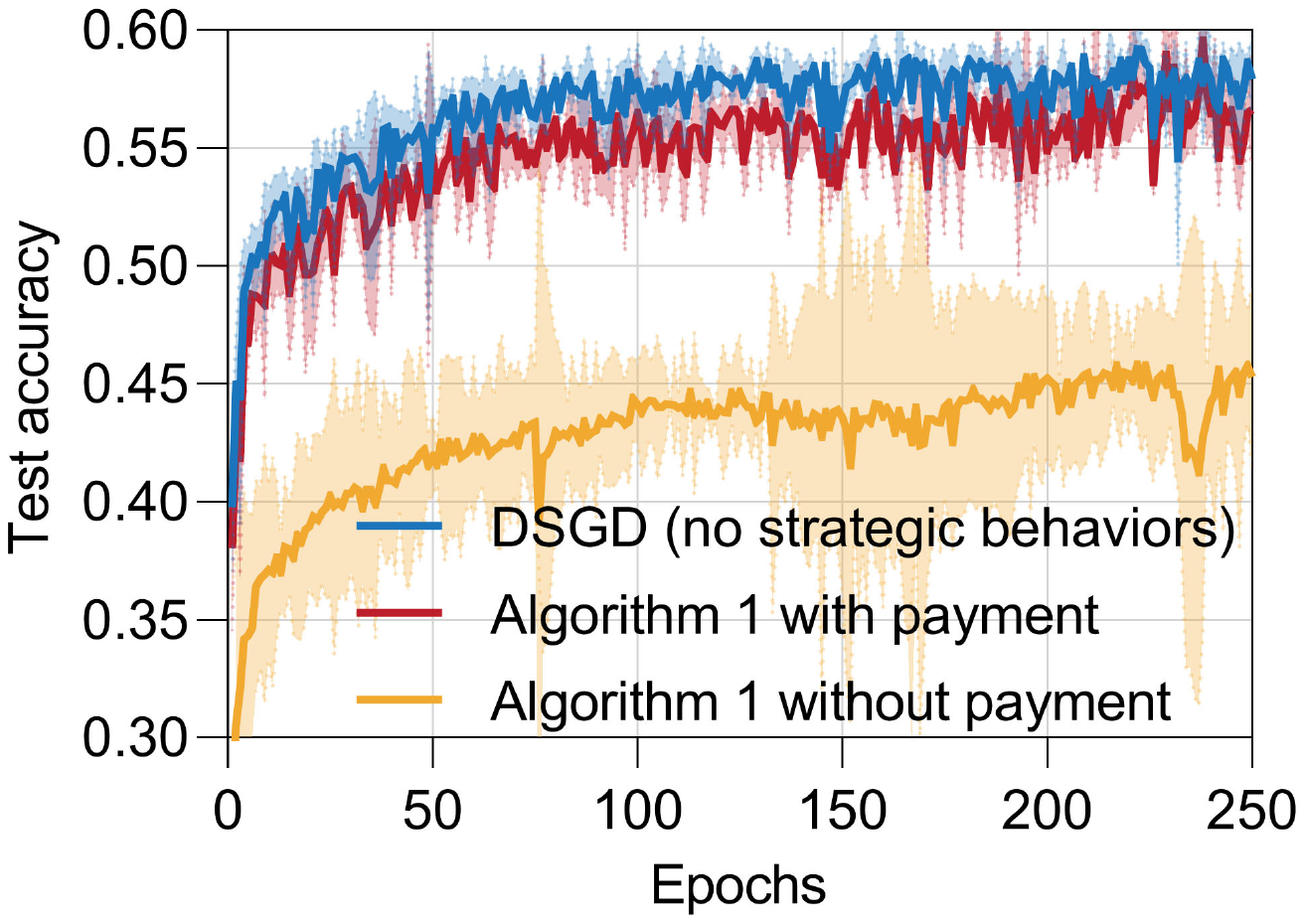}
\label{fig24}}
\caption{Comparison of training losses and test accuracies over epochs. The $95\%$ confidence intervals were computed from three independent runs with random seeds $42$, $126$, and $1010$.}
\label{Fig2}
\end{figure*}

\section{Experiment Evaluation}\label{SectionV}
We evaluate the effectiveness of our truthful mechanism using three representative distributed machine learning tasks: image classification on the FeMNIST dataset and next-character prediction on the Shakespeare dataset. The used datasets are from LEAF~\cite{Leaf}. { For all experiments, we used FedLab~\cite{Fedlab} to migrate the dataset pipeline from LEAF’s TensorFlow-based workflow to a PyTorch-based workflow.} In addition, each agent’s local dataset was split into $90\%$ for training and $10\%$ for testing. We considered five agents connected in a circle, where each agent communicates only with its two immediate neighbors. For the coupling matrix $W$, we set $w_{ij}=0.3$ if agents $i$ and $j$ are neighbors, and $w_{ij}=0$ otherwise. For each experiment, we distributed the data across agents using a Dirichlet distribution to ensure heterogeneity. In each experiment, we randomly divided the agents into two groups, A and B, containing two and three agents, respectively. Each agent $i$ in group A participates in Algorithm~\ref{algorithm} using manipulated gradients $m_{i,t}^{A}=a_{i,t}^{A}g_{i}(\theta_{i,t})+b_{i,t}^{A}\xi_{i,t}$, where each element of $\xi_{i,t}$ is drawn from a Laplace distribution with zero mean and unit variance. Agents in group B act truthfully. It is clear that when $a_{i,t}^{A}=1$ and $b_{i,t}^{A}=0$ hold for all $i\in[N]$ and $t\geq 0$, all agents act truthfully. For each experiment, we first compared the average net utility of agents in group A under varying scaling factors $a_{A}$ for different payment coefficients $C$. {Then, we conducted a similar comparison under different noise factors $b_{A}$ and payment coefficients $C$. Finally, we evaluated the training losses and test accuracies in three cases:} 1) Algorithm~\ref{algorithm} without manipulation (called DSGD (no strategic behaviors)), 2) Algorithm~\ref{algorithm} with manipulation under Mechanism~\ref{mechanism} (called Algorithm~\ref{algorithm} with payment), and 3) Algorithm~\ref{algorithm} with manipulation without Mechanism~\ref{mechanism} (called Algorithm~\ref{algorithm} without payment). In this comparison, each agent in group A strategically selects $(a_{i,t}^{A}, b_{i,t}^{A})$ to empirically maximize its net utility under a preset $C_{t}=\frac{10^{-6}\kappa_{t}^2}{\delta^2 (t+1)^{-2v}}$, where $\kappa_t$ and $\delta$ are specified for each experiment below.

\noindent \textbf{Image classification using the FeMNIST dataset.}
We conducted experiments using the FeMNIST dataset, which is a variant of EMNIST and contains $817,851$ grayscale handwritten characters of size $28\times 28$ across $62$ classes (digits $0$–$9$, uppercase letters A–Z, lowercase letters a–z). In this experiment, we trained a two-layer convolutional neural network following the LEAF architecture~\cite{Leaf}. The model consists of two convolutional layers with $32$ and $64$ filters (both using $5\times5$ kernels), each followed by ReLU activation and $2\times2$ max pooling. The extracted features are then flattened and fed into a fully connected layer with $2048$ hidden units, followed by a $62$-class output layer. 
We partitioned the training data among agents according to a Dirichlet distribution with parameter $0.5$. We used a batch size of $32$. We set the learning rate (stepsize) as $\lambda_{t}=0.1(t+1)^{-0.55}$ and 
the parameters in payment coefficient $C_{t}$ as
$\kappa_{t}=(t+1)^{-0.51}$ and $\delta=10^{-4}$. These parameter choices satisfy the conditions of Lemma~\ref{ML1}, Theorem~\ref{MT1}, and Theorem~\ref{MT2}.

\noindent \textbf{Next-character prediction on the Shakespeare dataset.} We trained a long short-term memory (LSTM) network~\cite{LSTM} using the Shakespeare dataset, which consists of lines from plays written by William Shakespeare and is formulated as a next-character prediction task over a vocabulary of $80$ characters. In this experiment, we used $3,982,028$ sequences from $1,080$ users for training. Each input character is first mapped to an $8$-dimensional embedding vector. The embedded sequence is then processed by a two-layer LSTM, with each layer comprising 256 hidden units and a dropout rate of 0.5 applied between layers. The output from the final LSTM layer at the last time step is passed through a fully connected layer to produce logits over the $80$-character vocabulary. This model setup enables the network to capture the sequential dependencies in Shakespearean text for effective character-level prediction.

In Fig.~\ref{Fig1}, the top orange lines ($C=0$) show that without our payment mechanism, increasing either $a_{A}$ or $b_{A}$ raises the average net utility of group-A agents. However, introducing payments ($C>0$) effectively reduces the gains from such strategic behaviors by agents in group A. Moreover, a large $C$ makes truthful participation the optimal action for group-A agents. Fig.~\ref{Fig2} shows that the strategic behavior of agents increases the training loss and decreases the test accuracy in conventional distributed learning algorithms, whereas our payment mechanism mitigates this degradation. This demonstrates the effectiveness of our approach in preserving the learning accuracy of Algorithm~\ref{algorithm} despite strategic manipulation.
\section{Conclusion}\label{SectionVI}
We propose the first fully distributed incentive mechanism for distributed stochastic gradient descent with strategic agents, without relying on any centralized server or aggregator. This represents a significant advance since all existing truthfulness approaches require the  assistance of a centralized server in computation or execution. Our payment mechanism ensures that the cumulative gain from strategic manipulation remains finite, even over an infinite time horizon---a property unattainable in most existing truthfulness results. Moreover, unlike most existing truthfulness results, our payment mechanism is budget-balanced and guarantees accurate convergence of distributed stochastic gradient descent under strategic manipulation. The results apply to both general convex and strongly convex objective functions, extending beyond existing work that focuses solely on the strongly convex case.
Experimental results on two distributed machine learning applications confirm the effectiveness of our approach.

\section*{Appendix}
Throughout the Appendix, we denote $\theta_{i,t+1}$ as the model parameter of agent $i$ generated by Algorithm~\ref{algorithm} using the gradient $g_{i}(\theta_{i,t})$ at iteration $t$, and $\theta_{i,t+1}^{\prime}$ as the model parameter of agent $i$ generated by Algorithm~\ref{algorithm} using a (manipulated) gradient $m_{i,t}=\alpha_{i,t}(g_{i}(\theta_{i,t}))$. We let $P_{i,t}$ denote the payment of agent $i$ at iteration $t$ when all agents act truthfully, and $P_{i,t}^{\prime}$ denote the payment of agent $i$ when agent $i$ deviates while all other agents remain truthful.
\subsection{Auxiliary lemmas}\label{AppedixA}
In this subsection, we introduce some auxiliary lemmas that will be used in our subsequent convergence analysis.
\begin{lemma}\label{lemma5}
Denoting $\eta_{t}$ as a nonnegative sequence, if there exist sequences $\beta_{1,t}=\frac{\beta_{1}}{(t+1)^{r_{1}}}$ and $\beta_{2,t}=\frac{\beta_{2}}{(t+1)^{r_{2}}}$ with some $1>r_{1}>0$, $r_{2}>r_{1}$, $\beta_{1}>0$, and $\beta_{2}>0$ such that $\eta_{t+1}\geq (1-\beta_{1,t})\eta_{t}+\beta_{2,t}$ holds, then we always have $\eta_{t}\geq \frac{\beta_{1}}{\beta_{2}}\min\{\eta_{0},\frac{\beta_{2}}{\beta_{1}}\}\frac{\beta_{2,t}}{\beta_{1,t}}.$	
\end{lemma}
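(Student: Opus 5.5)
The plan is a direct induction on $t$ with a target power law. First unpack the claimed bound. Write $s\triangleq r_{2}-r_{1}>0$ and $m\triangleq\min\{\eta_{0},\frac{\beta_{2}}{\beta_{1}}\}$. Since $\frac{\beta_{2,t}}{\beta_{1,t}}=\frac{\beta_{2}}{\beta_{1}}(t+1)^{-s}$, the statement is equivalent to
\begin{equation}
\eta_{t}\geq m\,(t+1)^{-s},\qquad t\geq 0.\nonumber
\end{equation}
I will prove this by induction.

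\textbf{Base case.} At $t=0$ we have $\eta_{0}\geq m$ by the definition of $m$.

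\textbf{Inductive step.} Assume $\eta_{t}\geq m(t+1)^{-s}$. I will use that the coefficient $1-\beta_{1,t}$ is nonnegative. This holds for all $t$ when $\beta_{1}\leq 1$, which is the regime in which the lemma is meant to be applied, namely stepsize-like sequences. I will state this as an implicit standing requirement, since with a negative coefficient a large $\eta_{t}$ would make the recursion useless as a lower bound. Given $1-\beta_{1,t}\geq 0$, the recursion and the inductive hypothesis give
\begin{equation}
\eta_{t+1}\geq m(t+1)^{-s}-m\beta_{1}(t+1)^{-r_{1}-s}+\beta_{2}(t+1)^{-r_{2}}.\nonumber
\end{equation}
The key step is the choice $m\leq\frac{\beta_{2}}{\beta_{1}}$. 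Because $r_{1}+s=r_{2}$, it yields $\beta_{2}(t+1)^{-r_{2}}\geq m\beta_{1}(t+1)^{-r_{1}-s}$. The last two terms therefore combine to something nonnegative, and $\eta_{t+1}\geq m(t+1)^{-s}$. Finally, $(t+1)^{-s}\geq(t+2)^{-s}$ because $s>0$, so $\eta_{t+1}\geq m(t+2)^{-s}$, which closes the induction.

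\textbf{Main obstacle.} The only delicate point is the sign of $1-\beta_{1,t}$. If one insists on covering $\beta_{1}>1$, I would handle the finitely many indices with $\beta_{1,t}>1$ separately: there the recursion only gives $\eta_{t+1}\geq\beta_{2,t}-(\beta_{1,t}-1)\eta_{t}$. That requires an upper bound on $\eta_{t}$ not available in the hypothesis, so I would instead restrict to $\beta_{1}\leq 1$. The hypothesis $r_{1}<1$ is not needed for this argument; only $r_{2}>r_{1}$ and $m\leq\beta_{2}/\beta_{1}$ are used.
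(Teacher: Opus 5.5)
Your proof is correct and is essentially the paper's argument: the same induction on $\eta_t\geq c_0(t+1)^{-(r_2-r_1)}$ with $c_0=\min\{\eta_0,\beta_2/\beta_1\}$. The paper closes the step with a mean-value-theorem comparison of $(t+1)^{-s}$ and $(t+2)^{-s}$, which is unnecessary; your direct use of $\beta_2\geq c_0\beta_1$ plus monotonicity suffices. You are also right to flag $1-\beta_{1,t}\geq 0$ (i.e., $\beta_1\leq 1$): the paper uses it silently, and it is genuinely needed. For $\beta_1>1$ and $\eta_0>\beta_2/(\beta_1-1)$, the choice $\eta_1=0$ satisfies the recursion but violates the conclusion.
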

\begin{proof}
We prove Lemma~\ref{lemma5} using mathematical induction. 

We define $c_{0}=\min\{\eta_{0},\frac{\beta_{2}}{\beta_{1}}\}$, which implies $\eta_{0}\geq c_{0}$ at initialization. Assuming $\eta_{t} \geq \frac{c_{0}}{(t+1)^{r_{2}-r_{1}}}$ at the $t$th iteration, we proceed to prove $\eta_{t+1} \geq \frac{c_{0}}{(t+2)^{r_{2}-r_{1}}}$ at the $(t+1)$th iteration. 

By using the relationship $\eta_{t+1}\geq (1-\beta_{1,t})\eta_{t}+\beta_{2,t}$, we have
\begin{equation}
\begin{aligned}
&\textstyle\eta_{t+1}\geq \frac{c_{0}}{(t+1)^{r_{2}-r_{1}}}-\frac{c_{0}\beta_{1}}{(t+1)^{r_{2}}}+\frac{\beta_{2}}{(t+1)^{r_{2}}}\\
&\textstyle\geq \frac{c_{0}}{(t+2)^{r_{2}-r_{1}}}+\left(\frac{c_{0}}{(t+1)^{r_{2}-r_{1}}}-\frac{c_{0}}{(t+2)^{r_{2}-r_{1}}}-\frac{c_{0}\beta_{1}-\beta_{2}}{(t+1)^{r_{2}}}\right).\label{AL1}
\end{aligned}
\end{equation}

Using the mean value theorem, we have $\frac{c_{0}}{(t+1)^{r_{2}-r_{1}}}-\frac{c_{0}}{(t+2)^{r_{2}-r_{1}}}=\frac{c_{0}(r_{2}-r_{1})}{\varsigma^{r_{2}-r_{1}+1}}> \frac{c_{0}(r_{2}-r_{1})}{(t+2)^{r_{2}-r_{1}+1}}$ with some $\varsigma\in(t+1,t+2)$, which, combined with $c_{0}\beta_{1}-\beta_{2}\leq 0$, leads to $\frac{c_{0}(r_{2}-r_{1})}{(t+2)^{r_{2}-r_{1}+1}}\geq  \frac{c_{0}\beta_{1}-\beta_{2}}{(t+1)^{r_{2}}}$. Hence, the inequality $\frac{c_{0}}{(t+1)^{r_{2}-r_{1}}}-\frac{c_{0}}{(t+2)^{r_{2}-r_{1}}}\geq \frac{c_{0}\beta_{1}-\beta_{2}}{(t+1)^{r_{2}}}$ holds for any $t\geq 0$. Further using~\eqref{AL1}, we arrive at $\eta_{t+1}\geq \frac{c_{0}}{(t+2)^{r_{2}-r_{1}}}$, which proves Lemma~\ref{lemma5}.
\end{proof}
\begin{lemma}\label{2Clemma3}
For a nonnegative sequence $\eta_{t}=\frac{\eta_{0}}{(t+1)^{r}}$ with $\eta_{0}>0$ and $r\in(0,2)$, the inequality $\sum_{k=0}^{t-1} \eta_{k}^2\gamma^{2(t-1-k)}\leq c\eta_{t}^{2}$ always holds for any $t\geq 1$ and $\gamma\in(0,1)$, where the constant $c$ is given by $c=\frac{4^{2(r+1)}}{(1-\gamma)(\ln(\sqrt{\gamma})e)^{4}}$.
\end{lemma}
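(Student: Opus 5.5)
The plan is to normalize by $\eta_t^2$ and reduce everything to a purely numerical bound. Since $\eta_k=\eta_0/(k+1)^r$, the claim is equivalent to
\begin{equation*}
S_t\triangleq\sum_{k=0}^{t-1}\Big(\frac{t+1}{k+1}\Big)^{2r}\gamma^{2(t-1-k)}\leq c,\quad \forall t\geq1.
\end{equation*}
The two factors pull in opposite directions. The polynomial ratio is large only when $k$ is small. The geometric factor is small exactly when $k$ is small, because then $t-1-k$ is large. This suggests splitting the index set at $k_0=\lfloor (t-1)/2\rfloor$.

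\textbf{Late indices ($k\geq k_0$).} Here $k+1\geq (t+1)/2$ up to rounding, so $(t+1)/(k+1)\leq 4$. Hence $((t+1)/(k+1))^{2r}\leq 4^{2r}$. The remaining sum $\sum_{k}\gamma^{2(t-1-k)}$ is a geometric series bounded by $1/(1-\gamma^2)\leq 1/(1-\gamma)$. This part therefore contributes at most $4^{2r}/(1-\gamma)$.

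\textbf{Early indices ($k<k_0$).} Use the crude bounds $((t+1)/(k+1))^{2r}\leq (t+1)^{2r}$ and $t-1-k\geq (t-1)/2$. The latter gives $\gamma^{2(t-1-k)}\leq \gamma^{t-1}=(\sqrt\gamma)^{2(t-1)}$. There are at most $t$ such terms, so this part is at most $(t+1)^{2r+1}(\sqrt\gamma)^{2(t-1)}$. Write $s=\sqrt\gamma\in(0,1)$ and $x=t+1$; the expression is then of the form $s^{-4}\,x^{a}s^{2x}$ with $a=2r+1<5$. Apply the elementary maximization $\sup_{x>0}x^{a}e^{-bx}=(a/(be))^{a}$ with $b=2|\ln s|$. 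This bounds the early part by a constant of order $\big(a/(2e|\ln\sqrt\gamma|)\big)^{a}$. Combined with the crude numerical factors, that constant is dominated by a quantity of the form $4^{2(r+1)}/(\ln(\sqrt\gamma)e)^4$ times $1/(1-\gamma)$.

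Adding the two parts yields $S_t\leq c$ with $c$ of the stated form. The case $t=1$, where the early part is empty, is trivial.

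The main obstacle is the bookkeeping needed to land exactly on the constant $c=\frac{4^{2(r+1)}}{(1-\gamma)(\ln(\sqrt\gamma)e)^4}$ rather than merely some finite constant depending on $r,\gamma$. In particular, the exponent $a=2r+1$ can exceed $4$, and the power $(\ln\sqrt\gamma\, e)^{-4}$ in $c$ does not scale like $|\ln\sqrt\gamma|^{-a}$ for general $a$. My first attempt would be to tighten the early-index count, for instance by absorbing the factor $t$ into the geometric decay separately. I would also use $r<2$ to cap the polynomial exponent at $4$ after pulling out a factor $4^{2}$, and use $|\ln\sqrt\gamma|\leq (1-\gamma)/\sqrt{\gamma}$-type comparisons to trade powers of $|\ln\sqrt\gamma|$ against $1/(1-\gamma)$. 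If the exact constant resists, the statement with a possibly different explicit $c(r,\gamma)$ still follows from the same splitting argument, and that is all the later analysis needs.
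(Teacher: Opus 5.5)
\textbf{The step that fails.} The problem is your last step, ``adding the two parts yields $S_t\le c$ with $c$ of the stated form.'' This is not a bookkeeping issue: with the stated $c$, the inequality is false for small $\gamma$.

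Your late block always contains the index $k=t-1$. That term contributes $((t+1)/t)^{2r}\ge 1$, and exactly $4^r$ at $t=1$. Meanwhile $c=\frac{4^{2(r+1)}}{(1-\gamma)(\ln(\sqrt\gamma)e)^4}\to 0$ as $\gamma\to 0$.

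For a concrete case, take $r=1$ and $\gamma=e^{-4}$. Then $c=256/((1-e^{-4})\,16e^4)\approx 0.30<1$. So $\sum_{k=0}^{t-1}\eta_k^2\gamma^{2(t-1-k)}\ge\eta_{t-1}^2>\eta_t^2>c\,\eta_t^2$ for every $t\ge1$.

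Your early-block constant carries the factor $s^{-4}=\gamma^{-2}$, so it also blows up as $\gamma\to0$. The domination you assert fails there as well.

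The paper's own proof has the same hole. It disposes of the $k=t-1$ term (and the case $t=1$) by writing ``$\eta_{t-1}^2\le c\,\eta_{t-1}^2$, given $c>1$.'' That compares against the wrong index, and nothing guarantees $c>1$.

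\textbf{What is true.} A two-piece bound holds, and the paper's treatment of the terms $k\le t-2$ is exactly the ``absorb $t$ into the geometric decay'' device you were reaching for.

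Apply $s^m\le 4/((e\ln s)^2m^2)$, with $s=\sqrt\gamma$ and $m=t-1-k$, to only half of the geometric weight. Then use three facts:
\begin{itemize}
\item $m^{-2}\le m^{-r}$, since $r<2$;
\item $(t-1-k)(k+1)\ge t-1$;
\item $1/(t-1)\le 4/(t+1)$ for $t\ge2$.
\end{itemize}
Together these give the uniform bound $\eta_k^2\gamma^{t-1-k}\le \frac{4^{2r+2}}{(e\ln\sqrt\gamma)^4}\eta_t^2$. Summing the leftover factor $\gamma^{t-1-k}$ produces $\gamma/(1-\gamma)$. This route avoids both the exponent $2r+1$ and the factor $\gamma^{-2}$ that your term count introduces.

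Adding the last term via $\eta_{t-1}^2\le 4^r\eta_t^2$ gives
\[
\sum_{k=0}^{t-1}\eta_k^2\gamma^{2(t-1-k)}\le(4^r+\gamma c)\,\eta_t^2 .
\]
This is at most $c\,\eta_t^2$ exactly when $(1-\gamma)c\ge 4^r$. That holds, for example, whenever $e|\ln\sqrt\gamma|\le 2$, which includes every $\gamma\ge 1/2$. It therefore covers the paper's only use of the lemma, with $\gamma=(1+\rho)/2$.

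So you need either that restriction on $\gamma$ or an enlarged constant such as $4^r+c$. Your fallback of ``a different explicit $c(r,\gamma)$'' is the right conclusion, but the statement as written cannot be proved.
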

\begin{proof}
By defining $c_{0}=4^{-1}(\ln(\sqrt{\gamma})e)^{2}$ and using Lemma 7 in~\cite{zijiGT}, we have
\begin{equation} \textstyle\eta_{k}\sqrt{\gamma}^{t-1-k}\leq\eta_{k}\frac{1}{c_{0}((t-1)-k)^{2}}= \frac{\eta_{0}}{c_{0}(k+1)^{r}((t-1)-k)^{2}}.\label{B3L1}
\end{equation}
For some real numbers $a,b,c,d>0$ satisfying $\frac{c}{d}<\frac{d}{b}$, the inequality $\frac{d}{b}<\frac{c+d}{a+b}<\frac{c}{a}$ always holds. Therefore, for any $t>0$ and $k\in[0,t-1)$, by setting $a=k$, $b=(t-1)-k$, $c=k$, and $d=1$, we have
\begin{equation}
\textstyle\frac{1}{((t-1)-k)^2}<\left(\frac{k+1}{t-1}\right)^2.\nonumber
\end{equation}

Combining~\eqref{B3L1} and the relation $\frac{1}{((t-1)-k)^2}<(\frac{k+1}{t-1})^{r}$ for any $k\in[0,t-1)$ and $r\in(0,2)$, we obtain
\begin{equation} \textstyle\eta_{k}\sqrt{\gamma}^{t-1-k}<\frac{\eta_{0}}{c_{0}(k+1)^{r}}\left(\frac{k+1}{t-1}\right)^{r}\leq\frac{4^{r}\eta_{0}}{c_{0}(t+1)^{r}}=\frac{4^{r}\eta_{t}}{c_{0}},\label{B3L2}
\end{equation}
where we have used the relationship $\frac{1}{t-1} \leq \frac{4}{t+1}$ for any $t>1$ in the second inequality.~\eqref{B3L2} further implies $\eta_{k}^2\gamma^{t-1-k}\leq 4^{2r}c_{0}^{-2}\eta_{t}^{2}$ for any $t>1$ and $k\in[0,t-1)$.

Given a constant $c>1$, the inequality $\eta_{0}^{2}\leq c\eta_{0}^{2}$ holds for $t=1$ and the inequality $\eta_{t-1}^2\leq c\eta_{t-1}^2$ holds for any $t>1$ and $k=t-1$, which, combined with~\eqref{B3L2}, leads to
\begin{equation}
\textstyle\sum_{k=0}^{t-1} \eta_{k}^2\gamma^{2(t-1-k)}\leq \sum_{k=0}^{t-1}\gamma^{t-1-k}\frac{4^{2r}\eta_{t}^{2}}{c_{0}^{2}}\leq c\eta_{t}^{2},\nonumber
\end{equation}
which completes the proof of Lemma~\ref{2Clemma3}.
\end{proof}
\begin{lemma}\label{sensitiveLemma1}
We denote $\theta_{i,T}$ and $\theta_{i,T}^{\prime}$ as the model parameters generated by the standard distributed SGD at iteration $T-1$ from two different initializations $\theta_{i,t}$ and $\theta_{i,t}^{\prime}$, respectively. Under the conditions in Lemma~\ref{ML1}, the following result holds for the standard distributed SGD:
\begin{equation}
\begin{aligned}
&\textstyle\sum_{i=1}^{N}\mathbb{E}[\|\theta_{i,T}-\theta_{i,T}^{\prime}\|^2]\leq 2d_{t\text{\tiny$\to$}T}\left(N\mathbb{E}[\|\bar{\theta}_{t}-\bar{\theta}_{t}^{\prime}\|^2]\right.\\
&\left.\textstyle+\sum_{i=1}^{N}\mathbb{E}[\|\theta_{i,t}-\theta_{i,t}^{\prime}-(\bar{\theta}_{t}-\bar{\theta}_{t}^{\prime})\|^2]\right),\label{SLresult}
\end{aligned}
\end{equation}
where $d_{t\text{\tiny$\to$}T}$ is given by $d_{t\text{\tiny$\to$}T}=e^{\frac{20H^2\lambda_{0}^2}{(1-\rho)(2v-1)}}(t^{1-2v}-T^{1-2v})$.
\end{lemma}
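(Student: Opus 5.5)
The plan is a standard sensitivity (contraction) argument for the difference of two coupled distributed SGD trajectories. Both trajectories start at iteration $t$ and use the same sample realizations. Write $\boldsymbol{e}_k=\boldsymbol{\theta}_k-\boldsymbol{\theta}_k^{\prime}$. From the update of Algorithm~\ref{algorithm} with truthful gradients,
\begin{equation}
\boldsymbol{e}_{k+1}=(W\otimes I_n)\boldsymbol{e}_k-\lambda_k\big(\boldsymbol{g}(\boldsymbol{\theta}_k)-\boldsymbol{g}(\boldsymbol{\theta}_k^{\prime})\big).\nonumber
\end{equation}
By Assumption~\ref{A1}(ii), $\|\boldsymbol{g}(\boldsymbol{\theta}_k)-\boldsymbol{g}(\boldsymbol{\theta}_k^{\prime})\|\le H\|\boldsymbol{e}_k\|$ holds pathwise. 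I split $\boldsymbol{e}_k$ into an average part $\bar e_k=\bar\theta_k-\bar\theta_k^{\prime}$ and a disagreement part $\tilde{\boldsymbol{e}}_k=\boldsymbol{e}_k-\boldsymbol{1}_N\otimes\bar e_k$. Since $\|\boldsymbol{e}_k\|^2=N\|\bar e_k\|^2+\|\tilde{\boldsymbol{e}}_k\|^2$, it suffices to control $V_k=N\|\bar e_k\|^2+\|\tilde{\boldsymbol{e}}_k\|^2$, which is exactly the bracket on the right of~\eqref{SLresult}.

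\textbf{Step 1 (average part).} Averaging gives $\bar e_{k+1}=\bar e_k-\frac{\lambda_k}{N}\sum_i(g_i(\theta_{i,k})-g_i(\theta_{i,k}^{\prime}))$. Using $\|a+b\|^2\le(1+\lambda_k)\|a\|^2+(1+1/\lambda_k)\|b\|^2$ together with Lipschitz continuity, I get
\begin{equation}
N\|\bar e_{k+1}\|^2\le(1+c\lambda_k+cH^2\lambda_k^2)N\|\bar e_k\|^2+cH^2\lambda_k^2\|\boldsymbol{e}_k\|^2,\nonumber
\end{equation}
where $c$ denotes a generic absolute constant. Convexity is not used here, so the average error is allowed to grow mildly.

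\textbf{Step 2 (disagreement part).} By Assumption~\ref{A2}, $W$ contracts the disagreement subspace by a factor $\rho$. Young's inequality with parameter $\frac{1-\rho}{2}$ gives
\begin{equation}
\|\tilde{\boldsymbol{e}}_{k+1}\|^2\le\tfrac{1+\rho}{2}\|\tilde{\boldsymbol{e}}_k\|^2+\tfrac{4H^2\lambda_k^2}{1-\rho}\|\boldsymbol{e}_k\|^2.\nonumber
\end{equation}

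\textbf{Step 3 (combine into a recursion).} Adding the two bounds, I aim for $V_{k+1}\le\big(1+\frac{20H^2\lambda_k^2}{1-\rho}\big)V_k$. Iterating from $t$ to $T$ gives
\begin{equation}
V_T\le\prod_{k=t}^{T-1}\Big(1+\tfrac{20H^2\lambda_k^2}{1-\rho}\Big)V_t\le\exp\Big(\tfrac{20H^2}{1-\rho}\sum_{k=t}^{T-1}\lambda_k^2\Big)V_t.\nonumber
\end{equation}
Since $v>\frac12$, the sum satisfies $\sum_k\lambda_k^2\le\frac{\lambda_0^2}{2v-1}$, which produces the exponential factor in $d_{t\to T}$. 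Taking expectations and using $\sum_i\|\theta_{i,T}-\theta_{i,T}^{\prime}\|^2=\|\boldsymbol{e}_T\|^2=V_T$ then yields~\eqref{SLresult}.

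\textbf{Main obstacle.} The difficulty is the linear term $c\lambda_k$ in Step 1. With only convexity, $\bar e$ is not guaranteed to contract. I would first try to remove this term by exploiting monotonicity of the expected gradient difference, using unbiasedness of $g_i$ and the convexity of $f_i$ when taking conditional expectations, so that the cross term $-2\lambda_k\langle\bar e_k,\cdot\rangle$ is nonpositive up to the disagreement error, which is absorbed by Step 2.

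I also need to recover the extra factor $(t^{1-2v}-T^{1-2v})$ and the constant $2$ in $d_{t\to T}$. For this I would try to bound the growth $V_T-V_t\le\sum_k\frac{20H^2\lambda_k^2}{1-\rho}\,\exp(\cdot)\,V_t$ directly, using $\sum_{k=t}^{T-1}\lambda_k^2\le\frac{\lambda_0^2}{2v-1}(t^{1-2v}-T^{1-2v})$. I expect this bookkeeping to match the stated form of $d_{t\to T}$ only loosely, and that is the step I would scrutinize most.
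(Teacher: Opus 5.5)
With Step 1 replaced by the repair you sketch under ``Main obstacle'', your argument is the paper's proof. Step 1 as written has to go, since the $(1+\lambda_k)$ split leaves $\prod_k(1+c\lambda_k)$, which diverges for $v<1$. The paper instead proceeds as follows:
\begin{itemize}
\item it expands $\|\bar e_{k+1}\|^2$ exactly;
\item in the cross term it writes $\bar e_k=e_{i,k}-(e_{i,k}-\bar e_k)$ and discards $\mathbb{E}\langle e_{i,k},\nabla f_i(\theta_{i,k})-\nabla f_i(\theta_{i,k}^{\prime})\rangle\ge 0$, using unbiasedness and convexity;
\item it applies Young's inequality to the remainder;
\item it multiplies by $N$ and adds the consensus recursion, obtaining $\mathbb{E}V_{k+1}\le(1+d_0\lambda_k^2)\,\mathbb{E}V_k$ with $d_0\le 20H^2/(1-\rho)$.
\end{itemize}
The one detail to get right is that the Young weight on the disagreement term must be a constant equal to the spectral-gap slack, not something like $\lambda_k$. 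With your factor $\frac{1+\rho}{2}$ in Step 2, use $2\lambda_k\langle a,b\rangle\le\frac{1-\rho}{2}\|a\|^2+\frac{2\lambda_k^2}{1-\rho}\|b\|^2$ with $a=e_{i,k}-\bar e_k$. Then the $O(1)$ disagreement contribution exactly fills the slack, and only $O(H^2\lambda_k^2/(1-\rho))$ terms remain. (The paper instead sharpens the consensus factor to $\rho$ via $(2-\rho)\rho^2\le\rho$ and uses weight $1-\rho$.) Because monotonicity is available only in conditional expectation, run the whole recursion on $\mathbb{E}V_k$; this is harmless since the multiplier is deterministic.

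Your doubt about $d_{t\to T}$ is justified. Do not try to recover $(t^{1-2v}-T^{1-2v})$ as a multiplicative factor: the displayed formula is a misprint. The paper's own proof ends with $\prod_{k=t}^{T-1}(1+d_0\lambda_k^2)\le\exp\bigl(\frac{d_0\lambda_0^2}{2v-1}(t^{1-2v}-T^{1-2v})\bigr)$, so the bracket belongs in the exponent. This is exactly what your Step 3 yields if you keep $\sum_{k=t}^{T-1}(k+1)^{-2v}\le\int_t^T x^{-2v}\,dx$ rather than the cruder $\frac{1}{2v-1}$.

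Read literally, the statement is false. Take $f_i\equiv 0$ with exact gradients (admissible for any $H$), $T=t+1$, and a consensual gap $\boldsymbol{e}_t=\boldsymbol{1}_N\otimes\bar e_t$. Then $\boldsymbol{e}_{t+1}=\boldsymbol{e}_t$, whereas $2d_{t\to t+1}\le 2(2v-1)e^{20H^2\lambda_0^2/((1-\rho)(2v-1))}\,t^{-2v}<1$ for large $t$. The same example shows that your ``bound the growth directly'' idea can give at best $1+O(t^{1-2v}-T^{1-2v})$, with an irremovable leading $1$.

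So prove the exponent version, for $t\ge 1$; both the formula and your estimate $\sum_k\lambda_k^2\le\lambda_0^2/(2v-1)$ require this. It is bounded by $e^{20H^2\lambda_0^2/((1-\rho)(2v-1))}$ uniformly in $t$ and $T$. The factor $2$ costs nothing: the paper gets it from the crude split $\|\boldsymbol{e}_T\|^2\le 2N\|\bar e_T\|^2+2\|\tilde{\boldsymbol{e}}_T\|^2$, while your orthogonal identity gives constant $1$.
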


\begin{proof}
By defining $\Xi_{i,t}=\theta_{i,t}-\theta_{i,t}^{\prime}$ and using the update of $\theta_{i,t+1}$ in Algorithm~\ref{algorithm}, we obtain
\begin{equation}
\textstyle\Xi_{i,t+1}=\sum_{j\in\mathcal{N}_{i}\cup\{i\}}w_{ij}\Xi_{j,t}-\lambda_{t}(g_{i}(\theta_{i,t})-g_{i}(\theta_{i,t}^{\prime})),\label{SL2}
\end{equation}
which implies $\bar{\Xi}_{t+1}=\bar{\Xi}_{t}-\lambda_{t}(\bar{g}(\boldsymbol{\theta}_{t})-\bar{g}(\boldsymbol{\theta}_{t}^{\prime}))$.

By using the squared norm expansion, we obtain
\begin{equation}
\begin{aligned}
&\textstyle\mathbb{E}[\|\bar{\Xi}_{t+1}\|^2]=\mathbb{E}[\|\bar{\Xi}_{t}\|^2]+\mathbb{E}[\|\lambda_{t}(\bar{g}(\boldsymbol{\theta}_{t})-\bar{g}(\boldsymbol{\theta}_{t}^{\prime}))\|^2]\\
&\textstyle\quad-2\mathbb{E}\left[\left\langle\bar{\Xi}_{t},\lambda_{t}(\bar{g}(\boldsymbol{\theta}_{t})-\bar{g}(\boldsymbol{\theta}_{t}^{\prime}))\right\rangle\right].\label{SL3}
\end{aligned}
\end{equation}
Assumption~\ref{A1}-(ii) implies that the second term on the right hand side of~\eqref{SL3} satisfies 
\begin{flalign}
&\textstyle\mathbb{E}[\|\lambda_{t}(\bar{g}(\boldsymbol{\theta}_{t})-\bar{g}(\boldsymbol{\theta}_{t}^{\prime}))\|^2]\leq \frac{\lambda_{t}^2}{N}\sum_{i=1}^{N}\mathbb{E}[\|g_{i}(\theta_{i,t})-g_{i}(\theta_{i,t}^{\prime})\|^2]\nonumber\\
&\textstyle\leq 2H^2\lambda_{t}^2\mathbb{E}[\|\bar{\Xi}_{t}\|^2]+\frac{2H^2\lambda_{t}^2}{N}\sum_{i=1}^{N}\mathbb{E}[\|\Xi_{i,t}-\bar{\Xi}_{t}\|^2].\label{SL7}
\end{flalign}

The relation $\mathbb{E}[g_{i}(\theta)]=\nabla f_{i}(\theta)$ in Assumption~\ref{A1}-(ii) implies
\begin{equation}
\begin{aligned}
&2\mathbb{E}[\langle\bar{\Xi}_{t},\lambda_{t}(g_{i}(\theta_{i,t})-g_{i}(\theta_{i,t}^{\prime}))\rangle]\\
&\textstyle=2\lambda_{t}\mathbb{E}[\langle\Xi_{i,t},\nabla f_{i}(\theta_{i,t})-\nabla f_{i}(\theta_{i,t}^{\prime})\rangle]\\
&\quad-2\lambda_{t}\mathbb{E}[\langle\Xi_{i,t}-\bar{\Xi}_{t},\nabla f_{i}(\theta_{i,t})-\nabla f_{i}(\theta_{i,t}^{\prime})\rangle]\\
&\textstyle\geq -(1-\rho)\mathbb{E}[\|\Xi_{i,t}-\bar{\Xi}_{t}\|^2]\\
&\textstyle\quad-\frac{\lambda_{t}^2}{1-\rho}\mathbb{E}[\|\nabla f_{i}(\theta_{i,t})-\nabla f_{i}(\theta_{i,t}^{\prime})\|^2],\label{SL4}
\end{aligned}
\end{equation}
where we have used the convexity of $f_{i}(\theta)$ implying that the first term on the right hand side of~\eqref{SL4} is non-negative in the last inequality.

According to~\eqref{SL4} and Assumption~\ref{A1}-(ii), the last term on the right hand side of~\eqref{SL3} satisfies
\begin{flalign}
&-2\mathbb{E}\left[\left\langle\bar{\Xi}_{t},\lambda_{t}(\bar{g}(\boldsymbol{\theta}_{t})-\bar{g}(\boldsymbol{\theta}_{t}^{\prime}))\right\rangle\right]\textstyle\leq\frac{1-\rho}{N}\mathbb{E}[\|\boldsymbol{\Xi}_{t}-\boldsymbol{1}_{N}\otimes\bar{\Xi}_{t}\|^2]\nonumber\\
&\textstyle\quad+\frac{1}{(1-\rho) N}\sum_{i=1}^{N}\lambda_{t}^2\mathbb{E}[\|\nabla f_{i}(\theta_{i,t})-\nabla f_{i}(\theta_{i,t}^{\prime})\|^2]\nonumber\\
&\textstyle\leq \frac{1-\rho}{N}\mathbb{E}[\|\boldsymbol{\Xi}_{t}-\boldsymbol{1}_{N}\otimes\bar{\Xi}_{t}\|^2]+\frac{2H^2\lambda_{t}^2}{1-\rho}\mathbb{E}[\|\bar{\Xi}_{t}\|^2]\nonumber\\
&\textstyle\quad+\frac{2H^2\lambda_{t}^2}{(1-\rho)N}\mathbb{E}[\|\boldsymbol{\Xi}_{t}-\boldsymbol{1}_{N}\otimes\bar{\Xi}_{t}\|^2].\label{SL6}
\end{flalign}

Substituting~\eqref{SL7} and~\eqref{SL6} into~\eqref{SL3}, we arrive at
\begin{equation}
\begin{aligned}
&\textstyle\mathbb{E}[\|\bar{\Xi}_{t+1}\|^2]\leq \left(1+2H^2\lambda_{t}^2+\frac{2H^2\lambda_{t}^2}{1-\rho}\right)\mathbb{E}[\|\bar{\Xi}_{t}\|^2]\\
&\textstyle+\left(\frac{1-\rho}{N}+\frac{2H^2\lambda_{t}^2}{N}+\frac{2H^2\lambda_{t}^2}{(1-\rho)N}\right)\mathbb{E}[\|\boldsymbol{\Xi}_{t}-\boldsymbol{1}_{N}\otimes\bar{\Xi}_{t}\|^2].\label{SL8}
\end{aligned}
\end{equation}

We proceed to characterize the last term on the right hand side of~\eqref{SL8}. According to~\eqref{SL2}, we have
\begin{equation}
\begin{aligned}
&\textstyle\Xi_{i,t+1}-\bar{\Xi}_{t+1}= \sum_{j\in\mathcal{N}_{i}\cup\{i\}}w_{ij}(\Xi_{j,t}-\bar{\Xi}_{t})\\
&\quad-\lambda_{t}(g_{i}(\theta_{i,t})-g_{i}(\theta_{i,t}^{\prime}))-\lambda_{t}(\bar{g}(\boldsymbol{\theta}_{t})-\bar{g}(\boldsymbol{\theta}_{t}^{\prime})).\label{SL9}
\end{aligned}
\end{equation}
By taking the squared norm and expectation on both sides of~\eqref{SL9} and using Assumption~\ref{A1}-(ii), we obtain
\begin{flalign}
&\textstyle\mathbb{E}[\|\boldsymbol{\Xi}_{t+1}-\boldsymbol{1}_{N}\otimes\bar{\Xi}_{t+1}\|^2]\nonumber\\
&\textstyle\leq (1+(1-\rho))\rho^2\mathbb{E}[\|\boldsymbol{\Xi}_{t}-\boldsymbol{1}_{N}\otimes\bar{\Xi}_{t}\|^2]+\left(1+\frac{1}{1-\rho}\right)\lambda_{t}^2\nonumber\\
&\textstyle\quad\times\sum_{i=1}^{N}\mathbb{E}[\|(g_{i}(\theta_{i,t})-g_{i}(\theta_{i,t}^{\prime}))-(\bar{g}(\boldsymbol{\theta}_{t})-\bar{g}(\boldsymbol{\theta}_{t}^{\prime}))\|^2]\nonumber\\
&\textstyle\leq \left(\rho+\frac{8(2-\rho)H^2\lambda_{t}^2}{1-\rho}\right) \mathbb{E}[\|\boldsymbol{\Xi}_{t}-\boldsymbol{1}_{N}\otimes\bar{\Xi}_{t}\|^2]\nonumber\\
&\textstyle\quad+\frac{8N(2-\rho)H^2\lambda_{t}^2}{1-\rho}\mathbb{E}[\|\bar{\Xi}_{t}\|^2].\label{SL10}
\end{flalign}

Multiplying both sides of~\eqref{SL8} by $N$ and adding the resulting expression to both sides of~\eqref{SL10} yield 
\begin{equation}
\begin{aligned}
&N\mathbb{E}[\|\bar{\Xi}_{t+1}\|^2]+\mathbb{E}[\|\boldsymbol{\Xi}_{t+1}-\boldsymbol{1}_{N}\otimes\bar{\Xi}_{t+1}\|^2]\\
&\textstyle\leq \left(1-\rho+2H^2\lambda_{t}^2+\frac{2H^2\lambda_{t}^2}{1-\rho}+\frac{8(2-\rho)H^2\lambda_{t}^2}{1-\rho}\right)N\mathbb{E}[\|\bar{\Xi}_{t}\|^2]\\
&\textstyle+\left(\frac{8(2-\rho)H^2\lambda_{t}^2}{1-\rho}+1+2H^2\lambda_{t}^2+\frac{2H^2\lambda_{t}^2}{1-\rho}\right)\!\mathbb{E}[\|\boldsymbol{\Xi}_{t}-\boldsymbol{1}_{N}\otimes\bar{\Xi}_{t}\|^2],\nonumber
\end{aligned}
\end{equation}
which can be simplified as follows:
\begin{equation}
\begin{aligned}
&\textstyle N\mathbb{E}[\|\bar{\Xi}_{t+1}\|^2]+\mathbb{E}[\|\boldsymbol{\Xi}_{t+1}-\boldsymbol{1}_{N}\otimes\bar{\Xi}_{t+1}\|^2]\\
&\textstyle\leq (1+d_{0}\lambda_{t}^2)\left(N\mathbb{E}[\|\bar{\Xi}_{t}\|^2]+\mathbb{E}[\|\boldsymbol{\Xi}_{t}-\boldsymbol{1}_{N}\otimes\bar{\Xi}_{t}\|^2]\right),\label{SL11}
\end{aligned}
\end{equation}
where the constant $d_{0}$ is given by $d_{0}=2H^2+\frac{2H^2+8(2-\rho)H^2}{1-\rho}$.

By iterating~\eqref{SL11} from $t$ to $T$, we obtain
\begin{flalign}
&\textstyle N\mathbb{E}[\|\bar{\Xi}_{T}\|^2]+\mathbb{E}[\|\boldsymbol{\Xi}_{T}-\boldsymbol{1}_{N}\otimes\bar{\Xi}_{T}\|^2]\leq \prod_{k=t}^{T-1}(1+d_{0}\lambda_{k}^2)\nonumber\\
&\textstyle \times\left(N\mathbb{E}[\|\bar{\Xi}_{t}\|^2]+\mathbb{E}[\|\boldsymbol{\Xi}_{t}-\boldsymbol{1}_{N}\otimes\bar{\Xi}_{t}\|^2]\right).\label{SL12}
\end{flalign}

Since $\ln(1+x)\leq x$ holds for all $x>0$, we always have $\prod_{k=t}^{T-1}(1+d_{0}\lambda_{k}^2)\leq e^{d_{0}\lambda_{0}^2\sum_{k=t}^{T-1}\frac{1}{(k+1)^{2v}}}$. Further using the relationship $\sum_{k=t}^{T-1}\frac{1}{(k+1)^{2v}}\leq\int_{t}^{T}\frac{1}{x^{2v}}dx\leq \frac{1}{2v-1}(\frac{1}{t^{2v-1}}-\frac{1}{T^{2v-1}})$, we arrive at
\begin{flalign}
&N\mathbb{E}[\|\bar{\Xi}_{T}\|^2]+\mathbb{E}[\|\boldsymbol{\Xi}_{T}-\boldsymbol{1}_{N}\otimes\bar{\Xi}_{T}\|^2] \nonumber\\
&\leq d_{t\text{\tiny$\to$}T}\left(N\mathbb{E}[\|\bar{\Xi}_{t}\|^2]+\mathbb{E}[\|\boldsymbol{\Xi}_{t}-\boldsymbol{1}_{N}\otimes\bar{\Xi}_{t}\|^2]\right),\label{SL13}
\end{flalign}
where $d_{t\text{\tiny$\to$}T}$ is given by $d_{t\text{\tiny$\to$}T}=e^{\frac{20H^2\lambda_{0}^2}{(1-\rho)(2v-1)}}(t^{1-2v}-T^{1-2v})$.

By using  the inequality $\|\boldsymbol{\Xi}_{T}\|^2\leq 2N\|\bar{\Xi}_{T}\|^2+2\|\boldsymbol{\Xi}_{t}-\boldsymbol{1}_{N}\otimes\bar{\Xi}_{t}\|^2$ and the definition $\Xi_{i,t}=\theta_{i,t}-\theta_{i,t}^{\prime}$, we arrive at~\eqref{SLresult}.
\end{proof}

\subsection{Proof of Lemma~\ref{ML1}}
According to Mechanism~\ref{mechanism}, we have
\begin{flalign}
&\textstyle\mathbb{E}\left[P_{i,t}-P_{i,t}^{\prime}\right]=-C_{t}\deg(i)\left(\mathbb{E}[\|\theta_{i,t+1}^{\prime}-2\theta_{i,t}+\theta_{i,t-1}\|^2]\right.\nonumber\\
&\left.\textstyle\quad-\mathbb{E}[\|\theta_{i,t+1}-2\theta_{i,t}+\theta_{i,t-1}\|^2]\right),\label{C1L1}
\end{flalign}
where we have used the fact that agent~$i$'s action $\alpha_{i,t}$ at iteration $t$ does not affect the model parameters $\theta_{i,t}$ and $\theta_{i,t-1}$.

By using the inequality $\|a-b\|^2\geq\frac{1}{2}\|a\|^2-\|b\|^2$ for any $a,b\in\mathbb{R}^n$, the right hand side of~\eqref{C1L1} satisfies
\begin{flalign}
&\textstyle\mathbb{E}[\|\theta_{i,t+1}^{\prime}-2\theta_{i,t}+\theta_{i,t-1}\|^2-\|\theta_{i,t+1}-2\theta_{i,t}+\theta_{i,t-1}\|^2]\nonumber\\
&\textstyle\geq \frac{1}{2}\mathbb{E}[\|\theta_{i,t+1}^{\prime}-\theta_{i,t+1}\|^2]\nonumber\\
&\textstyle\quad-2\mathbb{E}[\|\theta_{i,t+1}-2\theta_{i,t}+\theta_{i,t-1}\|^2]\nonumber\\
&\textstyle=\frac{1}{2}\lambda_{t}^2\mathbb{E}[\|(a_{i,t}-1)g_{i}(\theta_{i,t})+b_{i,t}\xi_{i,t}\|^2]\nonumber\\
&\textstyle\quad-2\mathbb{E}[\|\theta_{i,t+1}-2\theta_{i,t}+\theta_{i,t-1}\|^2].\label{C1L1001}
\end{flalign}

The last term on the right hand side of~\eqref{C1L1001} satisfies
\begin{flalign}
&2\mathbb{E}[\|\theta_{i,t+1}-2\theta_{i,t}+\theta_{i,t-1}\|^2]\nonumber\\
&\textstyle\leq4\mathbb{E}[\|\lambda_{t}g_{i}(\theta_{i,t})-\lambda_{t-1}g_{i}(\theta_{i,t-1})\|^2]+4\mathbb{E}[\|\sum_{j\in\mathcal{N}_{i}\cup\{i\}}\nonumber\\
&\textstyle\quad\times w_{ij}(\theta_{j,t}-\theta_{i,t}-(\theta_{j,t-1}-\theta_{i,t-1}))\|^2].\label{C1L1002}
\end{flalign}
The first term on the right hand side of~\eqref{C1L1002} satisfies
\begin{equation}
\begin{aligned}
&\!\!\!4\mathbb{E}[\left\|\lambda_{t}g_{i}(\theta_{i,t})-\lambda_{t-1}g_{i}(\theta_{i,t-1})\right\|^2]\leq 8(\lambda_{t}\!-\!\lambda_{t-1})^2\\
&\!\!\!\times\mathbb{E}[\|g_{i}(\theta_{i,t})\|^2]\!+\!8H^2\lambda_{t}^2\mathbb{E}[\|\theta_{i,t}\!-\theta_{i,t-1}\|^2].\label{C1L1003}
\end{aligned}
\end{equation}

We proceed to estimate an upper bound on $\mathbb{E}[\|g_{i}(\theta_{i,t})\|^2]$ by using the following decomposition:
\begin{equation}
\begin{aligned}
&\mathbb{E}[\|g_{i}(\theta_{i,t})\|^2]=\mathbb{E}[\|g_{i}(\theta_{i,t})-\nabla f_{i}(\theta_{i,t})\|^2]\\
&\quad+\mathbb{E}[\|\nabla f_{i}(\theta_{i,t})-\nabla f_{i}(\theta_{i}^{*})\|^2]\\
&\leq \sigma^2+2H_{i}^2\mathbb{E}[\|\theta_{i,t}-\theta^{*}\|^2]+2H_{i}^2\|\theta_{i}^{*}-\theta^{*}\|^2.\label{XXXnnn}
\end{aligned}
\end{equation}
When $f_i(\theta)$ is strongly convex, according to Lemma 8 in~\cite{pushi1DSGD4}, we have that $\theta_{i,t}$ satisfies $\mathbb{E}[\|\theta_{i,t}-\theta^{*}\|^2]\leq \mathcal{O}(1)$. Hence, there must exist a constant $c_{3}>0$ such that $c_{3}\geq \sigma^2+2H_{i}^2\mathcal{O}(1)+2H_{i}^2\|\theta_{i}^{*}-\theta^{*}\|^2$ holds, which, combined with~\eqref{XXXnnn}, leads to $\mathbb{E}[\|g_{i}(\theta_{i,t})\|^2]\leq c_{3}$. 

On the other hand, when $f_{i}(\theta)$ is convex, we have $\mathbb{E}[\|g_{i}(\theta)\|^2]\leq L_{f}^2+\sigma^2$. By defining $c_{4}=\max\{c_{3},L_{f}^2+\sigma^2\}$, we arrive at $\mathbb{E}[\|g_i(\theta)\|^2]\leq c_{4}$.

The last term on the right hand side of~\eqref{C1L1003} satisfies
\begin{flalign}
&\textstyle8H^2\lambda_{t}^2\mathbb{E}[\|\theta_{i,t}-\theta_{i,t-1}\|^2]\nonumber\\
&\textstyle\leq16H^2\lambda_{t}^2\mathbb{E}[\|\sum_{j\in\mathcal{N}_{i}\cup\{i\}}w_{ij}(\theta_{j,t-1}-\theta_{i,t-1})\|^2]\nonumber\\
&\textstyle\quad+16H^2\lambda_{t}^2\lambda^2_{t-1}\mathbb{E}[\|g_i(\theta_{i,t-1})\|^2]\nonumber\\
&\leq 16H^2\lambda_{t}^2\mathbb{E}[\|\boldsymbol{\theta}_{t-1}-\boldsymbol{1}_{N}\otimes \bar{\theta}_{t-1}\|^2]+16H^2\lambda_{t}^2\lambda^2_{t-1}c_{4}\nonumber\\
&\leq 16H^2(c_{4}+c_{5})\lambda_{t}^2\lambda_{t-1}^2,\label{C1L101}
\end{flalign}
where in the last inequality we have used an argument similar to the consensus analysis in~\cite{DSGD}.

Substituting $\mathbb{E}[\|g_i(\theta)\|^2]\leq c_{4}$ and~\eqref{C1L101} into~\eqref{C1L1003} yields
\begin{equation}
\begin{aligned}
&\textstyle 4\mathbb{E}[\|\lambda_{t}g_{i}(\theta_{i,t})-\lambda_{t-1}g_{i}(\theta_{i,t-1})\|^2]\\
&\leq 8c_{4}(\lambda_{t}-\lambda_{t-1})^2+16H^2(c_{4}+c_{5})\lambda_{t}^2\lambda_{t-1}^2.\label{C1L102}
\end{aligned}
\end{equation}

We proceed to characterize the second term on the right hand side of~\eqref{C1L1002}. We define an auxiliary variable $\Lambda_{i,t}=\theta_{i,t}-\bar{\theta}_{t}-(\theta_{i,t-1}-\bar{\theta}_{t-1})$ and obtain
\begin{flalign}
&\textstyle4\mathbb{E}[\|\sum_{j\in\mathcal{N}_{i}\cup\{i\}}w_{ij}(\theta_{j,t}-\theta_{i,t}-(\theta_{j,t-1}-\theta_{i,t-1}))\|^2]\nonumber\\
&\textstyle=4\mathbb{E}[\|\sum_{j\in\mathcal{N}_{i}\cup\{i\}}w_{ij}(\Lambda_{j,t}-\Lambda_{i,t})\|^2].\label{C1L103}
\end{flalign}
According to the definition of $\Lambda_{i,t}$, its stacked form satisfies
\begin{equation}
\begin{aligned}
&\mathbb{E}[\|\boldsymbol{\Lambda}_{t+1}\|^2]\leq (1+(1-\rho))\rho^2\mathbb{E}[\|\boldsymbol{\Lambda}_{t}\|^2]\\
&\textstyle\quad+\frac{2-\rho}{1-\rho}\mathbb{E}[\|\lambda_{t}(\boldsymbol{g}(\boldsymbol{\theta}_{t})-\boldsymbol{1}_{N}\otimes\bar{g}(\boldsymbol{\theta}_{t}))\\
&\textstyle\quad+\lambda_{t-1}(\boldsymbol{g}(\boldsymbol{\theta}_{t-1})-\boldsymbol{1}_{N}\otimes\bar{g}(\boldsymbol{\theta}_{t-1}))\|^2].\label{C1L1031}
\end{aligned}
\end{equation}
By applying the inequality $\sum_{i=1}^{N}\|a_i-\bar{a}\|^2\leq\sum_{i=1}^{N}\|a_i\|^2$ for any $a_{i}\in\mathbb{R}^{n}$ and~\eqref{C1L102} to~\eqref{C1L1031}, we obtain
\begin{equation}
\begin{aligned}
&\mathbb{E}[\|\boldsymbol{\Lambda}_{t+1}\|^2]\leq (1+(1-\rho))\rho^2\mathbb{E}[\|\boldsymbol{\Lambda}_{t}\|^2]\\
&\textstyle\quad+\frac{2-\rho}{1-\rho}\sum_{i=1}^N\mathbb{E}[\|\lambda_{t}g_{i}(\theta_{i,t})-\lambda_{t-1}g_{i}(\theta_{i,t-1})\|^2]\\
&\textstyle\leq(1-(1-\rho))\mathbb{E}[\|\boldsymbol{\Lambda}_{t}\|^2]+c_{6}\lambda_{t}^2\lambda_{t-1}^2,\label{C1L1032}
\end{aligned}
\end{equation}
with $c_{6}=\frac{2Nc_{4}(2-\rho)(\lambda_{1}-\lambda_{0})^2}{1-\rho\lambda_{1}^2\lambda_{0}^2}+\frac{4NH^2(c_{4}+c_{5})(2-\rho)}{1-\rho}$

Applying Lemma 11 in~\cite{zijiGT} to~\eqref{C1L1032}, we obtain $\mathbb{E}[\|\boldsymbol{\Lambda}_{t}\|^2]\!\leq\! c_{7}\lambda_{t}^{2}\lambda_{t-1}^2$ with $c_{7}=\left(\frac{16v}{e\ln(\frac{2}{1+\rho})}\right)^{4v}\left(\frac{\mathbb{E}[\|\boldsymbol{\Lambda}_{1}\|^2]\rho}{c_{6}\lambda_{1}^{4}}+\frac{2}{1-\rho}\right)$.

Substituting $\mathbb{E}[\|\boldsymbol{\Lambda}_{t}\|^2]\leq c_{7}\lambda_{t}^{2}\lambda_{t-1}^2$ into~\eqref{C1L103} and then substituting~\eqref{C1L102} and~\eqref{C1L103} into~\eqref{C1L1002}, we obtain
\begin{equation}
2\mathbb{E}[\|\theta_{i,t+1}-2\theta_{i,t}+\theta_{i,t-1}\|^2]\leq c_{8}\lambda_{t}^2\lambda_{t-1}^2,\label{C1L1061}
\end{equation}
where the constant $c_{8}$ is given by $c_{8}=4c_{7}+\frac{4c_{6}(1-\rho)}{N(2-\rho)}$.

Substituting \eqref{C1L1061} into \eqref{C1L1001} and then substituting \eqref{C1L1001} into \eqref{C1L1}, we arrive at
\begin{equation}
\begin{aligned}
&\textstyle\mathbb{E}\left[P_{i,t}-P_{i,t}^{\prime}\right]\leq -\frac{C_{t}\deg(i)\lambda_{t}^2}{2}\mathbb{E}[\|(a_{i,t}-1)g_{i}(\theta_{i,t})\|^2]\\
&\textstyle\quad-\frac{C_{t}\deg(i)\lambda_{t}^2}{2}\mathbb{E}[\|b_{i,t}\xi_{i,t}\|^2]+c_{8}\deg(i)C_{t}\lambda_{t}^2\lambda_{t-1}^2.\label{C1L104}
\end{aligned}
\end{equation}

We proceed to characterize the rewards of agent $i$. According to Algorithm~\ref{algorithm}, we have $\theta_{i,t+1}^{\prime}=\theta_{i,t+1}-\lambda_{t}(a_{i,t}-1)g_{i}(\theta_{i,t})-\lambda_{t}b_{i,t}\xi_{i,t}$, which implies
\begin{equation}
\begin{aligned} 
&\textstyle N\mathbb{E}[\|\bar{\theta}_{t+1}-\bar{\theta}_{t+1}^{\prime}\|^2]\\
&\textstyle+\sum_{i=1}^{N}\mathbb{E}\left[\|\theta_{i,t+1}-\theta_{i,t+1}^{\prime}-(\bar{\theta}_{t+1}-\bar{\theta}_{t+1}^{\prime})\|^2\right]\\
&\leq3\lambda_{t}^{2}\left(\mathbb{E}\left[\|(a_{i,t}-1)g_{i}(\theta_{i,t})\|^2\right]+\mathbb{E}\left[\|b_{i,t}\xi_{i,t}\|^2\right]\right),\label{C1L105}
\end{aligned}
\end{equation}
where in the derivation we have used $1+\frac{N-1}{N}+\left(\frac{N-1}{N}\right)^2<3$.

By combining~\eqref{SLresult} from Lemma~\ref{sensitiveLemma1} and~\eqref{C1L105}, we obtain
\begin{equation}
\begin{aligned}
&\mathbb{E}[\|\theta_{i,T+1}-\theta_{i,T+1}^{\prime}\|^2]\leq 6d_{t+1\text{\tiny$\to$}T+1}\lambda_{t}^{2}\\
&\quad\times\left(\mathbb{E}[\|(a_{i,t}-1)g_{i}(\theta_{i,t})\|^2]+\mathbb{E}[\|b_{i,t}\xi_{i,t}\|^2]\right).\label{C1L106}
\end{aligned}
\end{equation}
which, combined with Assumption~\ref{A1}-(i), leads to
\begin{equation}
\begin{aligned}
&|\mathbb{E}[ R_{i}(f_{i}(\theta_{i,T+1}^{\prime}))-R_{i}(f_{i}(\theta_{i,T+1}))]|\leq L_{R,i}\sqrt{6d_{t+1\text{\tiny$\to$}T+1}}\lambda_{t}\\
&\times\left(\sqrt{\mathbb{E}[(a_{i,t}-1)^2\|g_{i}(\theta_{i,t})\|^2]}+\sqrt{\mathbb{E}[\|b_{i,t}\xi_{i,t}\|^2]}\right).\nonumber
\end{aligned}
\end{equation}
By using the preceding inequality and~\eqref{C1L104}, we obtain
\begin{equation}
\begin{aligned}
&\!\!\!\mathbb{E}\left[R_{i}(f_{i}(\theta_{i,T+1}^{\prime}))-P_{i,t}^{\prime}\right]-\mathbb{E}\left[R_{i}(f_{i}(\theta_{i,T+1}))-P_{i,t}\right]\\
&\!\!\!\textstyle\leq \!-\frac{C_{t}\deg(i)\lambda_{t}^2}{2}\mathbb{E}[\|(a_{i,t}\!-\!1)g_{i}(\theta_{i,t})\|^2]\!+\!\frac{c_{8}\deg(i)C_{t}\lambda_{t}^2\lambda_{t-1}^2}{2}\\
&\!\!\!\textstyle \quad+L_{R,i}\sqrt{6d_{t+1\text{\tiny$\to$}T+1}}\lambda_{t}\sqrt{\mathbb{E}\left[\|(a_{i,t}-1)g_{i}(\theta_{i,t})\|^2\right]}\\
&\!\!\!\textstyle\quad-\frac{C_{t}\deg(i)\lambda_{t}^2}{2}\mathbb{E}[\|b_{i,t}\xi_{i,t}\|^2]+\frac{c_{8}\deg(i)C_{t}\lambda_{t}^2\lambda_{t-1}^2}{2}\\
&\!\!\!\textstyle\quad+L_{R,i}\sqrt{6d_{t+1\text{\tiny$\to$}T+1}}\lambda_{t}\sqrt{\mathbb{E}\left[\|b_{i,t}\xi_{i,t}\|^2\right]}.\label{C1L3}
\end{aligned}
\end{equation}
It can be seen that the first three terms on the right hand side of~\eqref{C1L3} together form a downward-opening quadratic, whose positive root is
\begin{equation}
\begin{aligned}
&\!\!\!\textstyle\sqrt{\mathbb{E}[\|(a_{i,t}-1)g_{i}(\theta_{i,t})\|^2]}\\
&\!\!\!\textstyle=\frac{L_{R,i}\sqrt{6d_{t+1\text{\tiny$\to$}T+1}}+\sqrt{6L_{R,i}^2d_{t+1\text{\tiny$\to$}T+1}+c_{8}C_{t}^2\deg(i)^2\lambda_{t}^2\lambda_{t-1}^2}}{C_{t}\deg(i)\lambda_{t}}\\
&\!\!\!\textstyle\leq \frac{2L_{R,i}\sqrt{6d_{t+1\text{\tiny$\to$}T+1}}}{C_{t}\deg(i)\lambda_{t}}+\sqrt{c_{8}}\lambda_{t-1}.\label{C1L4}
\end{aligned}
\end{equation}
Similarly, the last three terms on the right hand side of~\eqref{C1L3} together form a downward-opening quadratic, whose positive root satisfies
\begin{equation}
\begin{aligned}
&\!\!\!\textstyle\sqrt{\mathbb{E}[\|b_{i,t}\xi_{i,t}\|^2]}\leq \frac{2L_{R,i}\sqrt{6d_{t+1\text{\tiny$\to$}T+1}}}{C_{t}\deg(i)\lambda_{t}}+\sqrt{c_{8}}\lambda_{t-1}.\label{C1L5}
\end{aligned}
\end{equation}

Further using definition $C_{t}=\frac{4L_{R,i}\sqrt{6d_{t+1\text{\tiny$\to$}T+1}}}{\deg(i)\lambda_{t}\kappa_{t}\delta}$, we have
\begin{equation}
\begin{aligned}
\textstyle\sqrt{\mathbb{E}[\|(a_{i,t}-1)g_{i}(\theta_{i,t})\|^2]}&\textstyle\leq \frac{1}{2}\kappa_{t}\delta+\sqrt{c_{8}}\lambda_{t-1};\\
\textstyle\sqrt{\mathbb{E}[\|b_{i,t}\xi_{i,t}\|^2]}&\textstyle\leq \frac{1}{2}\kappa_{t}\delta+\sqrt{c_{8}}\lambda_{t-1}.\label{C1L6}
\end{aligned}
\end{equation}
Since the decaying rate of $\lambda_{t}$ is higher than that of $\kappa_{t}$, i.e., $r<v$, we can choose $\lambda_{0}$ such that $\sqrt{c_{8}}\lambda_{t-1}\leq \sqrt{c_{8}}\lambda_{0}\leq \frac{1}{2}\kappa_{t}\delta$ holds, which, combined with~\eqref{C1L6}, proves Lemma~\ref{ML1}.

\subsection{Proof of Theorem~\ref{MT1}}
(i) When $f_{i}$ is $\mu$-strongly convex.

By using the dynamics of $\theta_{i,t+1}$ in Algorithm~\ref{algorithm}, we obtain
\begin{flalign}
&\mathbb{E}[\|\boldsymbol{\theta}_{t+1}^{\prime}-\boldsymbol{1}_{N}\otimes \theta^{*}\|^2]\nonumber\\
&\textstyle\leq \mathbb{E}[\|(W\otimes I_{n}-I_{Nn})\boldsymbol{\theta}_{t}^{\prime}-\lambda_{t}\boldsymbol{m}_{t}\|^2]+\mathbb{E}[\|\boldsymbol{\theta}_{t}^{\prime}-\boldsymbol{1}_{N}\otimes \theta^{*}\|^2]\nonumber\\
&\textstyle\quad+2\mathbb{E}[\langle(W\otimes I_{n}-I_{Nn})\boldsymbol{\theta}_{t}^{\prime}\!-\!\lambda_{t}\boldsymbol{m}_{t},\boldsymbol{\theta}_{t}^{\prime}-\boldsymbol{1}_{N}\otimes \theta^{*}\rangle].\label{D2L1}
\end{flalign}
Using the relationship $W\boldsymbol{1}_{N}=\boldsymbol{1}_{N}$, the first term on the right hand side of~\eqref{D2L1} satisfies
\begin{equation}
\begin{aligned}
&\textstyle\mathbb{E}[\|(W\otimes I_{n}-I_{Nn})\boldsymbol{\theta}_{t}^{\prime}-\lambda_{t}\boldsymbol{m}_{t}\|^2]\leq 2\lambda_{t}^2\mathbb{E}[\|\boldsymbol{m}_{t}\|^2]\\
&\quad+\!2\mathbb{E}[\|(W\otimes I_{n}-I_{Nn})(\boldsymbol{\theta}_{t}^{\prime}-\boldsymbol{1}_{N}\otimes \bar{\theta}_{t}^{\prime})\|^2].\label{D2L2}
\end{aligned}
\end{equation}
By using Assumption~\ref{A1} and~\eqref{ML1result}, we have
\begin{flalign}
&\textstyle\mathbb{E}[\|\boldsymbol{m}_{t}\|^2]=\sum_{i=1}^{N}\mathbb{E}[\|a_{i,t}g_{i}(\theta_{i,t}^{\prime})-g_{i}(\theta_{i,t}^{\prime})\|^2]\nonumber\\
&\textstyle\quad+\sum_{i=1}^{N}\mathbb{E}[\|g_{i}(\theta_{i,t}^{\prime})-\nabla f_{i}(\theta_{i,t}^{\prime})\|^2]\!+\!\sum_{i=1}^{N}\mathbb{E}[\|\nabla f_{i}(\theta_{i,t}^{\prime})\|^2]\nonumber\\
&\textstyle\quad+\sum_{i=1}^{N}\mathbb{E}[\|b_{i,t}\xi_{i,t}\|^2]\nonumber\\
&\textstyle\quad+\sum_{i=1}^{N}2\mathbb{E}[\langle a_{i,t}g_{i}(\theta_{i,t}^{\prime})-g_{i}(\theta_{i,t}^{\prime}),\nabla f_{i}(\theta_{i,t}^{\prime})\rangle]\nonumber\\
&\textstyle\leq N(3\kappa_{t}^2\delta^2+\sigma^2)+4H^2\mathbb{E}[\|\boldsymbol{\theta}_{t}^{\prime}-\boldsymbol{1}_{N}\otimes \theta^{*}\|^2]\nonumber\\
&\textstyle\quad+4H^2\mathbb{E}[\|\boldsymbol{1}_{N}\otimes\theta^{*}- \boldsymbol{\theta}^{*}\|^2],\label{D2L21}
\end{flalign}
where $\theta^{*}$ represents an optimal solution to problem~\eqref{primal} and $\boldsymbol{\theta}^{*}$ is denoted as $\boldsymbol{\theta}^{*}=\col(\theta_{1}^{*},\cdots,\theta_{N}^{*})$.

Using the relation $W\boldsymbol{1}_{N}^{\top}=\boldsymbol{1}_{N}$, the third term on the right hand side of~\eqref{D2L1} satisfies
\begin{flalign}
&2\mathbb{E}[\langle(W\otimes I_{n}-I_{Nn})\boldsymbol{\theta}_{t}^{\prime}-\lambda_{t}\boldsymbol{m}_{t},\boldsymbol{\theta}_{t}^{\prime}-\boldsymbol{1}_{N}\otimes \theta^{*}\rangle]\nonumber\\
&=2\mathbb{E}\left[(\boldsymbol{\theta}_{t}^{\prime}-\boldsymbol{1}_{N}\otimes \theta^{*})^{\top}(W\otimes I_{n}-I_{Nn})(\boldsymbol{\theta}_{t}^{\prime}-\boldsymbol{1}_{N}\otimes \theta^{*})\right]\nonumber\\
&\textstyle\quad-2\mathbb{E}[\langle\lambda_{t}\boldsymbol{m}_{t},\boldsymbol{\theta}_{t}^{\prime}-\boldsymbol{1}_{N}\otimes \theta^{*}\rangle]\nonumber\\
&\textstyle \leq 2\sum_{i=1}^{N}\mathbb{E}\left[\left\langle-\lambda_{t}(a_{i,t}g_{i}(\theta_{i,t}^{\prime})+b_{i,t}\xi_{i,t}),\theta_{i,t}^{\prime}-\theta^{*}\right\rangle\right],\label{D2L4}
\end{flalign}
where in the derivation we have omitted the negative term $\mathbb{E}\left[(\boldsymbol{\theta}_{t}^{\prime}-\boldsymbol{1}_{N}\otimes \theta^{*})^{\top}(W\otimes I_{n}-I_{Nn})(\boldsymbol{\theta}_{t}^{\prime}-\boldsymbol{1}_{N}\otimes \theta^{*})\right]$.

By using the Young's inequality and the relations $\mathbb{E}[\xi_{i,t}]=0$ and $\mathbb{E}[g_{i}(\theta_{i,t}^{\prime})]=\nabla f_{i}(\theta_{i,t}^{\prime})$, we have
\begin{flalign}
&\textstyle2\lambda_{t}\sum_{i=1}^{N}\mathbb{E}[\langle-(a_{i,t}g_{i}(\theta_{i,t}^{\prime})+b_{i,t}\xi_{i,t})+\nabla f_{i}(\bar{\theta}_{t}^{\prime}),\theta_{i,t}^{\prime}-\theta^{*}\rangle]\nonumber\\
&\textstyle\quad-2\lambda_{t}\sum_{i=1}^{N}\mathbb{E}[\langle\nabla f_{i}(\bar{\theta}_{t}^{\prime}),\theta_{i,t}^{\prime}-\theta^{*}\rangle]\nonumber\\
&\textstyle\leq \frac{4N\lambda_{t}\kappa_{t}^2\delta^2}{\mu}+\frac{\mu\lambda_{t}}{4}\mathbb{E}[\|\boldsymbol{\theta}_{t}^{\prime}-\boldsymbol{1}_{N}\otimes \theta^{*}\|^2]\nonumber\\
&\textstyle\quad+\frac{4H^2\lambda_{t}}{\mu}\mathbb{E}[\|\boldsymbol{\theta}_{t}^{\prime}-\boldsymbol{1}_{N}\otimes \bar{\theta}_{t}^{\prime}\|^2]\!+\!\frac{\mu\lambda_{t}}{4}\mathbb{E}[\|\boldsymbol{\theta}_{t}^{\prime}-\boldsymbol{1}_{N}\otimes \theta^{*}\|^2]\nonumber\\
&\textstyle\quad-2\sum_{i=1}^{N}\mathbb{E}[\langle\lambda_{t}\nabla f_{i}(\bar{\theta}_{t}^{\prime}),\theta_{i,t}^{\prime}-\bar{\theta}_{t}^{\prime}\rangle]\nonumber\\
&\quad-2N\mathbb{E}[\langle\lambda_{t}(\nabla F(\bar{\theta}_{t}^{\prime})-\nabla F(\theta^{*})),\bar{\theta}_{t}^{\prime}-\theta^{*}\rangle].\label{D2L5}
\end{flalign}

The fifth term on the right hand side of~\eqref{D2L5} satisfies
\begin{flalign}
&\textstyle-2\sum_{i=1}^{N}\mathbb{E}[\langle\lambda_{t}\nabla f_{i}(\bar{\theta}_{t}^{\prime}),\theta_{i,t}^{\prime}-\bar{\theta}_{t}^{\prime}\rangle]\nonumber\\
&\textstyle=-2\sum_{i=1}^{N}\mathbb{E}[\langle\lambda_{t}\left(\nabla f_{i}(\bar{\theta}_{t}^{\prime})-\nabla f_{i}(\theta_{i}^{*})\right),\theta_{i,t}^{\prime}-\bar{\theta}_{t}^{\prime}\rangle]\nonumber\\
&\textstyle\leq2\lambda_{t}^2H^2N\mathbb{E}[\|\bar{\theta}_{t}^{\prime}-\theta^{*}\|^2]+2\lambda_{t}^2H^2\mathbb{E}[\|\boldsymbol{\theta}^{*}-\boldsymbol{1}_{N}\otimes \theta^{*}\|^2]\nonumber\\
&\textstyle\quad+\mathbb{E}[\|\boldsymbol{\theta}_{t}^{\prime}-\boldsymbol{1}_{N}\otimes \bar{\theta}_{t}^{\prime}\|^2].\label{D2L6}
\end{flalign}

The $\mu$-strong convexity of $f_{i}(\theta)$ implies that the last term on the right hand side of~\eqref{D2L5} satisfies $-2N\mathbb{E}[\langle\lambda_{t}(\nabla F(\bar{\theta}_{t}^{\prime})-\nabla F(\theta^{*})),\bar{\theta}_{t}^{\prime}-\theta^{*}\rangle]\leq -2N\lambda_{t}\mu\mathbb{E}[\|\bar{\theta}_{t}^{\prime}-\theta^{*}\|^2]$, which, combined with~\eqref{D2L6}, leads to
\begin{equation}
\begin{aligned}
&\textstyle-2\sum_{i=1}^{N}\mathbb{E}[\langle\lambda_{t}\nabla f_{i}(\bar{\theta}_{t}^{\prime}),\theta_{i,t}^{\prime}-\theta^{*}\rangle]\\
&\leq -\lambda_{t}(\mu-2H^2\lambda_{t})\mathbb{E}[\|\boldsymbol{\theta}_{t}^{\prime}-\boldsymbol{1}_{N}\otimes \theta^{*}\|^2]\\
&\quad+\left(2\lambda_{t}\left(\mu-2H^2\lambda_{t}\right)+1\right)\mathbb{E}[\|\boldsymbol{\theta}_{t}^{\prime}-\boldsymbol{1}_{N}\otimes \bar{\theta}_{t}^{\prime}\|^2]\\
&\textstyle\quad+2H^2\lambda_{t}^2\mathbb{E}[\|\boldsymbol{1}_{N}\otimes \theta^{*}-\boldsymbol{\theta}^{*}\|^2],
\label{D2L61}
\end{aligned}
\end{equation}
where we have used $N\mathbb{E}[\|\bar{\theta}_{t}^{\prime}-\theta^{*}\|^2]=\sum_{i=1}^{N}\mathbb{E}[\|\bar{\theta}_{t}^{\prime}-\theta_{i,t}^{\prime}+\theta_{i,t}^{\prime}-\theta^{*}\|^2]\geq \frac{1}{2}\sum_{i=1}^{N}\mathbb{E}[\|\theta_{i,t}^{\prime}-\theta^{*}\|^2]-\sum_{i=1}^{N}\mathbb{E}[\|\theta_{i,t}^{\prime}-\bar{\theta}_{t}^{\prime}\|^2]$.

Combining~\eqref{D2L4}, ~\eqref{D2L5}, and~\eqref{D2L61}, we arrive at
\begin{flalign}
&2\mathbb{E}[\langle(W\otimes I_{n}-I_{Nn})\boldsymbol{\theta}_{t}^{\prime}-\lambda_{t}\boldsymbol{m}_{t},\boldsymbol{\theta}_{t}^{\prime}-\boldsymbol{1}_{N}\otimes \theta^{*}\rangle]\nonumber\\
&\textstyle\leq \left(2H^2\lambda_{t}-\frac{\mu}{2}\right)\lambda_{t}\mathbb{E}[\|\boldsymbol{\theta}_{t}^{\prime}-\boldsymbol{1}_{N}\otimes \theta^{*}\|^2]\nonumber\\
&\textstyle\quad+\left(\frac{4H^2\lambda_{t}}{\mu}+2\lambda_{t}\left(\mu-2H^2\lambda_{t}\right)+1\right)\mathbb{E}[\|\boldsymbol{\theta}_{t}^{\prime}-\boldsymbol{1}_{N}\otimes \bar{\theta}_{t}^{\prime}\|^2]\nonumber\\
&\textstyle\quad+2H^2\lambda_{t}^2\mathbb{E}[\|\boldsymbol{1}_{N}\otimes \theta^{*}-\boldsymbol{\theta}^{*}\|^2]+\frac{4N\lambda_{t}\kappa_{t}^2\delta^2}{\mu}.\label{D2L7}
\end{flalign}

Further substituting~\eqref{D2L21} into~\eqref{D2L2}, and then substituting~\eqref{D2L2} and~\eqref{D2L7} into~\eqref{D2L1}, we obtain
\begin{equation}
\begin{aligned}
&\textstyle\mathbb{E}[\|\boldsymbol{\theta}_{t+1}^{\prime}-\boldsymbol{1}_{N}\otimes\theta^{*}\|^2]\\
&\textstyle\leq \left(1-\frac{\mu}{2}\lambda_{t}+10H^2\lambda_{t}^2\right)\mathbb{E}[\|\boldsymbol{\theta}_{t}^{\prime}-\boldsymbol{1}_{N}\otimes\theta^{*}\|^2]\\
&\quad+\left(5+2\lambda_{t}\mu\right)\mathbb{E}[\|\boldsymbol{\theta}_{t}^{\prime}-\boldsymbol{1}_{N}\otimes \bar{\theta}_{t}^{\prime}\|^2]\\
&\textstyle\quad+10H^2\lambda_{t}^2\mathbb{E}[\|\boldsymbol{\theta}^{*}-\boldsymbol{1}_{N}\otimes \theta^{*}\|^2]\\
&\quad+N(6\lambda_{0}+4\mu^{-1})\lambda_{t}\kappa_{t}^2\delta^2+2N\lambda_{t}^2\sigma^2,\label{D2L8}
\end{aligned}
\end{equation}
where in the derivation we have omitted the negative term $-4H^2\lambda_{t}^2\mathbb{E}[\|\boldsymbol{\theta}_{t}^{\prime}-\boldsymbol{1}_{N}\otimes \bar{\theta}_{t}^{\prime}\|^2]$.

We proceed to characterize the second term on the right hand side of~\eqref{D2L8}. By using the relation $\mathbb{E}[\|\boldsymbol{m}_{t}-\boldsymbol{1}_{N}\otimes\bar{m}_{t}\|^2]\leq \mathbb{E}[\|\boldsymbol{m}_{t}\|^2]$ and~\eqref{D2L21}, we have  
\begin{flalign}
&\mathbb{E}[\|\boldsymbol{\theta}_{t+1}^{\prime}-\boldsymbol{1}_{N}\otimes \bar{\theta}_{t+1}^{\prime}\|^2]\nonumber\\
&\textstyle\leq \rho\mathbb{E}[\|\boldsymbol{\theta}_{t}^{\prime}-\boldsymbol{1}_{N}\otimes \bar{\theta}_{t}^{\prime}\|^2]+\frac{4H^2(2-\rho)}{1-\rho}\lambda_{t}^2\mathbb{E}[\|\boldsymbol{\theta}_{t}^{\prime}-\boldsymbol{1}_{N}\otimes\theta^{*}\|^2]\nonumber\\
&\textstyle\quad+\frac{4H^2(2-\rho)}{1-\rho}\lambda_{t}^2\mathbb{E}[\|\boldsymbol{\theta}^{*}-\boldsymbol{1}_{N}\otimes\theta^{*}\|^2]\nonumber\\
&\textstyle\quad+\frac{3N(2-\rho)\lambda_{0}}{1-\rho}\lambda_{t}\kappa_{t}^2\delta^2+\frac{N(2-\rho)}{1-\rho}\lambda_{t}^2\sigma^2.\label{D2L81}
\end{flalign}

Multiplying both sides of~\eqref{D2L81} by $\frac{12}{1-\rho}$ and adding the resulting expression to both sides of~\eqref{D2L8} yield 
\begin{equation}
\begin{aligned}
&\textstyle\mathbb{E}[\|\boldsymbol{\theta}_{t+1}^{\prime}-\boldsymbol{1}_{N}\otimes\theta^{*}\|^2]+\frac{12}{1-\rho}\mathbb{E}[\|\boldsymbol{\theta}_{t+1}^{\prime}-\boldsymbol{1}_{N}\otimes \bar{\theta}_{t+1}^{\prime}\|^2]\\
&\textstyle\leq \left(1-\frac{\mu}{2}\lambda_{t}+10H^2\lambda_{t}^2+\frac{48H^2(2-\rho)}{(1-\rho)^2}\lambda_{t}^2\right)\mathbb{E}[\|\boldsymbol{\theta}_{t}^{\prime}-\boldsymbol{1}_{N}\otimes\theta^{*}\|^2]\\
&\textstyle+\!\left(5+2\lambda_{t}\mu+\frac{12\rho}{1-\rho}\right)\mathbb{E}[\|\boldsymbol{\theta}_{t}^{\prime}\!-\!\boldsymbol{1}_{N}\otimes \bar{\theta}_{t}^{\prime}\|^2]\!+c_{9}\lambda_{t}\kappa_{t}^2\delta^2\!+c_{10}\lambda_{t}^2,\nonumber
\end{aligned}
\end{equation}
with $c_{9}=N(6\lambda_{0}+\frac{4}{\mu})+\frac{72N\lambda_{0}}{(1-\rho)^2}$ and $c_{10}=2N\sigma^2+\frac{12N\sigma^2(2-\rho)}{(1-\rho)^2}$.

Since $\lambda_{t}$ is a decaying sequence, we can set $\lambda_{0}$ such that $10H^2\lambda_{0}+\frac{48H^2(2-\rho)}{(1-\rho)^2}\lambda_{0}\leq \frac{\mu}{4}$, $2\lambda_{0}\mu<1$, and $1-\frac{\mu}{4}\lambda_{0}>\frac{1+\rho}{2}$ hold. In this case, the preceding inequality can be rewritten as follows:
\begin{equation}
\begin{aligned}
&\textstyle\mathbb{E}[\|\boldsymbol{\theta}_{t+1}^{\prime}-\boldsymbol{1}_{N}\otimes\theta^{*}\|^2]+\frac{12}{1-\rho}\mathbb{E}[\|\boldsymbol{\theta}_{t+1}^{\prime}-\boldsymbol{1}_{N}\otimes \bar{\theta}_{t+1}^{\prime}\|^2]\\
&\textstyle\leq \left(1\!-\!\frac{\mu\lambda_{t}}{4}\right)\left(\mathbb{E}[\|\boldsymbol{\theta}_{t}^{\prime}-\boldsymbol{1}_{N}\otimes\theta^{*}\|^2]\right.\\
&\textstyle\left.\textstyle\quad+\frac{12}{1-\rho}\mathbb{E}[\|\boldsymbol{\theta}_{t}^{\prime}-\boldsymbol{1}_{N}\otimes \bar{\theta}_{t}^{\prime}\|^2]\right)+\frac{c_{9}\delta^2+c_{10}\lambda_{0}}{\lambda_{0}}\lambda_{t}^2.\nonumber
\end{aligned}
\end{equation}

Combining Lemma 5-(i) in the arxiv version of~\cite{JDPziji} and~the preceding inequality yields
\begin{equation}
\textstyle\sum_{i=1}^{N}\mathbb{E}[\|\theta_{i,t}^{\prime}-\theta^{*}\|^2]\leq C_{1}\lambda_{t},\label{D2L10}
\end{equation}
where  $C_{1}=\max\{\mathbb{E}[\|\boldsymbol{\theta}_{0}^{\prime}-\boldsymbol{1}_{N}\otimes\theta^{*}\|^2]+\frac{12}{1-\rho}\mathbb{E}[\|\boldsymbol{\theta}_{0}^{\prime}-\boldsymbol{1}_{N}\otimes \bar{\theta}_{0}^{\prime}\|^2],\frac{4(c_{9}\delta^2+c_{10}\lambda_{0})\lambda_{0}}{\mu\lambda_{0}-4v}\}$.

Furthermore, based on the definition of $C_{1}$, we have $C_{1}\leq \mathcal{O}\left(\frac{H^{2}(\sigma^{2}+\delta^2)}{\mu(1-\rho)^2}\right)$, which, combined with~\eqref{D2L10}, proves Theorem~\ref{MT1}-(i).
\vspace{0.6em}

(ii) When $f_{i}$ is general convex.

By using an argument similar to the derivations of~\eqref{D2L1},~\eqref{D2L2}, and~\eqref{D2L4}, we have
\begin{equation}
\begin{aligned}
&\textstyle\mathbb{E}[\|\boldsymbol{\theta}_{t+1}^{\prime}-\boldsymbol{1}_{N}\otimes \theta^{*}\|^2]\leq \mathbb{E}[\|\boldsymbol{\theta}_{t}^{\prime}-\boldsymbol{1}_{N}\otimes \theta^{*}\|^2]\\
&\textstyle\quad+2\mathbb{E}[\|(W\otimes I_{n}-I_{Nn})(\boldsymbol{\theta}_{t}^{\prime}-\boldsymbol{1}_{N}\otimes \bar{\theta}_{t}^{\prime})\|^2]\\
&\textstyle\quad+2\lambda_{t}^2\mathbb{E}[\|\boldsymbol{m}_{t}\|^2]-2\mathbb{E}[\langle\lambda_{t}\boldsymbol{m}_{t},\boldsymbol{\theta}_{t}^{\prime}-\boldsymbol{1}_{N}\otimes \theta^{*}\rangle].\label{3E4}
\end{aligned}
\end{equation}

The last term on the right hand side of~\eqref{3E4} satisfies
\begin{equation}
\begin{aligned}
&\textstyle-2\mathbb{E}[\langle\lambda_{t}\boldsymbol{m}_{t},\boldsymbol{\theta}_{t}^{\prime}-\boldsymbol{1}_{N}\otimes \theta^{*}\rangle]\\
&\textstyle =-2\sum_{i=1}^{N}\mathbb{E}[\langle\lambda_{t}g_{i}(\theta_{i,t}^{\prime}),\theta_{i,t}^{\prime}- \theta^{*}\rangle]\\
&\textstyle\quad+2\sum_{i=1}^{N}\mathbb{E}[\langle\lambda_{t}g_{i}(\theta_{i,t}^{\prime})-\lambda_{t}m_{i,t},\theta_{i,t}^{\prime}- \theta^{*}\rangle].\label{3E5}
\end{aligned}
\end{equation}
The first term on the right hand side of~\eqref{3E5} satisfies
\begin{equation}
\begin{aligned}
&\textstyle-2\sum_{i=1}^{N}\mathbb{E}\left[\left\langle\lambda_{t}g_{i}(\theta_{i,t}^{\prime}),\theta_{i,t}^{\prime}- \theta^{*}\right\rangle\right]\\
&\textstyle=-2\lambda_{t}\sum_{i=1}^{N}\mathbb{E}[\langle \nabla f_{i}(\theta_{i,t}^{\prime})-\nabla f_{i}(\bar{\theta}_{t}^{\prime}),\theta_{i,t}^{\prime}- \theta^{*}\rangle]\\
&\textstyle\quad-2\lambda_{t}\sum_{i=1}^{N}\mathbb{E}[\langle \nabla f_{i}(\bar{\theta}_{t}^{\prime}),\theta_{i,t}^{\prime}- \theta^{*}\rangle].\label{3E6}
\end{aligned}
\end{equation}
By introducing an auxiliary sequence $\beta_{t}=\frac{1}{(t+1)^{u}}$ with $1<u<2v$, the first term on the right hand side of~\eqref{3E6} satisfies
\begin{equation}
\begin{aligned}
&\textstyle-2\lambda_{t}\sum_{i=1}^{N}\mathbb{E}[\langle \nabla f_{i}(\theta_{i,t}^{\prime})-\nabla f_{i}(\bar{\theta}_{t}^{\prime}),\theta_{i,t}^{\prime}- \theta^{*}\rangle]\\
&\textstyle\leq \frac{\lambda_{t}^{2}}{\beta_{t}}H^2\mathbb{E}[\|\boldsymbol{\theta}_{t}^{\prime}-\boldsymbol{1}_{N}\otimes \bar{\theta}_{t}^{\prime}\|^2]+\beta_{t}\mathbb{E}[\|\boldsymbol{\theta}_{t}^{\prime}-\boldsymbol{1}_{N}\otimes \theta^{*}\|^2].\label{3E7}
\end{aligned}
\end{equation}
By using the relations $\sum_{i=1}^{N}\nabla f_{i}(\bar{\theta}_{t}^{\prime})=N\nabla F(\bar{\theta}_{t}^{\prime})$ and $\sum_{i=1}^{N}\theta_{i,t}^{\prime}=N\bar{\theta}_{t}^{\prime}$, and the convexity of $F(\theta)$, the second term on the right hand side of~\eqref{3E6} satisfies
\begin{equation}
\textstyle-\sum_{i=1}^{N}\mathbb{E}[\langle \nabla f_{i}(\bar{\theta}_{t}^{\prime}),\theta_{i,t}^{\prime}- \theta^{*}\rangle]\leq -N\mathbb{E}[F(\bar{\theta}_{t}^{\prime})-F(\theta^{*})].\nonumber
\end{equation}
Substituting~\eqref{3E7} and the preceding inequality into~\eqref{3E6}, the first term on the right hand side of~\eqref{3E5} satisfies
\begin{equation}
\begin{aligned}
&\textstyle-2\sum_{i=1}^{N}\mathbb{E}[\langle\lambda_{t}g_{i}(\theta_{i,t}^{\prime}),\theta_{i,t}^{\prime}- \theta^{*}\rangle]\\
&\textstyle\leq -2N\lambda_{t}\mathbb{E}[F(\bar{\theta}_{t}^{\prime})-F(\theta^{*})]+\frac{\lambda_{t}^{2}}{\beta_{t}}H^2\mathbb{E}[\|\boldsymbol{\theta}_{t}^{\prime}-\boldsymbol{1}_{N}\otimes \bar{\theta}_{t}^{\prime}\|^2]\\
&\textstyle\quad+\beta_{t}\mathbb{E}[\|\boldsymbol{\theta}_{t}^{\prime}-\boldsymbol{1}_{N}\otimes \theta^{*}\|^2].\label{3E9}
\end{aligned}
\end{equation}

We proceed to characterize the second term on the right hand side of~\eqref{3E5}. By using the Young's inequality, we have
\begin{equation}
\begin{aligned}
&\textstyle2\sum_{i=1}^{N}\mathbb{E}[\langle\lambda_{t}g_{i}(\theta_{i,t}^{\prime})-\lambda_{t}m_{i,t},\theta_{i,t}^{\prime}- \theta^{*}\rangle]\\
&\textstyle\leq \frac{\lambda_{t}^{2}}{\beta_{t}}\sum_{i=1}^{N}\mathbb{E}[\|a_{i,t}g_{i}(\theta_{i,t}^{\prime})-g_{i}(\theta_{i,t}^{\prime})\|^2]\\
&\textstyle\quad+\beta_{t}\mathbb{E}[\|\boldsymbol{\theta}_{t}^{\prime}-\boldsymbol{1}_{N}\otimes \theta^{*}\|^2].\label{3E10}
\end{aligned}
\end{equation}

Substituting~\eqref{3E9} and~\eqref{3E10} into~\eqref{3E5} and then substituting~\eqref{3E5} into~\eqref{3E4}, we arrive at
\begin{equation}
\begin{aligned}
&\textstyle\mathbb{E}[\|\boldsymbol{\theta}_{t+1}^{\prime}-\boldsymbol{1}_{N}\otimes \theta^{*}\|^2]\leq-2N\lambda_{t}\mathbb{E}[F(\bar{\theta}_{t}^{\prime})-F(\theta^{*})]\\
&\quad+(1+2\beta_{t}) \mathbb{E}[\|\boldsymbol{\theta}_{t}^{\prime}-\boldsymbol{1}_{N}\otimes \theta^{*}\|^2]+\Phi_{t},\label{3E11}
\end{aligned}
\end{equation}
where the term $\Phi_{t}$ is given by
\begin{flalign}
\Phi_{t}&\textstyle=\frac{\lambda_{t}^{2}}{\beta_{t}}\sum_{i=1}^{N}\mathbb{E}[\|(a_{i,t}-1)g_{i}(\theta_{i,t}^{\prime})\|^2]+2\lambda_{t}^2\mathbb{E}[\|\boldsymbol{m}_{t}\|^2]\nonumber\\
&\textstyle\quad+\left(2+\frac{\lambda_{t}^{2}}{\beta_{t}}H^2\right)\mathbb{E}[\|\boldsymbol{\theta}_{t}^{\prime}-\boldsymbol{1}_{N}\otimes \bar{\theta}_{t}^{\prime}\|^2].\label{3E12}
\end{flalign}
Since the relation $F(\bar{\theta}_{t}^{\prime})\geq F(\theta^{*})$ always holds, we omit the negative term $-2N\lambda_{t}\mathbb{E}[F(\bar{\theta}_{t}^{\prime})-F(\theta^{*})]$ in~\eqref{3E11} to obtain
\begin{flalign}
&\textstyle\mathbb{E}[\|\boldsymbol{\theta}_{t+1}^{\prime}-\boldsymbol{1}_{N}\otimes \theta^{*}\|^2]\!\leq \!(1+2\beta_{t}) \mathbb{E}[\|\boldsymbol{\theta}_{t}^{\prime}-\boldsymbol{1}_{N}\otimes \theta^{*}\|^2]\!+\!\Phi_{t},\nonumber\\
&\textstyle\leq e^{\frac{2\beta_{0}(u-1)+1}{u-1}}\left(\mathbb{E}[\|\boldsymbol{\theta}_{0}^{\prime}-\boldsymbol{1}_{N}\otimes \theta^{*}\|^2]+\sum_{k=0}^{t}\Phi_{k}\right),\label{3E13}
\end{flalign}
where in the derivation we have used an argument similar to the derivation of~\eqref{SL12}.

Next, we estimate an upper bound on $\sum_{k=0}^{t}\Phi_{k}$. To this end, we first characterize the last term on the right hand side of~\eqref{3E12}. By using the relation $\mathbb{E}[\|\boldsymbol{m}_{t}-\boldsymbol{1}_{N}\otimes\bar{m}_{t}\|^2]\leq \mathbb{E}[\|\boldsymbol{m}_{t}\|^2]$ and $\left(1+(1-\rho)\right)\rho^2\leq \rho$, we have
\begin{equation}
\begin{aligned}
\mathbb{E}[\|\boldsymbol{\theta}_{t+1}^{\prime}-\boldsymbol{1}_{N}\otimes \bar{\theta}_{t+1}^{\prime}\|^2]&\textstyle\leq \rho\mathbb{E}[\|\boldsymbol{\theta}_{t}^{\prime}-\boldsymbol{1}_{N}\otimes \bar{\theta}_{t}^{\prime}\|^2]\\
&\textstyle+\frac{2-\rho}{1-\rho}\lambda_{t}^2\mathbb{E}[\|\boldsymbol{m}_{t}\|^2].\label{cons2}
\end{aligned}
\end{equation}
Assumption~\ref{A1} and~\eqref{ML1result} imply that the last term on the right hand side of~\eqref{cons2} satisfies
\begin{flalign}
\mathbb{E}[\|\boldsymbol{m}_{t}\|^2]\leq N(2\delta^2+\sigma^2+L_{f}^2+2\delta L_{f}).\label{cons3}
\end{flalign}

Substituting~\eqref{cons3} into~\eqref{cons2}, we obtain
\begin{equation}
\begin{aligned}
&\textstyle\mathbb{E}[\|\boldsymbol{\theta}_{t+1}^{\prime}-\boldsymbol{1}_{N}\otimes \bar{\theta}_{t+1}^{\prime}\|^2]\leq\rho\mathbb{E}\left[\|\boldsymbol{\theta}_{t}^{\prime}-\boldsymbol{1}_{N}\otimes \bar{\theta}_{t}^{\prime}\|^2\right]\\
&\textstyle\quad+ \frac{4N(2-\rho)(2\delta^2+\sigma^2+L_{f}^2+2\delta L_{f})}{1-\rho}\lambda_{t}^2.\label{cons4}
\end{aligned}
\end{equation}
By using the relationship $\mathbb{E}[\|\boldsymbol{\theta}_{0}^{\prime}-\boldsymbol{1}_{N}\otimes \bar{\theta}_{0}^{\prime}\|^2]=\mathbb{E}[\|\boldsymbol{\theta}_{0}-\boldsymbol{1}_{N}\otimes \bar{\theta}_{0}\|^2]$ and iterating~\eqref{cons4} from $0$ to $t$, we arrive at
\begin{equation}
\begin{aligned}
&\textstyle\mathbb{E}[\|\boldsymbol{\theta}_{t}^{\prime}-\boldsymbol{1}_{N}\otimes \bar{\theta}_{t}^{\prime}\|^2]\leq\rho^t\mathbb{E}[\|\boldsymbol{\theta}_{0}-\boldsymbol{1}_{N}\otimes \bar{\theta}_{0}\|^2]\\
&\textstyle\quad+\frac{4N(2-\rho)(2\delta^2+\sigma^2+L_{f}^2+2\delta L_{f})}{(1-\rho)^2}\lambda_{t}^2\leq c_{11}\lambda_{t}^2,\label{cons5}
\end{aligned}
\end{equation}
with $c_{11}=\frac{16\mathbb{E}[\|\boldsymbol{\theta}_{0}-\boldsymbol{1}_{N}\otimes \bar{\theta}_{0}\|^2]}{e^2(1-\rho)^2\lambda_0^2}+\frac{4N(2-\rho)(2\delta^2+\sigma^2+L_{f}^2+2\delta L_{f})}{(1-\rho)^2}$. Here, in the last inequality, we have used the fact that Lemma~7 in~\cite{zijiGT} proves $\rho^t\leq\frac{16}{e^2(\ln(\rho))^2(t+1)^2}\leq\frac{16\lambda_t^2}{e^2(\ln(\rho))^2\lambda_0^2}$, which implies that for any $\rho\in(0,1)$, we have $-\ln(\rho)\geq1-\rho$.

By substituting~\eqref{cons3} and~\eqref{cons5}  into~\eqref{3E12} and using the relationship $\mathbb{E}[\|a_{i,k}g_{i}(\theta_{i,k}^{\prime})-g_{i}(\theta_{i,k}^{\prime})\|^2]\leq \kappa_{k}^2\delta^2$, we obtain
\begin{equation}
\begin{aligned}
\textstyle\sum_{k=0}^{t}\Phi_{k}&\textstyle\leq \sum_{k=0}^{t}\left(\frac{N\lambda_{0}^{2}\delta^2}{(k+1)^{2v+2r-u}}+\frac{c_{12}}{(k+1)^{2v}}\right),\label{3E16}
\end{aligned}
\end{equation}
with $c_{12}= 4N\delta^2+2N\sigma^2+2NL_f^2+4N\delta L_{f}+c_{3}(2+\lambda_{0}^2H^2)$. Here, we have used the relation $\frac{\lambda_{k}^{2}}{\beta_{k}}=\frac{\lambda_{0}^{2}}{(k+1)^{2v-u}}\leq \lambda_{0}^{2}$.

By using the following inequality:
\begin{equation} \textstyle\sum_{k=0}^{t}\frac{1}{(k+1)^{u}}\leq 1+\int_{k=1}^{\infty}\frac{1}{x^{u}}dx\leq \frac{u}{u-1},\label{3E17}
\end{equation}
which is true for any $u>1$, we can rewrite~\eqref{3E16} as follows:
\begin{equation}
\textstyle\sum_{k=0}^{t}\Phi_{k}\leq \frac{(2v+2r-u)N\lambda_{0}^{2}\delta^2}{2v+2r-u-1}+\frac{2vc_{12}}{2v-1}\triangleq c_{13},\label{3E18}
\end{equation}
where the constant $c_{12}$ is given in~\eqref{3E16}.

Substituting~\eqref{3E18} into~\eqref{3E13}, we can arrive at
\begin{equation}
\begin{aligned}
&\textstyle\mathbb{E}[\|\boldsymbol{\theta}_{t+1}^{\prime}-\boldsymbol{1}_{N}\otimes \theta^{*}\|^2]\\
&\textstyle\leq e^{\frac{2\beta_{0}(u-1)+1}{u-1}}\left(\mathbb{E}[\|\boldsymbol{\theta}_{0}^{\prime}-\boldsymbol{1}_{N}\otimes \theta^{*}\|^2]+c_{13}\right).\label{3E19}
\end{aligned}
\end{equation}

We proceed to sum both sides of~\eqref{3E11} from $0$ to $T$:
\begin{equation}
\begin{aligned}
&\textstyle\sum_{t=0}^{T}2N\lambda_{t}\mathbb{E}[F(\bar{\theta}_{t}^{\prime })-F(\theta^{*})]\\
&\textstyle\leq -\sum_{t=0}^{T}\mathbb{E}[\|\boldsymbol{\theta}_{t+1}^{\prime}-\boldsymbol{1}_{N}\otimes \theta^{*}\|^2]\\
&\textstyle\quad+\sum_{t=0}^{T}(1+2\beta_{t})\mathbb{E}[\|\boldsymbol{\theta}_{t}^{\prime}-\boldsymbol{1}_{N}\otimes \theta^{*}\|^2]+\sum_{t=0}^{T}\Phi_{t}.\label{3E20}
\end{aligned}
\end{equation}
By using $\beta_{0}=1$, the first and second terms on the right hand side of~\eqref{3E20} can be simplified as follows:
\begin{flalign}
&\textstyle\sum_{t=0}^{T}(1+2\beta_{t})\mathbb{E}[\|\boldsymbol{\theta}_{t}^{\prime}-\boldsymbol{1}_{N}\otimes \theta^{*}\|^2]\nonumber\\
&\textstyle\quad-\sum_{t=0}^{T}\mathbb{E}[\|\boldsymbol{\theta}_{t+1}^{\prime}-\boldsymbol{1}_{N}\otimes \theta^{*}\|^2]\nonumber\\
&\textstyle\leq 2\mathbb{E}[\|\boldsymbol{\theta}_{0}^{\prime}-\boldsymbol{1}_{N}\otimes \theta^{*}\|^2]+\sum_{t=1}^{T}2\beta_{t}\mathbb{E}[\|\boldsymbol{\theta}_{t}^{\prime}-\boldsymbol{1}_{N}\otimes \theta^{*}\|^2]\nonumber\\
&\textstyle\quad+\mathbb{E}[\|\boldsymbol{\theta}_{0}^{\prime}-\boldsymbol{1}_{N}\otimes \theta^{*}\|^2]-\mathbb{E}[\|\boldsymbol{\theta}_{t+1}^{\prime}-\boldsymbol{1}_{N}\otimes \theta^{*}\|^2]\nonumber\\
&\textstyle\leq \sum_{t=1}^{T}\frac{2}{(t+1)^{u}}\left(e^{\frac{2(u-1)+1}{u-1}}\left(\mathbb{E}[\|\boldsymbol{\theta}_{0}^{\prime}-\boldsymbol{1}_{N}\otimes \theta^{*}\|^2]+c_{13}\right)\right)\nonumber\\
&\textstyle\quad+3\mathbb{E}[\|\boldsymbol{\theta}_{0}^{\prime}-\boldsymbol{1}_{N}\otimes \theta^{*}\|^2]\nonumber\\
&\textstyle\leq \left(\frac{2ue^{\frac{2u-1}{u-1}}}{u-1}+3\right)\mathbb{E}[\|\boldsymbol{\theta}_{0}^{\prime}\!-\!\boldsymbol{1}_{N}\otimes \theta^{*}\|^2]\!+\!\frac{2c_{13}u}{u-1}\triangleq c_{14},\label{3E21}
\end{flalign}
where we have used~\eqref{3E19} in the second inequality and~\eqref{3E17} in the last inequality.

Substituting~\eqref{3E18} and~\eqref{3E21} into~\eqref{3E20}, and using $\lambda_{T}\leq\lambda_{t}$ for any $t\in[0,T]$, we have $\sum_{t=0}^{T}2N\lambda_{t}\mathbb{E}[F(\bar{\theta}_{t}^{\prime })-F(\theta^{*})]\leq c_{13}+c_{14},$ which further implies
\begin{equation}
\textstyle\frac{1}{T+1}\sum_{t=0}^{T}\mathbb{E}\left[F(\bar{\theta}_{t}^{\prime })-F(\theta^{*})\right]\leq \frac{C_{2}}{N(T+1)^{1-v}},\label{3E22}
\end{equation}
where $C_{2}\!=\!\frac{c_{13}+c_{14}}{2\lambda_{0}}$ with $c_{13}$ given in~\eqref{3E18} and $c_{6}$ given in~\eqref{3E21}.

Assumption~\ref{A1} implies $\mathbb{E}[F(\theta_{i,t}^{\prime})-F(\bar{\theta}_{t}^{\prime})]\leq L_{f}\mathbb{E}[\|\theta_{i,t}^{\prime}-\bar{\theta}_{t}^{\prime }\|]$. By using~\eqref{cons5}, we have
\begin{equation}
\textstyle\mathbb{E}\left[F(\theta_{i,t}^{\prime})-F(\bar{\theta}_{t}^{\prime})\right]\leq \frac{L_{f}\sqrt{c_{11}}\lambda_{0}}{(t+1)^{v}}.\label{3E23}
\end{equation}
Since $\sum_{t=0}^{T}\frac{1}{(t+1)^{p}}\leq \int_{x=0}^{T+1}\frac{1}{x^{p}}dx\leq \frac{(T+1)^{1-p}}{1-p}$ always holds for any $p\in(0,1)$, we arrive at 
\begin{equation}
\textstyle\frac{1}{T+1}\sum_{t=0}^{T}\mathbb{E}\left[F(\theta_{i,t}^{\prime})-F(\bar{\theta}_{t}^{\prime})\right]\leq \frac{L_{f}\sqrt{c_{11}}\lambda_{0}}{(1-v)(T+1)^{v}}.\label{3E24}
\end{equation}

By combining~\eqref{3E22} and~\eqref{3E24}, we arrive at 
\begin{equation}
\textstyle\frac{1}{T+1}\sum_{t=0}^{T}\mathbb{E}\left[F(\theta_{i,t}^{\prime})-F(\theta^{*})\right]\leq C_{3}(T+1)^{-(1-v)},\label{3E25}
\end{equation}
where the constant $C_{3}$ is given by $C_{3}=\frac{NL_{f}\sqrt{c_{11}}\lambda_{0}}{1-v}+C_{2}$ with $c_{11}$ given in~\eqref{cons5} and $C_{2}$ given in~\eqref{3E22}.

The definition of $C_{3}$ implies $C_{3}\leq\mathcal{O}\left(\frac{H^{2}(\sigma^{2}+L_{f}^2+\delta^2)}{(1-\rho)^2}\right)$, which, combined with~\eqref{3E25}, proves Theorem~\ref{MT1}-(ii).

\subsection{Proof of Theorem~\ref{MT2}}
To prove Theorem \ref{MT2}, we introduce the following auxiliary lemma, which is used in the analysis of strongly convex objective functions, while it is not needed in the convex case.

\begin{lemma}\label{F1lemma1}
Under the conditions in Lemma~\ref{ML1}, for any $t\geq0$, if $F(\theta)$ is $\mu$-strongly convex, the following inequality holds:
\begin{equation}
\textstyle\sum_{i=1}^{N}\mathbb{E}\left[\|\theta_{i,t}-\theta_{i,t}^{\prime}\|^2\right]\leq C_{4}\lambda_{t},\label{F1lemma1result}
\end{equation}
where $C_{4}$ is given by $C_{4}=\max\{\mathbb{E}[\|\boldsymbol{\theta}_{0}-\boldsymbol{\theta}_{0}^{\prime}\|^2],\frac{2d_{3}\lambda_{0}^2}{\mu\lambda_{0}-2v}\}$ with $d_{3}=(2d_{1}d_{2}+\frac{4N\delta^2}{\lambda_{0}})(\lambda_{0}+\frac{2}{\mu})$, $d_{1}=\frac{4H^2C_{1}(3-\rho)(N+3-\rho)}{1-\rho}+\frac{4N(3-\rho)(4-\rho)\delta^2}{(1-\rho)\lambda_{0}}$, and $d_{2}=\frac{4^{3v+1}}{(1-\bar{\rho})(\ln(\sqrt{\bar{\rho}})e)^{4}}$, $C_{1}$ given in~\eqref{D2L10},  and $\bar{\rho}=\frac{1+\rho}{2}$.
\end{lemma}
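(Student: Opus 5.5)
We will compare the truthful trajectory $\boldsymbol{\theta}_{t}$ and the deviating trajectory $\boldsymbol{\theta}_{t}^{\prime}$ through the difference $\boldsymbol{\Xi}_{t}=\boldsymbol{\theta}_{t}-\boldsymbol{\theta}_{t}^{\prime}$, and derive a contraction-type recursion for a Lyapunov combination of its average part and its consensus part. Unlike Lemma~\ref{sensitiveLemma1}, which only gives the growth factor $d_{t\to T}$ because it uses convexity alone, here $\mu$-strong convexity lets the average part contract at rate $1-\mu\lambda_t/2$. The only driving term is the manipulation $\lambda_t\big((a_{i,t}-1)g_i(\theta_{i,t}^{\prime})+b_{i,t}\xi_{i,t}\big)$, which by Lemma~\ref{ML1} has second moment at most of order $\lambda_t^2\kappa_t^2\delta^2$.

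\emph{Average part.} From Algorithm~\ref{algorithm}, $\bar{\Xi}_{t+1}=\bar{\Xi}_t-\lambda_t\big(\bar g(\boldsymbol{\theta}_t)-\bar g(\boldsymbol{\theta}_t^{\prime})\big)+\lambda_t\bar e_t$, where $e_{i,t}=m_{i,t}-g_i(\theta_{i,t}^{\prime})$. Expanding the square, I will use strong convexity on the cross term at the averaged points,
\[
-2\lambda_t\langle\bar\Xi_t,\nabla F(\bar\theta_t)-\nabla F(\bar\theta_t^{\prime})\rangle\le-2\mu\lambda_t\|\bar\Xi_t\|^2 .
\]
The mismatch between local and averaged gradients is controlled by $H$ times the consensus errors. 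These consensus errors are bounded by $\mathcal{O}(\lambda_t^2)$ for each trajectory separately, via the argument giving~\eqref{D2L81} and~\eqref{D2L10} with constant $C_1$; this is where $C_1$ enters $d_1$. The perturbation cross term $2\lambda_t\langle\bar\Xi_t,\bar e_t\rangle$ will be split by Young's inequality into $\frac{\mu\lambda_t}{2}\|\bar\Xi_t\|^2+\frac{2\lambda_t}{\mu}\|\bar e_t\|^2$, which produces the factor $(\lambda_0+2/\mu)$ appearing in $d_3$.

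\emph{Consensus part.} For the disagreement $\boldsymbol{\Xi}_t-\boldsymbol 1\otimes\bar\Xi_t$, I will obtain a recursion of the form
\[
\|\cdot\|_{t+1}^2\le\bar\rho\,\|\cdot\|_t^2+d_1\lambda_t^2,
\]
with $\bar\rho=\frac{1+\rho}{2}$, using Assumption~\ref{A2}, the $H$-Lipschitz property of $g_i$, the bound~\eqref{D2L10}, and $\kappa_t\le1$. Unrolling this recursion and applying Lemma~\ref{2Clemma3} with $\gamma=\sqrt{\bar\rho}$ gives a consensus bound $d_1d_2\lambda_t^2$, which explains the constant $d_2$.

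\emph{Closing the recursion.} Substituting both bounds yields
\[
\mathbb{E}\|\boldsymbol\Xi_{t+1}\|^2\le(1-\tfrac{\mu}{2}\lambda_t)\,\mathbb{E}\|\boldsymbol\Xi_t\|^2+d_3\lambda_t^2
\]
for $\lambda_0$ small. Applying Lemma~5-(i) of~\cite{JDPziji}, as was done for~\eqref{D2L10}, then gives $C_4\lambda_t$ with the stated maximum.

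The main obstacle is handling the gradient-difference cross terms at non-averaged points without losing the $\mu\lambda_t$ contraction. This requires showing that the terms of order $\lambda_t\cdot\text{consensus}$ are genuinely $\mathcal{O}(\lambda_t^3)$, hence absorbable into $d_3\lambda_t^2$. That step relies on the $\mathcal{O}(\lambda_t^2)$ consensus bound established above, together with $v>1/2$.
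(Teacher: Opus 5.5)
Your argument is correct in substance, but it takes a different route from the paper's. The paper never splits $\boldsymbol{\Xi}_t=\boldsymbol{\theta}_t-\boldsymbol{\theta}_t'$ into average and disagreement. Instead it writes $\boldsymbol{\Xi}_{t+1}=[\boldsymbol{\Xi}_t-\lambda_tG_{1,t}]+[(W\otimes I_n-I_{Nn})\boldsymbol{\Xi}_t-\lambda_tG_{2,t}]$. It contracts the first bracket by $1-2\mu\lambda_t+H^2\lambda_t^2$ using strong convexity of each $f_i$ at the local points (step \eqref{1F3}), and separates the two brackets by Young's inequality with $\tau_1=\mu\lambda_t/2$. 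Because $1+1/\tau_1\approx 2/(\mu\lambda_t)$, the mixing bracket must be $\mathcal{O}(\lambda_t^3)$. The paper therefore shows that the disagreement of the \emph{difference} is $\mathcal{O}(\lambda_t^3)$ through three steps:
\begin{itemize}
\item a $\bar\rho^{2}$-contraction forced by $d_1\lambda_t^3$;
\item this forcing rests on the a priori bound $\|\boldsymbol{\Xi}_t\|^2\lesssim C_1\lambda_t$ (from \eqref{D2L10} applied to both trajectories) and on $v\le 2r$;
\item then Lemma~\ref{2Clemma3} with $\gamma=\bar\rho$ and $\eta_k=\lambda_k^{3/2}$.
\end{itemize}
That is where $d_2$ comes from. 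The factor $(\lambda_0+2/\mu)$ is the bound on $(1+1/\tau_1)\lambda_t$, and it multiplies the whole consensus-plus-manipulation term.

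Your split $\|\boldsymbol{\Xi}_t\|^2=N\|\bar\Xi_t\|^2+\|\boldsymbol{\Xi}_t-\boldsymbol{1}_N\otimes\bar\Xi_t\|^2$ is exact and removes mixing from the average dynamics. The disagreement then enters only additively, with no $1/\lambda_t$ amplification; even an $\mathcal{O}(\lambda_t)$ disagreement bound would suffice. Your recursion also uses strong convexity only of $F$, which is what the lemma actually assumes. The price is the local-versus-averaged gradient mismatch: you must import separate consensus bounds for both trajectories (as in \eqref{P7}), whereas the paper's route needs no per-trajectory consensus estimates.

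Two corrections:
\begin{itemize}
\item Absorbing $\frac{2\lambda_t}{\mu}\|\bar e_t\|^2\lesssim\lambda_t\kappa_t^2\delta^2$ into $\mathcal{O}(\lambda_t^2)$ requires $\kappa_t^2\lesssim\lambda_t$, i.e.\ $2r\ge v$. This holds here because $2r>2-2v>v$. It is this condition you need, not $v>1/2$.
\item Your bookkeeping does not reproduce the stated constants. Lemma~\ref{2Clemma3} with $\gamma=\sqrt{\bar\rho}$ and $\eta_k=\lambda_k$ gives a constant different from $d_2$. Your disagreement forcing is of order $\lambda_t^2$ rather than $\lambda_t^3$. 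Your factor $(\lambda_0+2/\mu)$ attaches only to the manipulation term. So you obtain a bound of the form $C_4\lambda_t$, but with different explicit constants.
\end{itemize}
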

\begin{proof}
The dynamics of $\theta_{i,t}$ in Algorithm~\ref{algorithm} implies
\begin{equation}
\begin{aligned}
&\textstyle\mathbb{E}[\|\boldsymbol{\theta}_{t+1}-\boldsymbol{\theta}_{t+1}^{\prime}\|^2]\\
&\textstyle\leq\mathbb{E}[\|(W\otimes I_{n})(\boldsymbol{\theta}_{t}-\boldsymbol{\theta}_{t}^{\prime})-\lambda_{t}(\boldsymbol{g}(\boldsymbol{\theta}_{t})-\boldsymbol{m}_{t})\|^2],\label{1F1}
\end{aligned}
\end{equation}
where $\boldsymbol{g}(\boldsymbol{\theta}_{t})$ and $\boldsymbol{m}_{t}$ satisfy
\begin{equation}
\begin{aligned}
&\textstyle\boldsymbol{g}(\boldsymbol{\theta}_{t})-\boldsymbol{m}_{t}=\underbrace{\boldsymbol{g}(\boldsymbol{\theta}_{t})-\boldsymbol{g}(\boldsymbol{\theta}_{t}^{\prime})
}_{G_{1,t}}\\
&\textstyle+
\underbrace{
\begin{pmatrix}
(1-a_{1,t})g_{1}(\theta_{1,t}^{\prime})+b_{1,t}\xi_{1,t} \\
\vdots \\
(1-a_{N,t})g_{N}(\theta_{N,t}^{\prime})+b_{N,t}\xi_{N,t}
\end{pmatrix}
}_{G_{2,t}}.\label{decomg}
\end{aligned}
\end{equation}
By substituting~\eqref{decomg} into~\eqref{1F1} and using the Young's inequality, we obtain
\begin{equation}
\begin{aligned}
&\textstyle\mathbb{E}[\|\boldsymbol{\theta}_{t+1}-\boldsymbol{\theta}_{t+1}^{\prime}\|^2]\leq (1+\tau_{1})\mathbb{E}[\|\boldsymbol{\theta}_{t}-\boldsymbol{\theta}_{t}^{\prime}-\lambda_{t}G_{1,t}\|^2]\\
&\textstyle+\left(1\!+\!\frac{1}{\tau_{1}}\right)\mathbb{E}[\|(W\otimes I_{n}-I_{Nn})(\boldsymbol{\theta}_{t}\!-\!\boldsymbol{\theta}_{t}^{\prime})-\lambda_{t}G_{2,t}\|^2],\label{1F2}
\end{aligned}
\end{equation}
for any $\tau_{1}>0$.

According to the definition of $G_{1,t}$ in~\eqref{decomg}, the first term on the right hand side of~\eqref{1F2} satisfies
\begin{flalign}
&\textstyle\mathbb{E}[\|\boldsymbol{\theta}_{t}-\boldsymbol{\theta}_{t}^{\prime}-\lambda_{t}G_{1,t}\|^2]\nonumber\\
&\textstyle\leq \mathbb{E}[\|\boldsymbol{\theta}_{t}-\boldsymbol{\theta}_{t}^{\prime}\|^2]+\lambda_{t}^{2}\mathbb{E}[\|G_{1,t}\|^2]+2\mathbb{E}[\langle\boldsymbol{\theta}_{t}-\boldsymbol{\theta}_{t}^{\prime},-\lambda_{t}G_{1,t}\rangle]\nonumber\\
&\leq \left(1-2\mu\lambda_{t}+H^2\lambda_{t}^2\right)\mathbb{E}[\|\boldsymbol{\theta}_{t}-\boldsymbol{\theta}_{t}^{\prime}\|^2],\label{1F3}
\end{flalign}
where we have used the $\mu$-strong convexity of $f_{i}(\theta)$ in the second inequality and the $H_{i}$-Lipschitz continuity of $g_{i}(\theta)$ in the last inequality.

The second term on the right hand side of~\eqref{1F2} satisfies
\begin{flalign}
&\textstyle\mathbb{E}[\|(W\otimes I_{n}-I_{Nn})(\boldsymbol{\theta}_{t}-\boldsymbol{\theta}_{t}^{\prime})-\lambda_{t}G_{2,t}\|^2]\nonumber\\
&\textstyle\leq 2\mathbb{E}[\|(W\otimes I_{n}-I_{Nn})((\boldsymbol{\theta}_{t}-\boldsymbol{\theta}_{t}^{\prime})-\boldsymbol{1}_{N}\otimes(\bar{\theta}_{t}-\bar{\theta}_{t}^{\prime}))\|^2]\nonumber\\
&\textstyle\quad+2\lambda_{t}^2\mathbb{E}[\|G_{2,t}\|^2].\label{1F4}
\end{flalign}
Algorithm~\ref{algorithm} implies that the first term on the right hand side of~\eqref{1F4} satisfies
\begin{equation}
\begin{aligned}
&\mathbb{E}[\|(\boldsymbol{\theta}_{t+1}-\boldsymbol{\theta}_{t+1}^{\prime})-\boldsymbol{1}_{N}\otimes(\bar{\theta}_{t+1}-\bar{\theta}_{t+1}^{\prime})\|^2]\\
&\textstyle\leq \left(1+\tau_{2}\right)\left(1+\tau_{3}\right)\\
&\textstyle\quad\times\mathbb{E}[\|(W\otimes I_{n})((\boldsymbol{\theta}_{t}-\boldsymbol{\theta}_{t}^{\prime})-\boldsymbol{1}_{N}\otimes(\bar{\theta}_{t}-\bar{\theta}_{t}^{\prime}))\|^2]\\
&\textstyle \quad+\left(1+\tau_{2}\right)\left(1+\frac{1}{\tau_{3}}\right)\lambda_{t}^2\mathbb{E}[\|\boldsymbol{g}(\boldsymbol{\theta}_{t})-\boldsymbol{m}_{t}\|^2]\\
&\textstyle\quad +N\left(1+\frac{1}{\tau_{2}}\right)\lambda_{t}^2\mathbb{E}[\|\bar{g}(\boldsymbol{\theta}_{t})-\bar{m}_{t}\|^2],\nonumber
\end{aligned}
\end{equation}
for any $\tau_{2}>0$ and $\tau_{3}>0$.

By setting $\tau_{2}=\tau_{3}=\frac{1-\rho}{2}$ and using the the relationship $\left(1+\frac{1-\rho}{2}\right)\rho<1-\frac{1-\rho}{2}<1$, we obtain
\begin{equation}
\begin{aligned}
&\textstyle\mathbb{E}[\|(\boldsymbol{\theta}_{t+1}-\boldsymbol{\theta}_{t+1}^{\prime})-\boldsymbol{1}_{N}\otimes(\bar{\theta}_{t+1}-\bar{\theta}_{t+1}^{\prime})\|^2]\\
&\textstyle\leq \left(\frac{1+\rho}{2}\right)^2\mathbb{E}[\|(\boldsymbol{\theta}_{t}-\boldsymbol{\theta}_{t}^{\prime})-\boldsymbol{1}_{N}\otimes(\bar{\theta}_{t}-\bar{\theta}_{t}^{\prime})\|^2]\\
&\textstyle\quad+\frac{(3-\rho)^2}{1-\rho}\lambda_{t}^2\mathbb{E}[\|\boldsymbol{g}(\boldsymbol{\theta}_{t})-\boldsymbol{m}_{t}\|^2]\\
&\textstyle\quad +\frac{N(3-\rho)}{1-\rho}\lambda_{t}^2\mathbb{E}[\|\bar{g}(\boldsymbol{\theta}_{t})-\bar{m}_{t}
\|^2].\label{1F6}
\end{aligned}
\end{equation}
From~\eqref{ML1result}, we have $\mathbb{E}[\|G_{2,t}\|^2]\leq 2N\kappa_{t}^2\delta^2$, which implies that the second term on the right hand side of~\eqref{1F6} satisfies
\begin{flalign}
&\textstyle\mathbb{E}[\|\boldsymbol{g}(\boldsymbol{\theta}_{t})-\boldsymbol{m}_{t}\|^2]\leq 2\mathbb{E}[\|G_{1,t}\|^2]+2\mathbb{E}[\|G_{2,t}\|^2]\nonumber\\
&\textstyle\leq 2H^2\mathbb{E}[\|\boldsymbol{\theta}_{t}-\boldsymbol{1}_{N}\otimes\theta^{*}-(\boldsymbol{\theta}_{t}^{\prime}-\boldsymbol{1}_{N}\otimes\theta^{*})\|^2]+4N\kappa_{t}^2\delta^2\nonumber\\
&\textstyle\leq 4H^2C_{1}\lambda_{t}+4N\kappa_{t}^2\delta^2,\label{1F7}
\end{flalign}
where we have used~\eqref{D2L10} in the last inequality and the constant $C_{1}$ is given in~\eqref{D2L10}.

The last term on the right hand side of~\eqref{1F6} satisfies
\begin{equation}
\begin{aligned}
&\mathbb{E}[\|\bar{g}(\boldsymbol{\theta}_{t})-\bar{m}_{t}
\|^2]\leq \mathbb{E}[\|\bar{g}(\boldsymbol{\theta}_{t})-\bar{g}(\boldsymbol{\theta}_{t}^{\prime})\\
&\textstyle\quad+\frac{1}{N}\sum_{i=1}^{N}\left((1-a_{i,t})g_{i}(\theta_{i,t}^{\prime})+b_{i,t}\xi_{i,t}\right)\|^2]\\
&\textstyle\leq 4H^2C_{1}\lambda_{t}+4\kappa_{t}^2\delta^2.\label{1F8}
\end{aligned}
\end{equation}
Substituting~\eqref{1F7} and~\eqref{1F8} into~\eqref{1F6}, we obtain
\begin{equation}
\begin{aligned}
&\mathbb{E}[\|(\boldsymbol{\theta}_{t+1}-\boldsymbol{\theta}_{t+1}^{\prime})-\boldsymbol{1}_{N}\otimes(\bar{\theta}_{t+1}-\bar{\theta}_{t+1}^{\prime})\|^2]\\
&\textstyle\leq \left(\frac{1+\rho}{2}\right)^2\mathbb{E}[\|(\boldsymbol{\theta}_{t}-\boldsymbol{\theta}_{t}^{\prime})-\boldsymbol{1}_{N}\otimes(\bar{\theta}_{t}-\bar{\theta}_{t}^{\prime})\|^2]\\
&\textstyle\quad+\frac{4(3-\rho)(N+3-\rho)}{1-\rho}H^2C_{1}\lambda_{t}^3+\frac{4N(3-\rho)(4-\rho)}{1-\rho}\lambda_{t}^2\kappa_{t}^2\delta^2.\label{1F9}
\end{aligned}
\end{equation}
Given that the decaying rates of $\lambda_{t}$ and $\kappa_{t}$ satisfy $v\leq 2r$,~\eqref{1F9} can be simplified as follows:
\begin{equation}
\begin{aligned}
&\textstyle\mathbb{E}[\|(\boldsymbol{\theta}_{t+1}-\boldsymbol{\theta}_{t+1}^{\prime})-\boldsymbol{1}_{N}\otimes(\bar{\theta}_{t+1}-\bar{\theta}_{t+1}^{\prime})\|^2]\\
&\textstyle\leq \left(\frac{1+\rho}{2}\right)^2\mathbb{E}[\|(\boldsymbol{\theta}_{t}-\boldsymbol{\theta}_{t}^{\prime})-\boldsymbol{1}_{N}\otimes(\bar{\theta}_{t}-\bar{\theta}_{t}^{\prime})\|^2]+d_{1}\lambda_{t}^3,\label{1F10}
\end{aligned}
\end{equation}
where the constant $d_{1}$ is given by $d_{1}=\frac{4H^2C_{1}(3-\rho)(N+3-\rho)}{1-\rho}+\frac{4N(3-\rho)(4-\rho)\delta^2}{(1-\rho)\lambda_{0}}$ with $C_{1}$ given in~\eqref{D2L10}.

By telescoping~\eqref{1F10} from $0$ to $t-1$, we obtain
\begin{equation}
\begin{aligned}
&\textstyle\mathbb{E}[\|(\boldsymbol{\theta}_{t}-\boldsymbol{\theta}_{t}^{\prime})-\boldsymbol{1}_{N}\otimes(\bar{\theta}_{t}-\bar{\theta}_{t}^{\prime})\|^2]\\
&\textstyle\leq d_{1}\sum_{k=0}^{t-1}\lambda_{k}^{3}\left(\frac{1+\rho}{2} \right)^{2(t-1-k)}.\label{1F11}
\end{aligned}
\end{equation}
Substituting~\eqref{1F11} and the relationship $\mathbb{E}[\|G_{2,t}\|^2]\leq 2N\kappa_{t}^2\delta^2$ into~\eqref{1F4}, we obtain
\begin{equation}
\begin{aligned}
&\textstyle\mathbb{E}[\|(W\otimes I_{n}-I_{Nn})(\boldsymbol{\theta}_{t}-\boldsymbol{\theta}_{t}^{\prime})-\lambda_{t}G_{2,t}\|^2]\\
&\textstyle\leq 2d_{1}\sum_{k=0}^{t-1}(\lambda_{k}\sqrt{\lambda_{k}})^{2}\left(\frac{1+\rho}{2} \right)^{2(t-1-k)}+4N\lambda_{t}^2\kappa_{t}^2\delta^2,\label{1F12}
\end{aligned}
\end{equation}
which, combined with Lemma~\ref{2Clemma3}, leads to
\begin{equation}
\textstyle\mathbb{E}[\|(W\otimes I_{n}-I_{Nn})(\boldsymbol{\theta}_{t}-\boldsymbol{\theta}_{t}^{\prime})-\lambda_{t}G_{2,t}\|^2]\leq \left(2d_{1}d_{2}+\frac{4N\delta^2}{\lambda_{0}}\right)\lambda_{t}^{3},\nonumber
\end{equation}
with $d_{2}=\frac{4^{3v+1}}{(1-\bar{\rho})(\ln(\sqrt{\bar{\rho}})e)^{4}}$ with $\bar{\rho}=\frac{1+\rho}{2}$.

Substituting~\eqref{1F3} and the preceding inequality into~\eqref{1F2} and letting $\tau_{1}=\frac{\mu\lambda_{t}}{2}$, we arrive at
\begin{equation}
\begin{aligned}
&\textstyle\mathbb{E}[\|\boldsymbol{\theta}_{t+1}-\boldsymbol{\theta}_{t+1}^{\prime}\|^2]\leq \left(1-\frac{\mu\lambda_{t}}{2}\right)\mathbb{E}[\|\boldsymbol{\theta}_{t}-\boldsymbol{\theta}_{t}^{\prime}\|^2]+d_{3}\lambda_{t}^2,\label{1F14}
\end{aligned}
\end{equation}
with $d_{3}=(2d_{1}d_{2}+\frac{4N\delta^2}{\lambda_{0}})(\lambda_{0}+\frac{2}{\mu})$.

By combining Lemma 5-(i) in the arxiv version of~\cite{JDPziji} and~\eqref{1F14}, we arrive at~\eqref{F1lemma1result}.
\end{proof}

We are now in a position to prove Theorem~\ref{MT2}.

\textbf{Proof of Theorem 2.}

(i) When $f_{i}$ is $\mu$-strongly convex.

By using Assumption~\ref{A1}-(i) and~\eqref{F1lemma1result}, we obtain
\begin{equation}
\begin{aligned}
&\textstyle(\mathbb{E}[|R_{i}(f_{i}(\theta_{i,T+1}^{\prime}))-R_{i}(f_{i}(\theta_{i,T+1}))|])^2\\
&\textstyle\leq L_{R,i}^2\mathbb{E}[\| \theta_{i,T+1}^{\prime}-\theta_{i,T+1}\|^2]\leq L_{R,i}^2C_{4}\lambda_{T}.\nonumber
\end{aligned}
\end{equation}

By using Lyapunov's inequality for moments, we have
\begin{equation}
\textstyle \mathbb{E}[|R_{i}(f_{i}(\theta_{i,T+1}^{\prime}))-R_{i}(f_{i}(\theta_{i,T+1}))|]\leq L_{R,i}\sqrt{D_{1}\lambda_{T}}.\label{1F22}
\end{equation}
According to~\eqref{C1L104}, we have
\begin{equation}
\begin{aligned}
&\textstyle\mathbb{E}\left[P_{i,t}-P_{i,t}^{\prime}\right]\leq -\frac{C_{t}\deg(i)\lambda_{t}^2}{2}\mathbb{E}[\|(a_{i,t}-1)g_{i}(\theta_{i,t})\|^2]\\
&\textstyle\quad-\frac{C_{t}\deg(i)\lambda_{t}^2}{2}\mathbb{E}[\|b_{i,t}\xi_{i,t}\|^2]+c_{8}\deg(i)C_{t}\lambda_{t}^2\lambda_{t-1}^2.\label{1F23}
\end{aligned}
\end{equation}
Summing both sides of~\eqref{1F23} from $t=1$ to $t=T$ and using~\eqref{ML1result} yield
\begin{equation}
\begin{aligned}
&\textstyle\sum_{t=1}^{T}\mathbb{E}[P_{i,t}-P_{i,t}^{\prime}]\leq-\deg(i)\sum_{t=1}^{T}C_{t}\lambda_{t}^2\kappa_{t}^2\delta^2\\
&\textstyle-\deg(i)\sum_{t=1}^{T}c_{8}C_{t}\lambda_{t}^2\lambda_{t-1}^2\leq -2\deg(i)\sum_{t=1}^{T}C_{t}\lambda_{t}^2\kappa_{t}^{2}\delta^2,\label{1F231}
\end{aligned}
\end{equation}
where in the derivation we have used the fact that the decaying rate of $\lambda_{t}^2\lambda_{t-1}^2$ is faster than $\lambda_{t}^{2}\kappa_{t}^2$ and $c_{8}\geq 1$.

Combining~\eqref{1F22} and \eqref{1F231}, and using the definition $C_{t}=\frac{4L_{R,i}\sqrt{6d_{t+1\text{\tiny$\to$}T+1}}}{\deg(i)\lambda_{t}\kappa_{t}\delta}$, we have
\begin{flalign}
&\textstyle\mathbb{E}[R_{i}(f_{i}(\theta_{i,T+1}^{\prime}))\!-\!\sum_{t=1}^{T}P_{i,t}^{\prime}]\!-\!\mathbb{E}[R_{i}(f_{i}(\theta_{i,T+1}))\!-\!\sum_{t=1}^{T}P_{i,t}]\nonumber\\
&\textstyle\leq \frac{L_{R,i}\sqrt{D_{1}\lambda_{0}}}{(T+1)^{\frac{v}{2}}}+2\deg(i)\sum_{t=1}^{T}\frac{4L_{R,i}\lambda_{0}\delta\sqrt{6d_{t+1\text{\tiny$\to$}T+1}}}{\deg(i)(t+1)^{v+r}}\nonumber\\
&\textstyle\quad\leq \frac{L_{R,i}\sqrt{D_{1}\lambda_{0}}}{(T+1)^{\frac{v}{2}}}+\frac{8(v+r)L_{R,i}\lambda_{0}\delta\sqrt{6d_{t+1\text{\tiny$\to$}T+1}}}{v+r-1},\label{1F25}
\end{flalign}
where we have used~\eqref{3E17} in the last inequality. Eq.~\eqref{1F25} implies Theorem~\ref{MT2}. 
\vspace{0.5em}

(ii) When $f_{i}(\theta)$ is general convex.

From Theorem~\ref{MT1}-(ii), we have $\frac{1}{T+1}\sum_{t=0}^{T}\mathbb{E}[\|\nabla F(\bar{\theta}_{t}^{\prime})\|^2]\leq \mathcal{O}\left(\frac{1}{(T+1)^{1-v}}\right)$. According to the Stolz–Cesàro theorem, we obtain 
\begin{equation}
\begin{aligned}
&\textstyle\textstyle\lim_{t\rightarrow\infty}\frac{1}{t}\sum_{k=0}^{t-1}\mathbb{E}\left[\|\nabla F(\bar{\theta}_{k}^{\prime})\|^2\right]=0\\
&\textstyle\Longrightarrow \lim_{t\rightarrow\infty}\mathbb{E}[\|\nabla F(\bar{\theta}_{t}^{\prime})\|^2]=0.\label{2G3}
\end{aligned}
\end{equation}

We denote $\Theta^{*}=\{\theta\in\mathbb{R}^{n}\mid\nabla F(\theta)=0\}$ as the optimal-solution set. By using the continuity of $\nabla F(\theta)$ and~\eqref{2G3}, we have $\lim_{t\rightarrow\infty}\mathrm{d}(\bar{\theta}_{t}^{\prime},\Theta^{*})=0$, where $\mathrm{d}(\theta,\Theta^*)=\inf_{y\in \Theta^*}\|\theta-y\|$ denotes the distance from $\theta$ to $\Theta^*$. Furthermore, by using the continuity of $f_{i}(\theta)$, we have $\lim_{t\rightarrow\infty} f_{i}(\bar{\theta}_{t}^{\prime})=f_{i}(\theta^{\prime *})$ for some $\theta^{\prime *}\in \Theta^{*}$.

Given that $m_{i,t}=\nabla f_{i}(\theta_{i,t})$ is a special case of $m_{i,t}=a_{i,t}\nabla f_{i}(\theta_{i,t}^{\prime})+b_{i,t}\xi_{i,t}$ with $a_{i,t}\geq 1$ and $b_{i,t}\in\mathbb{R}$, we have $\lim_{t\rightarrow\infty} f_{i}(\bar{\theta}_{t})=f_{i}(\theta^{*})$ for some $\theta^{*}\in \Theta^{*}$. Note that when $F(\theta)$ is convex, the optimal solutions $\theta^{*}$ and $\theta^{\prime*}$ may be different elements in $\Theta^{*}$.

We proceed to prove that for any two points $\theta_{1}^{*},\theta_{2}^{*}\in X^{*}$, the relationship $f_{i}(\theta_{1}^{*})=f_{i}(\theta_{2}^{*})$ always holds. Since $F(\theta)$ is convex, its optimal-solution set $\Theta^{*}$ is convex, closed, and connected. We choose some point $\theta_{0}^{*}\in \Theta^{*}$ and consider a direction $b\in\mathbb{R}^{n}$ such that the segment $\theta_{0}^{*}+\varsigma b$ lies within $\Theta^{*}$ for an arbitrarily small $\varsigma>0$. Since $F(\theta)$ is constant over $\Theta^{*}$, its first and second directional derivatives at $\theta_{0}^{*}$ are zero, i.e., $\langle b,\nabla F(\theta_{0}^{*})\rangle=0$ and $b^{\top}\nabla^{2}F(\theta_{0}^{*})b=0$. Recalling $F(\theta)=\frac{1}{N}\sum_{i=1}^{N}f_{i}(\theta)$, we have $b^{\top}\frac{1}{N}\sum_{i=1}^{N}\nabla^{2}f_{i}(\theta_{0}^{*})b=0$. Furthermore, since each $f_{i}$ is convex and twice differentiable, we have $\nabla^2 f_{i}(\theta_{0}^*) \succeq 0$ and $b^{\top}\nabla^2 f_{i}(\theta_{0}^*)b\succeq 0$, which, combined with $b^{\top}\nabla^{2}F(\theta_{0}^{*})b=0$, leads to $b^{\top}\nabla^2 f_{i}(\theta_{0}^*)b=0$. Hence, each $f_{i}(\theta)$ has zero second directional derivative along any direction in $\Theta^{*}$, meaning that $f_{i}(\theta)$ is constant on $\Theta^{*}$. Hence, for any $\theta_{1}^{*},\theta_{2}^{*}\in \Theta^{*}$, we have $f_{i}(\theta_{1}^{*})=f_{i}(\theta_{2}^{*})$, which naturally leads to $f_{i}(\theta^{\prime *})=f_{i}(\theta^{*})$.

By using the relation $f_{i}(\theta^{\prime *})=f_{i}(\theta^{*})$ and~\eqref{3E25}, we have
\begin{equation}
\begin{aligned}
&\textstyle\lim_{T\rightarrow\infty}\mathbb{E}[R_{i}(f_{i}(\theta_{i,T+1}^{\prime}))-R_{i}(f_{i}(\theta_{i,T+1}))]\\
&\textstyle=\mathbb{E}[R_{i}(f_{i}(\theta^{\prime *}))-R_{i}(f_{i}(\theta^{*}))]=0.\label{2G4}
\end{aligned}
\end{equation}

Based on the definition $C_{t}=\frac{4L_{R,i}\sqrt{6d_{t+1\text{\tiny$\to$}T+1}}}{\deg(i)\lambda_{t}\kappa_{t}\delta}$ and~\eqref{1F231}, we obtain
\begin{equation}
\begin{aligned}
\sum_{t=1}^{T}\mathbb{E}[P_{i,t}-P_{i,t}^{\prime}]&\textstyle\leq -2\sum_{t=1}^{T}\frac{4L_{R,i}\lambda_{0}\delta\sqrt{6d_{t+1\text{\tiny$\to$}T+1}}}{(t+1)^{v+r}}\\
&\textstyle\leq \frac{8(v+r)L_{R,i}\lambda_{0}\delta\sqrt{6d_{t+1\text{\tiny$\to$}T+1}}}{v+r-1}.\label{2G6}
\end{aligned}
\end{equation}

Combining~\eqref{2G4} and~\eqref{2G6}, we arrive at
\begin{equation}
\begin{aligned}
&\textstyle\lim_{T\rightarrow\infty}\mathbb{E}[R_{i}(f_{i}(\theta_{i,T+1}^{\prime}))-P_{i,t}^{\prime}]\\
&\textstyle\quad-\lim_{T\rightarrow\infty}\sum_{t=1}^{T}\mathbb{E}[R_{i}(f_{i}(\theta_{i,T+1}))-\sum_{t=1}^{T}P_{i,t}]\\
&\textstyle\leq \frac{8(v+r)L_{R,i}\lambda_{0}\delta\sqrt{6d_{t+1\text{\tiny$\to$}T+1}}}{v+r-1}.\label{2G7}
\end{aligned}
\end{equation}
Furthermore, according to~\eqref{2G7}, for any finite $T$, there always exists a $C>0$ such that the following inequality holds:
\begin{flalign}
&\textstyle\mathbb{E}[R_{i}(f_{i}(\theta_{i,T+1}^{\prime}))\!-\!\sum_{t=1}^{T}P_{i,t}^{\prime}]-\mathbb{E}[R_{i}(f_{i}(\theta_{i,T+1}))\!-\!\sum_{t=1}^{T}P_{i,t}]\nonumber\\
&\textstyle\leq C\frac{8(v+r)L_{R,i}\lambda_{0}\delta\sqrt{6d_{t+1\text{\tiny$\to$}T+1}}}{v+r-1},\label{2G8}
\end{flalign}
which, combined with~\eqref{2G7}, proves Theorem~\ref{MT2}.

\subsection{Additional results}\label{AppendixE}
We present the following Lemma~\ref{ICNashlemma} to clarify the connection between $\varepsilon$-incentive compatibility and $\varepsilon$-Nash equilibrium. To this end, we first introduce the notion of $\varepsilon$-Nash equilibrium in our game-theoretic framework.
\begin{definition}[$\varepsilon$-Nash equilibrium]\label{DNash}
We let $(\boldsymbol{\alpha}_{1},\cdots,\boldsymbol{\alpha}_{N})\in \mathcal{P}(\mathcal{A}_{1}^{T})\times\cdots\times \mathcal{P}(\mathcal{A}_{N}^{T})$ be the action trajectory profile of $N$ agents, where $\mathcal{A}_{i}^{T}$ represents the $T$-fold Cartesian product of $\mathcal{A}_{i}$, and 
$\mathcal{P}(\mathcal{A}_{i}^{T})$ represents the set of all probability measures over $\mathcal{A}_{i}^{T}$. Then, we say $\boldsymbol{\alpha}^{*}=(\boldsymbol{\alpha}_{1}^{*},\cdots,\boldsymbol{\alpha}_{N}^{*})$ is an $\varepsilon$-Nash equilibrium~\textit{w.r.t.} the net utility $U_{i,0\text{\tiny$\to$}T}^{\mathcal{M}_{p}}$ defined in~\eqref{netutility} if for any $i\in[N]$ and $\boldsymbol{\alpha}_{i}\in\mathcal{P}(\mathcal{A}_{i}^{T})$, the following inequality holds:
\begin{equation}
\begin{aligned}
&\textstyle\mathbb{E}\left[U_{i,0\text{\tiny$\to$}T}^{\mathcal{M}_{p}}(\boldsymbol{\alpha}_{1}^{*},\cdots,\boldsymbol{\alpha}_{i}^{*},\cdots,\boldsymbol{\alpha}_{N}^{*})\right]\\
&\textstyle\geq \mathbb{E}\left[U_{i,0\text{\tiny$\to$}T}^{\mathcal{M}_{p}}(\boldsymbol{\alpha}_{1}^{*},\cdots,\boldsymbol{\alpha}_{i},\cdots,\boldsymbol{\alpha}_{N}^{*})\right]-\varepsilon.\nonumber
\end{aligned}
\end{equation}
\end{definition}
Definition~\ref{DNash} implies that, in an $\varepsilon$-Nash equilibrium, no agent can improve its net utility by more than $\varepsilon$ through unilateral deviation from its equilibrium action trajectory.
\begin{lemma}\label{ICNashlemma}
If a distributed learning protocol $\mathcal{M}_{p}$ is $\varepsilon$-incentive compatible, then the truthful action trajectory profile of all agents $\boldsymbol{h}=(\boldsymbol{h}_{1},\cdots,\boldsymbol{h}_{N})$ is an $\varepsilon$-Nash equilibrium.
\end{lemma}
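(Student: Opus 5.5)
This is a direct unpacking of the definitions: we compare Definition~\ref{D3} (with $\boldsymbol{h}_{-i}$ truthful) against Definition~\ref{DNash} instantiated at $\boldsymbol{\alpha}^{*}=\boldsymbol{h}$. The plan is to fix an arbitrary agent $i\in[N]$ and an arbitrary deviation $\boldsymbol{\alpha}_{i}\in\mathcal{P}(\mathcal{A}_{i}^{T})$, and to show
\[
\mathbb{E}\bigl[U_{i,0\text{\tiny$\to$}T}^{\mathcal{M}_{p}}(\boldsymbol{h}_{i},\boldsymbol{h}_{-i})\bigr]\geq \mathbb{E}\bigl[U_{i,0\text{\tiny$\to$}T}^{\mathcal{M}_{p}}(\boldsymbol{\alpha}_{i},\boldsymbol{h}_{-i})\bigr]-\varepsilon .
\]
Since $(\boldsymbol{h}_{1},\cdots,\boldsymbol{\alpha}_{i},\cdots,\boldsymbol{h}_{N})$ is exactly the profile $(\boldsymbol{\alpha}_{i},\boldsymbol{h}_{-i})$, this inequality is precisely the $\varepsilon$-Nash condition of Definition~\ref{DNash} for agent $i$.

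First, I will treat pure (deterministic) action trajectories $\boldsymbol{\alpha}_{i}\in\mathcal{A}_{i}^{T}$. Definition~\ref{D3} requires its inequality for every action trajectory of agent $i$ against truthful opponents. So the displayed inequality for pure deviations is literally the hypothesis that $\mathcal{M}_{p}$ is $\varepsilon$-incentive compatible.

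Second, I will extend the result to mixed deviations, where $\boldsymbol{\alpha}_{i}$ is a probability measure $\nu$ on $\mathcal{A}_{i}^{T}$. The randomization in the choice of trajectory is independent of the sampling noise and of the other agents' (truthful) behavior. I would then write the expected utility under $\nu$ as an average over the drawn trajectory:
\[
\mathbb{E}\bigl[U_{i,0\text{\tiny$\to$}T}^{\mathcal{M}_{p}}(\nu,\boldsymbol{h}_{-i})\bigr]=\int \mathbb{E}\bigl[U_{i,0\text{\tiny$\to$}T}^{\mathcal{M}_{p}}(\boldsymbol{a},\boldsymbol{h}_{-i})\bigr]\,\nu(d\boldsymbol{a}).
\]
Applying the pure-strategy bound inside the integral and using $\int\nu(d\boldsymbol{a})=1$ gives the bound $\mathbb{E}[U_{i,0\text{\tiny$\to$}T}^{\mathcal{M}_{p}}(\boldsymbol{h}_{i},\boldsymbol{h}_{-i})]+\varepsilon$.

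The only mildly delicate point is justifying this interchange of integration (Fubini/Tonelli), which needs the expected utilities to be well defined and integrable in $\boldsymbol{a}$. If Definition~\ref{D3} is read as already quantifying over randomized trajectories, as the phrase ``any arbitrary action trajectory'' suggests, this step is immediate. Otherwise I would note that each utility is bounded above by the pure-strategy bound, so the integral is well defined in the extended sense and the inequality persists. Since $i$ and $\boldsymbol{\alpha}_{i}$ were arbitrary, $\boldsymbol{h}$ is an $\varepsilon$-Nash equilibrium.
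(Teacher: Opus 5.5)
Your proposal is correct and follows the paper's proof: the paper sets $\boldsymbol{\alpha}^{*}=\boldsymbol{h}$ and observes that the $\varepsilon$-incentive-compatibility inequality for agent $i$ against $\boldsymbol{h}_{-i}$ is verbatim the $\varepsilon$-Nash condition of Definition~\ref{DNash}. The paper implicitly reads ``any arbitrary action trajectory'' in Definition~\ref{D3} as already covering randomized deviations in $\mathcal{P}(\mathcal{A}_{i}^{T})$, so your averaging step over $\nu$ adds care that the paper omits rather than taking a different route.
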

\begin{proof}
According to the definition of $\varepsilon$-incentive compatibility in Definition~\ref{D3}, the following inequality holds for any agent $i\in[N]$ and any arbitrary action trajectory $\boldsymbol{\alpha}_{i}$ of agent $i$:
\begin{equation}
\mathbb{E}[U_{i,0\text{\tiny$\to$}T}^{\mathcal{M}_{p}}(\boldsymbol{h}_{i},\boldsymbol{h}_{-i})]\geq\mathbb{E}[U_{i,0\text{\tiny$\to$}T}^{\mathcal{M}_{p}}(\boldsymbol{\alpha}_{i},\boldsymbol{h}_{-i})]-\varepsilon,\label{ICN1}
\end{equation}
where $\boldsymbol{h}_{i}$ denotes the truthful action trajectory of agent $i$ and $\boldsymbol{h}_{-i}=\{\boldsymbol{h}_{1},\cdots,\boldsymbol{h}_{i-1},\boldsymbol{h}_{i+1},\cdots,\boldsymbol{h}_{N}\}$ denotes the truthful action trajectories of all agents except agent $i$.

By setting $\boldsymbol{h}=(\boldsymbol{h}_{1},\cdots,\boldsymbol{h}_{i},\cdots,\boldsymbol{h}_{N})=\boldsymbol{\alpha}^{*}$, the preceding~\eqref{ICN1}
can be rewritten as
\begin{equation}
\begin{aligned}
&\textstyle\mathbb{E}\left[U_{i,0\text{\tiny$\to$}T}^{\mathcal{M}_{p}}(\boldsymbol{\alpha}_{1}^{*},\cdots,\boldsymbol{\alpha}_{i}^{*},\cdots,\boldsymbol{\alpha}_{N}^{*})\right]\\
&\textstyle\geq \mathbb{E}\left[U_{i,0\text{\tiny$\to$}T}^{\mathcal{M}_{p}}(\boldsymbol{\alpha}_{1}^{*},\cdots,\boldsymbol{\alpha}_{i},\cdots,\boldsymbol{\alpha}_{N}^{*})\right]-\varepsilon,\label{ICN2}
\end{aligned}
\end{equation}
which is exactly the condition of an $\varepsilon$-Nash equilibrium in Definition~\ref{DNash}.
Since $i$ is arbitrary, \eqref{ICN2} holds for every agent $i\in[N]$, and hence, $\boldsymbol{h}=(\boldsymbol{h}_{1},\ldots,\boldsymbol{h}_{N})$ is an $\varepsilon$-Nash equilibrium.
\end{proof}

We present Corollary~\ref{C1} to prove that any manipulation of model parameters shared among agents in Algorithm~\ref{algorithm} corresponds to some form of alteration in the gradient estimates.
\begin{corollary}\label{C1}
For any agent $i \in [N]$, any manipulation of the model parameters that it shares with its neighbors in Algorithm~\ref{algorithm} corresponds to some form of alteration in the gradient estimates.
\end{corollary}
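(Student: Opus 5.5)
The plan is to prove Corollary~\ref{C1} constructively. Any alteration of the shared model parameter will be absorbed into an equivalent gradient term in the update of Algorithm~\ref{algorithm}.

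Fix agent $i$ and iteration $t$. Suppose that, instead of the honest iterate $\theta_{i,t+1}=\sum_{j\in\mathcal{N}_i\cup\{i\}}w_{ij}\theta_{j,t}-\lambda_t m_{i,t}$, agent $i$ broadcasts a manipulated parameter $\tilde\theta_{i,t+1}$. This manipulated parameter may be an arbitrary, possibly random, function of agent $i$'s information. Because neighbors see only $\tilde\theta_{i,t+1}$, every downstream quantity depends on agent $i$ only through this shared value. This includes the neighbors' mixing steps and both $\Delta_{\theta_i,t}$ and $\Delta_{\theta_i,t+1}$ in Mechanism~\ref{mechanism}.

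The key step is to define the effective gradient
\[
\tilde m_{i,t}\triangleq \frac{1}{\lambda_t}\Big(\sum_{j\in\mathcal{N}_i\cup\{i\}}w_{ij}\tilde\theta_{j,t}-\tilde\theta_{i,t+1}\Big),
\]
where $\tilde\theta_{j,t}$ denote the values actually shared at iteration $t$. This is well defined because $\lambda_t>0$. With this choice, $\tilde\theta_{i,t+1}$ is exactly the output of line~4 of Algorithm~\ref{algorithm} driven by $\tilde m_{i,t}$.

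The argument then proceeds by induction on $t$. Assume that up to iteration $t$ all shared trajectories coincide with those of Algorithm~\ref{algorithm} run with gradients $\{\tilde m_{i,k}\}_{k<t}$. The defining identity above extends the coincidence to $t+1$. This holds both for agent $i$ and for its neighbors, since their updates use only shared values. Hence the whole parameter-manipulated execution, including the payments $P_{i,t}^j$, is indistinguishable from an execution of Algorithm~\ref{algorithm} in which only the gradient sequence of agent $i$ has been altered. The alteration is $\tilde m_{i,t}-g_i(\theta_{i,t})$.

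Finally, I would write this deviation in the form of~\eqref{strategy} wherever possible. One sets $\tilde m_{i,t}=a_{i,t}g_i(\theta_{i,t})+b_{i,t}\xi_{i,t}$ by collecting the component of $\tilde m_{i,t}$ along $g_i$ into $a_{i,t}$, and treating the residual as the perturbation $b_{i,t}\xi_{i,t}$.

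\textbf{Main obstacle.} I expect this last step to be the main difficulty. A general residual need not be zero-mean, and the scaling coefficient need not satisfy $a_{i,t}\ge 1$. So in full generality the statement only asserts an ``alteration in the gradient estimates,'' i.e., a general $m_{i,t}$. The reduction to $\mathcal{A}_i$ must be stated separately, under the assumption that the manipulation is strategic in the sense of Section~\ref{SectionIIC}.
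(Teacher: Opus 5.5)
Your argument is correct, but it runs differently from the paper's.

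\textbf{The paper's route.} The paper pushes the perturbation forward. A neighbor $j$ that mixes the falsified value $\tilde\theta_{i,t}=\hat\alpha_{i,t}(\theta_{i,t})$ picks up the extra term $w_{ji}(\hat\alpha_{i,t}(\theta_{i,t})-\theta_{i,t})$. The paper then folds this echoed term back into agent $i$'s own update as an additive gradient offset,
\[
\alpha_{i,t}(g_i(\theta_{i,t}))=g_i(\theta_{i,t})-\frac{1}{\lambda_{t-1}}\sum_{j\in\mathcal{N}_i\cup\{i\}}w_{ji}\bigl(\hat\alpha_{i,t-1}(\theta_{i,t-1})-\theta_{i,t-1}\bigr).
\]
This gives an explicit formula linking the size of the parameter perturbation to the induced gradient change. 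However, the substitution is first-order bookkeeping rather than an exact identity. The $\lambda_{t-1}$ normalization does not cancel against the $\lambda_t$ multiplier in line~4 when the stepsize decays. Also, the echo can only reach agent $i$ through its own mixing step, so it should carry weight $w_{ij}w_{ji}$ rather than $w_{ji}$.

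\textbf{Your route.} You invert agent $i$'s own update at the source, so the broadcast sequence is an Algorithm~\ref{algorithm} trajectory by construction. Your induction then gives exact indistinguishability of everything the neighbors observe, including the payments of Mechanism~\ref{mechanism}. It does so for arbitrary randomized or history-dependent manipulations.

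Two small points deserve a sentence each in your write-up. First, in the equivalent run agent $i$'s iterate is $\tilde\theta_{i,t}$, so the deviation is naturally measured against $g_i(\tilde\theta_{i,t})$ rather than $g_i(\theta_{i,t})$. This does not affect the conclusion, since $\tilde m_{i,t}$ is arbitrary anyway. Second, a falsified initial broadcast $\tilde\theta_{i,0}$ is absorbed as a change of initialization, which Algorithm~\ref{algorithm} permits, rather than as a gradient change. This is the base case of your induction and should be stated.

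Your closing caveat is consistent with the paper. Its induced $\alpha_{i,t}$ is also a unit-scaled, generally non-zero-mean additive offset, so it need not lie in $\mathcal{A}_i$ either. The corollary only asserts a general alteration $m_{i,t}$, which is exactly what both arguments deliver.
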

\begin{proof}
We first consider the conventional distributed SGD algorithm as follows:
\begin{equation}
\textstyle\theta_{i,t+1}= \sum_{j\in \mathcal{N}_i\cup\{i\} }w_{ij}\theta_{j,t}- \lambda_{t}g_{i}(\theta_{i,t}).
\label{eq:update}
\end{equation}
We assume that agent $i$ does not share its true model parameter $\theta_{i,t}$, but instead shares a manipulated model parameter $\tilde{\theta}_{i,t}=\hat{\alpha}_{i,t}(\theta_{i,t})$ with its neighbors, where $\hat{\alpha}_{i,t}$ represents an arbitrary action chosen by agent $i$ at iteration $t$. Then,
for any neighbor $j\in\mathcal{N}_{i}$ of agent $i$, its update rule from~\eqref{eq:update} becomes
\begin{equation}
\begin{aligned}
&\textstyle\theta_{j,t+1}= \sum_{l\in\mathcal{N}_j\setminus\{i\}} w_{jl}\theta_{l,t} + w_{ji}\tilde{\theta}_{i,t} - \lambda_{t} g_j(\theta_{j,t})\\
&\textstyle=\sum_{l\in\mathcal{N}_j} w_{jl}\theta_{l,t}-\lambda_{t}g_j(\theta_{j,t})+ w_{ji}(\hat{\alpha}_{i,t}(\theta_{i,t})-\theta_{i,t}),\label{xx1}
\end{aligned}
\end{equation}
which implies that an additional term $w_{ji}(\hat{\alpha}_{i,t}(\theta_{i,t})-\theta_{i,t})$ arises in the distributed SGD update rule implemented by the neighbor $j \in \mathcal{N}_i$.

Substituting~\eqref{xx1} into~\eqref{eq:update}, we obtain
\begin{equation}
\begin{aligned}
&\textstyle\theta_{i,t+1}=\sum_{j\in \mathcal{N}_i\cup\{i\}} w_{ij}\theta_{j,t} - \lambda_t g_i(\theta_{i,t})\\
&\quad\textstyle+\sum_{j\in \mathcal{N}_i\cup\{i\}}\left(w_{ji}(\hat{\alpha}_{i,t-1}(\theta_{i,t-1})-\theta_{i,t-1})\right)\\
&\textstyle=\sum_{j\in \mathcal{N}_i\cup\{i\}} w_{ij}\theta_{j,t} - \lambda_t\alpha_{i,t}\left(g_{i}(\theta_{i,t})\right),\label{XXXX}
\end{aligned}
\end{equation}
where $\alpha_{i,t}(g_{i}(\theta_{i,t}))$ is given by $\alpha_{i,t}(g_{i}(\theta_{i,t}))=g_i(\theta_{i,t})-\frac{\sum_{j\in \mathcal{N}_i\cup\{i\}}(w_{ji}(\hat{\alpha}_{i,t-1}(\theta_{i,t-1})-\theta_{i,t-1}))}{\lambda_{t-1}}$.~\eqref{XXXX} proves Corollary~\ref{C1}. 
\end{proof}

We present Corollary~\ref{plimit} to prove that our payment mechanism can ensure $\lim_{t\rightarrow\infty}[P_{i,t}]=0$. This guarantees that no payment is required from agent $i$ when it behaves truthfully in an infinite time horizon.
\begin{corollary}\label{plimit}
Under the conditions in Lemma~\ref{ML1}, we have $\lim_{t\rightarrow\infty}\mathbb{E}[P_{i,t}]=0$.
\end{corollary}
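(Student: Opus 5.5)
The plan is to bound $|\mathbb{E}[P_{i,t}]|$ directly from the definition of Mechanism~\ref{mechanism}, and then check that the payment coefficient $C_t$ grows more slowly than the second differences of the iterates decay. In both branches of the mechanism the signed payment of agent $i$ to neighbor $j$ equals $P_{i,t}^{j}=C_t(\Delta_{\theta_i,t}-\Delta_{\theta_j,t})$. Hence
\begin{equation*}
|\mathbb{E}[P_{i,t}]|\le C_t\sum_{j\in\mathcal{N}_i}\big(\mathbb{E}[\Delta_{\theta_i,t}]+\mathbb{E}[\Delta_{\theta_j,t}]\big),
\end{equation*}
and it suffices to show that $C_t\max_{\iota}\mathbb{E}[\Delta_{\theta_\iota,t}]\to0$.

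\textbf{Truthful trajectory.} For the truthful trajectory I will invoke the bound \eqref{C1L1061} from the proof of Lemma~\ref{ML1}. It gives $\mathbb{E}[\Delta_{\theta_\iota,t}]\le \frac{c_8}{2}\lambda_t^2\lambda_{t-1}^2=\mathcal{O}((t+1)^{-4v})$ uniformly in $\iota\in[N]$, since the derivation there used no property specific to agent $i$.

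\textbf{Deviating agent.} If agent $i$ deviates while its neighbors stay truthful, I will write $\theta'_{i,t+1}-2\theta_{i,t}+\theta_{i,t-1}=(\theta_{i,t+1}-2\theta_{i,t}+\theta_{i,t-1})-\lambda_t(m_{i,t}-g_i(\theta_{i,t}))$. Applying $\|a+b\|^2\le2\|a\|^2+2\|b\|^2$ together with \eqref{ML1result} (squared, as in \eqref{C1L6}) gives $\mathbb{E}[\Delta'_{\theta_i,t}]\le c_8\lambda_t^2\lambda_{t-1}^2+\mathcal{O}(\lambda_t^2\kappa_t^2\delta^2)$. The neighbors' second differences are affected only through $\theta'_{i,t}$, which gives a perturbation of the same order; I would bound it similarly using Lemma~\ref{sensitiveLemma1} or simply absorb it into the constant.

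\textbf{Growth of $C_t$.} Next I control $C_t=\frac{4L_R\sqrt{6d_{t+1\to T+1}}}{\min\{\deg(i)\}\lambda_t\kappa_t\delta}$. Because $v>\frac12$, we have $d_{t+1\to T+1}\le e^{\frac{20H^2\lambda_0^2}{(1-\rho)(2v-1)}}(t+1)^{1-2v}$, which is bounded uniformly in $T$ (it even decays). Therefore $C_t=\mathcal{O}((t+1)^{v+r})$.

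\textbf{Conclusion.} Combining the two estimates, the truthful part satisfies $C_t\lambda_t^2\lambda_{t-1}^2=\mathcal{O}((t+1)^{r-3v})\to0$, since $r<v$. The deviation part satisfies $C_t\lambda_t^2\kappa_t^2=\mathcal{O}(\lambda_t\kappa_t)=\mathcal{O}((t+1)^{-v-r})\to0$. Multiplying by the finite degree $\deg(i)$ yields $\lim_{t\to\infty}\mathbb{E}[P_{i,t}]=0$.

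\textbf{Main obstacle.} The delicate step is justifying that the bound \eqref{C1L1061}, of order $\lambda_t^2\lambda_{t-1}^2$, holds uniformly for every agent and remains valid along the manipulated trajectory. That derivation relied on consensus-type constants $c_5,c_7$. I would first redo the recursion \eqref{C1L1032} with the manipulated gradients, using the bound $\mathbb{E}\|m_{\iota,t}\|^2\le\mathcal{O}(1)$ from \eqref{D2L21} or \eqref{cons3}. Should that fail to preserve the $\lambda_t^2\lambda_{t-1}^2$ order, the cruder bound $\mathcal{O}(\lambda_{t-1}^2)$ still gives $C_t\lambda_{t-1}^2=\mathcal{O}((t+1)^{r-v})\to0$, which again suffices because $r<v$.
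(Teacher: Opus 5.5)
Your conclusion is correct, and the argument you keep in reserve as a fallback is essentially the paper's proof. The route you present as primary, however, rests on an estimate that fails for genuinely stochastic gradients, and for a reason other than the uniformity issue you flag.

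Write $g_i(\theta_{i,t})=\nabla f_i(\theta_{i,t})+\epsilon_{i,t}$, where $\epsilon_{i,t}$ is the fresh sampling noise of iteration $t$. Every other term in $\theta_{i,t+1}-2\theta_{i,t}+\theta_{i,t-1}$ is determined by samples drawn before iteration $t$. These are the mixing terms, $\lambda_t\nabla f_i(\theta_{i,t})$, and $\lambda_{t-1}g_i(\theta_{i,t-1})$. By unbiasedness the cross term therefore vanishes, and $\mathbb{E}[\Delta_{\theta_i,t}]\ge\lambda_t^2\,\mathbb{E}[\|\epsilon_{i,t}\|^2]$.

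Whenever the noise is nondegenerate (as in Example~\ref{example2}), the second difference is of order $\lambda_t^2$, not $\lambda_t^2\lambda_{t-1}^2$, so \eqref{C1L1061} cannot hold. Its derivation loses the noise in \eqref{C1L1003}, where the Lipschitz property of $g_i$ is applied to two stochastic gradients computed from independent samples. Consequently the rate $(t+1)^{r-3v}$ is not available. The separate $\mathcal{O}(\lambda_t^2\kappa_t^2\delta^2)$ deviation term is then dominated by the $\mathcal{O}(\lambda_t^2)$ baseline, so the case split gains nothing.

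The fallback should be the whole proof:
\begin{itemize}
\item Take every agent $\iota$ and whichever trajectory you consider: the truthful one, or the manipulated $\boldsymbol{\theta}'$, for which the paper's consensus bounds are stated.
\item Use $\Delta_{\theta_\iota,t}\le 2\|\theta_{\iota,t+1}-\theta_{\iota,t}\|^2+2\|\theta_{\iota,t}-\theta_{\iota,t-1}\|^2$.
\item Bound each first difference by a constant times the consensus error plus $2\lambda_t^2\mathbb{E}[\|m_{\iota,t}\|^2]$. This is $\mathcal{O}(\lambda_t^2)$ by \eqref{cons5} or \eqref{P7}, together with \eqref{cons3} or \eqref{D2L21}--\eqref{D2L10}.
\item With your estimate $C_t=\mathcal{O}((t+1)^{v+r})$, this gives $C_t\,\mathbb{E}[\Delta_{\theta_\iota,t}]=\mathcal{O}((t+1)^{r-v})\to 0$, since $r<v$.
\end{itemize}
This is exactly the paper's chain \eqref{P2}--\eqref{P9}.

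Treating all agents and trajectories uniformly also removes three weak steps: the one-step deviation decomposition (which only covers a deviation at iteration $t$ itself), the appeal to Lemma~\ref{sensitiveLemma1}, and the ``absorb into the constant'' step for the neighbors. Your inequality $|\mathbb{E}[P_{i,t}]|\le C_t\sum_{j\in\mathcal{N}_i}(\mathbb{E}[\Delta_{\theta_i,t}]+\mathbb{E}[\Delta_{\theta_j,t}])$ is slightly more complete than the paper, which bounds only agent $i$'s own term explicitly.

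One minor point concerns $d_{t\to T}$. The derivation in Lemma~\ref{sensitiveLemma1} actually produces $d_{t\to T}=\exp\bigl(\frac{20H^2\lambda_0^2}{(1-\rho)(2v-1)}(t^{1-2v}-T^{1-2v})\bigr)$, which is bounded but does not decay. You only use boundedness, so your bound on $C_t$ stands.
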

\begin{proof}
According to our decentralized payment mechanism in Mechanism~\ref{mechanism}, we have
\begin{equation}
\begin{aligned}
P_{i,t}^{\prime}\textstyle&\textstyle=C_{t}\sum_{j \in \mathcal{N}_{i}}(\|\theta_{i,t+1}^{\prime}-2\theta_{i,t}^{\prime}+\theta_{i,t-1}^{\prime}\|^2\\
&\textstyle\quad-\|\theta_{j,t+1}^{\prime}-2\theta_{j,t}^{\prime}+\theta_{j,t-1}^{\prime}\|^2).\label{P1}
\end{aligned}
\end{equation}
The first term on the right hand side of~\eqref{P1} satisfies
\begin{equation}
\begin{aligned}
&\textstyle C_{t}\sum_{j \in \mathcal{N}_{i}}\mathbb{E}[\|\theta_{i,t+1}^{\prime}-2\theta_{i,t}^{\prime}+\theta_{i,t-1}^{\prime}\|^2]\leq 2\deg(i)C_{t}\\
&\textstyle\times\left(\mathbb{E}[\|\theta_{i,t+1}^{\prime}-\theta_{i,t}^{\prime}\|^2]+\mathbb{E}[\|\theta_{i,t}^{\prime}-\theta_{i,t-1}^{\prime}\|^2]\right).\label{P2}
\end{aligned}
\end{equation}
Algorithm~\ref{algorithm} implies that the first term on the right hand side of~\eqref{P2} satisfies
\begin{flalign}
&\textstyle\mathbb{E}[\|\theta_{i,t+1}^{\prime}-\theta_{i,t}^{\prime}\|^2]\nonumber\\
&\textstyle=\mathbb{E}[\|\sum_{j\in\mathcal{N}_{i}\cup\{i\}}w_{ij}(\theta_{j,t}^{\prime}-\bar{\theta}_{t}^{\prime}-(\theta_{i,t}^{\prime}-\bar{\theta}_{t}^{\prime}))-\lambda_{t}g_{i}(\theta_{i,t}^{\prime})\|^2]\nonumber\\
&\leq 2\mathbb{E}[\|\boldsymbol{\theta}_{t}^{\prime}-\boldsymbol{1}_{N}\otimes\bar{\theta}_{t}^{\prime}\|^2]+2\lambda_{t}^2\mathbb{E}
[\|g_{i}(\theta_{i,t}^{\prime})\|^2].\label{P3}
\end{flalign}

When $f_{i}(\theta)$ is general convex, the $L_{f}$-Lipschitz continuity of $f_{i}(\theta)$ implies
\begin{equation}
\mathbb{E}[\|\theta_{i,t+1}^{\prime}-\theta_{i,t}^{\prime}\|^2]
\leq 2(c_{11}+L_{f}^2+\sigma^2)\lambda_{t}^2,\label{P4}
\end{equation}
where in the derivation we have used~\eqref{cons5}. 

When $f_{i}(\theta)$ is $\mu$-strongly convex, by using~\eqref{D2L10}, we have
\begin{flalign}
&\textstyle\mathbb{E}[\|\theta_{i,t+1}^{\prime}-\theta_{i,t}^{\prime}\|^2]\leq 2\mathbb{E}[\|\boldsymbol{\theta}_{t}^{\prime}-\boldsymbol{1}_{N}\otimes\bar{\theta}_{t}^{\prime}\|^2]+4HC_{1}\lambda_{t}^3\nonumber\\
&\textstyle\quad+(2\sigma^2+4H\mathbb{E}[\|\theta^{*}-\theta_{i}^{*}\|^2])\lambda_{t}^2.\label{P5}
\end{flalign}
By substituting~\eqref{D2L10} into~\eqref{D2L81}, we obtain
\begin{flalign}
&\mathbb{E}[\|\boldsymbol{\theta}_{t+1}^{\prime}-\boldsymbol{1}_{N}\otimes \bar{\theta}_{t+1}^{\prime}\|^2]\textstyle\leq \rho\mathbb{E}[\|\boldsymbol{\theta}_{t}^{\prime}-\boldsymbol{1}_{N}\otimes \bar{\theta}_{t}^{\prime}\|^2]+c_{15}\lambda_{t}^2,\nonumber
\end{flalign}
with $c_{15}=\frac{4H^2(2-\rho)(C_{1}\lambda_{0}+\mathbb{E}[\|\boldsymbol{\theta}^{*}-\boldsymbol{1}_{N}\otimes\theta^{*}\|^2])}{1-\rho}+\frac{N(2-\rho)(3\delta^2+\sigma^2)}{(1-\rho)\lambda_{0}}$.

By combining Lemma 11 in~\cite{zijiGT} and the preceding inequality, we arrive at 
\begin{equation}
\mathbb{E}[\|\boldsymbol{\theta}_{t}^{\prime}-\boldsymbol{1}_{N}\otimes \bar{\theta}_{t}^{\prime}\|^2]\leq c_{16}\lambda_{t}^2,\label{P7}
\end{equation}
with $c_{16}=(\frac{8v}{e\ln(\frac{2}{1+\rho})})^{2v}(\frac{\mathbb{E}[\|\boldsymbol{\theta}_{0}^{\prime}-\boldsymbol{1}_{N}\otimes \bar{\theta}_{0}^{\prime}\|^2]\rho}{c_{15}\lambda_{0}^2}+\frac{2}{1-\rho})$.

Substituting~\eqref{P7} into~\eqref{P5}, we arrive at
\begin{flalign}
\textstyle\mathbb{E}[\|\theta_{i,t+1}^{\prime}-\theta_{i,t}^{\prime}\|^2]\leq c_{17}\lambda_{t}^2,\label{P8}
\end{flalign}
with $c_{17}=2c_{16}+4HC_{1}\lambda_{0}+2\sigma^2+4H\mathbb{E}[\|\theta^{*}-\theta_{i}^{*}\|^2]$.

By substituting~\eqref{P8} into~\eqref{P2} and using the definition $C_{t}= \frac{4L_{R}\sqrt{6d_{t+1\text{\tiny$\to$}T+1}}}{\min\{\deg(i)\}\lambda_{t}\kappa_{t}\delta}$, we arrive at
\begin{equation}
\begin{aligned}
&\textstyle C_{t}\sum_{j \in \mathcal{N}_{i}}\mathbb{E}[\|\theta_{i,t+1}^{\prime}-2\theta_{i,t}^{\prime}+\theta_{i,t-1}^{\prime}\|^2]\\
&\textstyle\leq 2\deg(i)\frac{4L_{R}\sqrt{6d_{t+1\text{\tiny$\to$}T+1}}}{\min\{\deg(i)\}\kappa_{t}\delta}\left(1+2^{v}\right)c_{17}\lambda_{t}.\label{P9}
\end{aligned}
\end{equation}
Since the decaying rate of $\lambda_{t}$ is higher than that of $\kappa_{t}$, we have $\lim_{t\rightarrow\infty}C_{t}\sum_{j \in \mathcal{N}_{i}}\mathbb{E}[\|\theta_{i,t+1}^{\prime}-2\theta_{i,t}^{\prime}+\theta_{i,t-1}^{\prime}\|^2]=0$, which, combined with~\eqref{P1}, leads to  $\lim_{t\rightarrow\infty}\mathbb{E}[P_{i,t}]=0$.
\end{proof}

\bibliographystyle{ieeetr}  
\bibliography{arxivbib}

\end{document}